\documentclass[nohyperref]{article}
\usepackage[preprint]{neurips_2022}

\usepackage{amsmath,amssymb,amsthm,amsfonts,mathrsfs}
\usepackage{upgreek}
\usepackage{nicefrac}
\usepackage{hyperref}
\DeclareMathAlphabet{\mathpzc}{OT1}{pzc}{m}{it}
\hypersetup{
 colorlinks = true,
 linkcolor=black,
 citecolor = blue,
}
\DeclareUrlCommand\url{\color{black}}

\usepackage{color}
\usepackage{stmaryrd}
\usepackage{enumitem}
\usepackage{xcolor}
\usepackage{bbm}
\usepackage{ifthen}
\usepackage{xargs}
\usepackage[disable]{todonotes}
\usepackage{aliascnt}
\usepackage{tikz-cd}

\usepackage{subfigure}
\usepackage{graphicx}
\usepackage{comment}
\usepackage{array}
\usepackage{cleveref}
\usepackage{autonum}
\usepackage{tabularx}
\usepackage{float}
\usepackage{algorithm,algorithmic}

\newtheorem{theorem}{Theorem}
\crefname{theorem}{theorem}{Theorems}
\Crefname{Theorem}{Theorem}{Theorems}
\theoremstyle{definition}

\newtheorem*{lemma_nonumber*}{Lemma}
\newaliascnt{lemma}{theorem}
\newtheorem{lemma}[lemma]{Lemma}
\aliascntresetthe{lemma}
\crefname{lemma}{lemma}{lemmas}
\Crefname{Lemma}{Lemma}{Lemmas}
\newaliascnt{corollary}{theorem}

\aliascntresetthe{corollary}
\crefname{corollary}{corollary}{corollaries}
\Crefname{Corollary}{Corollary}{Corollaries}
\newaliascnt{proposition}{theorem}
\newtheorem{proposition}[proposition]{Proposition}
\aliascntresetthe{proposition}
\crefname{proposition}{proposition}{propositions}
\Crefname{Proposition}{Proposition}{Propositions}
\newaliascnt{remark}{theorem}

\aliascntresetthe{remark}
\crefname{remark}{remark}{remarks}
\Crefname{Remark}{Remark}{Remarks}
\newtheorem{assumption}{\textbf{A}\hspace{-3pt}}
\crefformat{assumption}{{\textbf{A}}#2#1#3}

\crefformat{assumptionB}{{\textbf{B}}#2#1#3}

\crefformat{assumptionD}{{\textbf{D}}#2#1#3}

\newcommand{\ps}[1]{\langle #1 \rangle}

\newcommand{\cP}{{\mathcal P}}

\newcommand{\cN}{{\mathcal N}}

\newcommand{\cD}{{\mathcal D}}

\newcommand{\X}{{\mathsf X}}
\newcommand{\Y}{{\mathsf Y}}
\newcommand{\R}{{\mathbb R}}
\newcommand{\E}{{\mathbb E}}
\newcommand{\D}{{\mathcal D}}
\newcommand{\N}{{\mathbb N}}

\newcommand{\prior}{P_0}

\newcommand{\diag}{\mathop{\mathrm{Diag}}\nolimits}

\newcommand{\btheta}{\boldsymbol{\theta}}
\newcommand{\bw}{\boldsymbol{w}}
\newcommand{\bz}{\boldsymbol{z}}
\newcommand{\bmu}{\boldsymbol{\mu}}
\newcommand{\brho}{\boldsymbol{\rho}}

\newcommand{\bgamma}{\boldsymbol{\gamma}}

\def\ie{i.e.}

\def\likelihood{\mathrm{L}}
\def\loss{\ell}
\def\lossCE{\ell_{\mathrm{CE}}}

\def\diag{\mathrm{diag}}

\newcommandx{\norm}[2][1=]{\ifthenelse{\equal{#1}{}}{\left\Vert #2 \right\Vert}{\left\Vert #2 \right\Vert^{#1}}}
\newcommandx{\normLigne}[2][1=]{\ifthenelse{\equal{#1}{}}{\Vert #2 \Vert}{\Vert #2\Vert^{#1}}}

\def\argmax{\operatorname{argmax}}

\def\rset{\mathbb{R}}


\newcommandx{\propmap}[1 ]{\mathrm{T}_{#1}}
\newcommandx{\chunk}[3]{#1_{#2:#3}}

\newcommandx{\Circ}[2]{\underset{#1}{\overset{#2}{\bigcirc}}}
\newcommandx{\Circtext}[2]{{\bigcirc}_{#1}^{#2}}

\def\bw{\bar{w}}


\def\KL{\mathrm{KL}}


\def\mse{\mathsf{E}}

\def\msx{\mathsf{X}}
\def\msy{\mathsf{Y}}
\def\erf{\mathsf{erf}}



\def\mce{\mathcal{E}}

\def\mcp{\mathcal{P}}


\def\rset{\mathbb{R}}

\def\nsets{\mathbb{N}^*}


\def\rmd{\mathrm{d}}

\def\rme{\mathrm{e}}

\def\rmC{\mathrm{C}}

\newcommandx{\functionspace}[2][1=+]{\mathbb{F}_{#1}(#2)}


\newcommand{\LeftEqNo}{\let\veqno\@@leqno}







\newcommand{\abs}[1]{\left\vert #1 \right\vert}

\newcommandx{\Vnorm}[2][1=V]{\| #2 \|_{#1}}
\newcommandx{\VnormEq}[2][1=V]{\left\| #2 \right\|_{#1}}




\newcommandx\probaMarkovTilde[2][2=]
{\ifthenelse{\equal{#2}{}}{{\widetilde{\mathbb{P}}_{#1}}}{\widetilde{\mathbb{P}}_{#1}\left[ #2\right]}}






\newcommand{\plusinfty}{+\infty}


\def\ie{\textit{i.e.}}

\def\eqsp{\;}


\newcommandx\sequence[3][2=,3=]
{\ifthenelse{\equal{#3}{}}{\ensuremath{\{ #1_{#2}\}}}{\ensuremath{\{ #1_{#2}, \eqsp #2 \in #3 \}}}}
\newcommandx\sequenceD[3][2=,3=]
{\ifthenelse{\equal{#3}{}}{\ensuremath{\{ #1_{#2}\}}}{\ensuremath{( #1)_{ #2 \in #3} }}}

\newcommandx{\sequencen}[2][2=n\in\N]{\ensuremath{\{ #1_n, \eqsp #2 \}}}
\newcommandx\sequenceDouble[4][3=,4=]
{\ifthenelse{\equal{#3}{}}{\ensuremath{\{ (#1_{#3},#2_{#3}) \}}}{\ensuremath{\{  (#1_{#3},#2_{#3}), \eqsp #3 \in #4 \}}}}
\newcommandx{\sequencenDouble}[3][3=n\in\N]{\ensuremath{\{ (#1_{n},#2_{n}), \eqsp #3 \}}}

\def\iid{\text{i.i.d.}}

\def\eg{e.g.}

\newcommand{\opnorm}[1]{{\left\vert\kern-0.25ex\left\vert\kern-0.25ex\left\vert #1
    \right\vert\kern-0.25ex\right\vert\kern-0.25ex\right\vert}}

\newcommandx{\CPE}[3][1=]{{\mathbb E}_{#1}\left[#2 \left \vert #3 \right. \right]} 
\newcommandx{\CPVar}[3][1=]{\mathrm{Var}^{#3}_{#1}\left\{ #2 \right\}}
\newcommand{\CPP}[3][]
{\ifthenelse{\equal{#1}{}}{{\mathbb P}\left(\left. #2 \, \right| #3 \right)}{{\mathbb P}_{#1}\left(\left. #2 \, \right | #3 \right)}}

\def\scrT{\mathscr{T}}

\newcommandx{\osc}[2][1=]{\mathrm{osc}_{#1}(#2)}





\def\Jac{\mathrm{J}}


\newcommand\coupling[2]{\Gamma(\mu,\nu)}

\renewcommand{\geq}{\geqslant}
\renewcommand{\leq}{\leqslant}

\def\Leb{\mathrm{Leb}}

\def\bgamma{\bar{\gamma}}

\def\Ltt{\mathtt{L}}

\def\posterior{P}
\def\NELBO{\mathrm{NELBO}^N}
\def\ELBO{\mathrm{ELBO}^N}

\def\ELBOeta{\mathrm{ELBO}^N_{\eta}}
\def\densityq{q}
\def\family{\mathscr{F}}

\def\bthetas{\btheta^{\star}}
\def\densityqOne{\densityq^1}
\def\priorOne{\prior^1}
\def\ThetaOne{\Xi}
\def\Tscr{\mathscr{T}}
\def\gammaOne{\gamma}

\def\rmG{\mathrm{G}}

\def\dx{d_{\msx}}
\def\dtheta{d_{\theta}}
\def\dy{d_{\msy}}

\def\scrS{\mathscr{S}}
\def\etaN{\eta_{N}}
\def\funFN{\mathrm{F}^N_{\eta}}
\def\tfunFN{\tilde{\mathrm{F}}^N_{\eta}}

\def\trmGN{\tilde{\mathrm{G}}}

\def\etawN{\tau}
\def\Liploss{\Ltt_{\loss}}

\def\constphi{C_{\phi}}
\def\funFinftyp{\mathrm{F}^p_{\etawN}}
\def\nus{\nu_{\star}}
\def\risk{\mathrm{R}_{\tau}}
\def\pixy{\pi}
\def\constphinus{C_{\phi}^{\nus}}
\def\tphi{\tilde{\phi}}
\def\Liph{\Ltt_h}
\def\inverse{-1}
\def\riskZero{\mathrm{R}_\mu}
\def\riskZerow{\mathrm{R}_w}
\def\ELBOwN{\mathrm{ELBO}}


\newcommand{\ak}[1]{\textcolor{pink!200}{\textbf{AK:}  #1}}

\title{Variational Inference of overparameterized Bayesian Neural Networks: a theoretical and empirical study}

\author{Tom Huix\\ CMAP, Ecole Polytechnique\\ IP Paris\\
\texttt{tom.huix@polytechnique.edu} \And Szymon Majewski \\ CMAP, Ecole Polytechnique\\ IP Paris\\ \texttt{sjm.majewski@gmail.com} \And Alain Durmus \\ENS Paris-Saclay\\ Université Paris-Saclay \\ \texttt{alain.durmus@ens-paris-saclay.fr}\And Eric Moulines \\ CMAP, Ecole Polytechnique\\ IP Paris\\ \texttt{eric.moulines@polytechnique.edu}\And Anna Korba \\ ENSAE, CREST\\ IP Paris\\ \texttt{anna.korba@ensae.fr} 
}

\begin{document}

\maketitle

\begin{abstract}
  This paper studies the Variational Inference (VI) used for training
  Bayesian Neural Networks (BNN) in the overparameterized regime, \ie,
  when the number of neurons tends to infinity. More specifically, we
  consider overparameterized two-layer BNN and point out a critical
  issue in the mean-field VI training. This problem arises from the
  decomposition of the lower bound on the evidence (ELBO) into two
  terms: one corresponding to the likelihood function of the model and
  the second to the Kullback-Leibler (KL) divergence between the prior
  distribution and the variational posterior. In particular, we show
  both theoretically and empirically that there is a trade-off between
  these two terms in the overparameterized regime only when the KL is
  appropriately re-scaled with respect to the ratio between the the
  number of observations and neurons. We also illustrate our theoretical results with
  numerical experiments that highlight the critical choice of this ratio. 
\end{abstract}

\section{Introduction}
Bayesian neural networks (BNN) have gained popularity in the field of
machine learning because they promise to combine the powerful
approximation/discrimination properties of (deep) neural networks (NN)
with the decision-theoretic approach of Bayesian inference. Among the
advantages of BNN is their ability to provide uncertainty
quantification, which is a must in many fields - \eg, autonomous
driving \cite{michelmore2020uncertainty,mcallister2017concrete},
computer vision \cite{kendall2017uncertainties}, medical diagnosis
\cite{filos2019systematic}. Second, the inclusion of prior information
in some cases leads to better generalization error and calibration in
classification tasks; see \cite{jospin2020hands,izmailov2021bayesian}
and references therein.

NN can be used to build complex probabilistic models for regression and classification tasks. Given $\bw$ corresponding to the weights and bias of an NN, the network output can be used to define a (conditional) likelihood $\likelihood(\{ (x_i,y_i) \}_{i=1}^p | \bw)$ of some observed labels $\{y_i\}_{i=1}^p$, $y_i \in \msy$ associated with feature vectors $\{x_i\}_{i=1}^p$, $x_i \in \msx$.
Specifying a prior distribution for $\bw$ and applying Bayes' rule yields the posterior distribution of weights.
In the Bayesian approach, the goal is to find the predictive distribution from new feature vectors defined as an integral with respect to the posterior. One possible approach is to use Markov-Chain Monte Carlo methods - such as Hamiltonian Monte Carlo - for inference in Bayesian neural networks; \cite{neal2011mcmc,hoffman2014no,betancourt2017conceptual}. However, the challenge of scaling HMC for applications involving high-dimensional parameter space and large datasets limits its broad application; \cite{cobb2021scaling}.
Computationally cheaper MCMC methods have been proposed, see \cite{welling2011bayesian,chen2014stochastic,brosse2018promises}; but these methods yield biased estimates of posterior expectation, see \cite{izmailov2021bayesian}.
A much simpler alternative from a computational standpoint is to use Variational Inference (VI)
\cite{blundell2015weight,gal2016dropout,louizos2017multiplicative,khan2018fast}, which approximates the posterior with a parametric distribution. Nevertheless, little is known about the validity or limitations of the latter approach, including the choice of prior and variational family and their interplay.

A number of recent papers have investigated the limiting behavior of gradient descent type algorithms for one or two hidden layers in the overparameterized regime,
\cite{chizat2018global,rotskoff2019global,mei2018mean,tzen2019neural,de2020quantitative}, \ie, the number of hidden neurons goes to  infinity.  More specifically, it was found that the gradient descent applied to risk minimization can be viewed as a temporal and spatial discretization of the Wasserstein gradient flow of a limiting function, which can be represented in the space of probability distributions over the parameters by
\begin{equation}
 \label{eq:7}
 \riskZero(\mu) = \int \loss(y,\int s(\bw,x) \rmd \mu(\bw))  \rmd \pi (x,y) + \mathrm{N}(\mu) \eqsp,
\end{equation}
where $\pi$ is the data distribution over $\msx \times \msy$,
$s(\bw,x)$ is the output prediction of the NN with
parameter weights $\bw$ and $\mathrm{N}$ plays the role of a penalty
function. Roughly speaking, identifying this function consists in
noting that the risk $\riskZerow$ over the weights $\bw$ of a NN
coincides with $\riskZero$ on the set of empirical measures, \ie~for
any $\bw = (w_1,\ldots,w_N)$ - where
$N$ is the number of neurons-, $\riskZerow(\bw) = \riskZero(\mu_N)$
with $\mu_N=N^{-1} \sum_{i=1}^N \updelta_{w_i}$.  This result
emphasizes that in the overparameterized regime, the parameter weights
of a NN act as particle discretization of probability measures and the
final prediction of a NN has a form of continuous mixture.

We are interested here in performing the same type of
analysis but for Variational Inference (VI) of two layer Bayesian
Neural Networks (BNN). In this setting, the parameter weights for making predictions in practice are no longer fixed, but are sampled
 from a variational posterior, and the final result is the empirical average of the prediction of each sample. This variational posterior is obtained by maximizing an objective function, the Evidence
Lower Bound (ELBO) over a parameter space $\ThetaOne^N$ . It was empirically found that the maximization of the
"vanilla" ELBO function - based on the "vanilla" posterior distribution of a BNN-, can yield very poor inferences. To address this problem, a modification of this objective function was proposed, resulting from a decomposition into two terms of this function: one corresponding to the Kullback-Leibler (KL) divergence between the
variational density and the prior and the other to a marginal likelihood term.   Based on this decomposition, the modified version of
$\ELBOwN$, called partially tempered $\ELBOwN$, consists in multiplying the KL term by a
temperature parameter.

Although this change has been justified intuitively or by purely
statistical considerations, to our knowledge no formal results have
been derived. Our first contribution is to show that this procedure is
indeed mandatory in the overparameterized regime. More precisely, we
show that if the temperature is not scaled appropriately with respect
to the number of neurons and data points, one of the two terms becomes
dominant and therefore either the resulting posterior is too close to
the prior - underfitting- or overfitting occurs. Our second
contribution is to identify, similarly to risk minimization, a
limiting function for the partially tempered version of $\ELBOwN$ when
the number of neurons and data points approaches infinity and the
temperature parameter is appropriately chosen. Our conclusions are
twofold. First, we highlight the importance of the choice of prior and
variational family for the resulting variational posterior
distribution. Second, we show that performing VI for BNN in the
overparameterized regime is equivalent to risk over an extended space
of probability measures. In summary, then, the use of VI for BNN
enriches traditional NN models by producing predictions from a
hierarchical mixture distribution.  Ultimately, this shows that using
VI to train overparameterized BNN amounts to empirical risk
minimization and therefore should be interpreted with cautious for performing Bayesian uncertainty quantification, as our contributions imply that the resulting posterior
does not concentrate as the number of observations grows and is
therefore not related to the usual Bayesian posterior distribution of
the model.

This paper is organized as follows. \Cref{sec:background} introduces the background of VI on BNN. \Cref{sec:bnn_infinite} characterizes the inadequacy of these models in the limiting case of the mean field, when the data or prior variance do not scale, and identifies the well-posed regime.
In \Cref{sec:experiments}, some numerical experiments are presented to illustrate our claims.


\textbf{Notations.} We assume that $\rset^d$ is equipped with the Euclidean topology. For any set $\mse \subset \rset^d$, we denote by $\cP(\mse)$ the set of probability measures on $\mse$ equipped with the trace topology and the trace $\sigma$-field denoted by $\mce$. We define the weak topology on $\mcp(\mse)$ as the initial topology associated with $\nu \in \mcp(\mse) \mapsto \int f \rmd \nu$ for bounded and continuous map $f:\mse \to \rset$. For any $\nu,\mu\in \cP(\mse)$ we denote by $\nu \otimes \mu$ the product measure and define by induction $\nu^{\otimes N}=\nu^{\otimes N-1} \otimes \nu$.
 For $\sigma \in \R^{d}$, the diagonal matrix in $\R^{d\times d}$ with diagonal $\sigma^2 = [\sigma_1^2,\dots,\sigma_d^2]$ will be denoted $\diag(\sigma^2)$. For any measurable map $\scrT:\R^d \times \R^d$ and probability measure $\nu \in \cP(\R^d)$, we denote by $\scrT_{\#}\nu$ the pushforward measure of $\nu$ by $\scrT$. It is characterized by the transfer lemma, i.e. $\int F(y)\rmd \scrT_{\#}\nu(y)=\int F(\scrT(z))\rmd\nu(z)$ for any measurable and bounded function $F$. For $\mu,\pi \in \cP(\mse)$, the Kullback-Leibler divergence between a distribution $\mu$ and $\pi$ is defined by $\KL(\mu|\pi)=\int \log(\rmd\mu/\rmd\pi) d\mu$ where $\rmd\mu/\rmd\pi$ is the Radon-Nikodym derivative if $\mu$ is absolutely continuous with respect to $\pi$, and $\KL(\mu|\pi)=+\infty$ otherwise.


\section{Variational inference for BNN objective}\label{sec:background}

Consider a supervised setting where we have access to \iid~samples
$\{(x_i, y_i)\}_{i=1}^{p}$, from a distribution $\pi$ on $\X \times \Y \subset \rset^{\dx} \times \rset^{\dy}$, and aim at predicting $y$ given a new observation $x$.
In this paper, we focus on a fully connected NN with one hidden layer and $N$ neurons, and activation function $h : \rset^{\dx} \times \msx \to \rset$.
A common example of such a function is
\begin{equation} \label{eq:exampleh}
  h(b_j,x)= \upsigma(\ps{b,x}) \eqsp,
\end{equation}
for $b \in \R^{\dx}$ and $x \in\msx$, where $\upsigma:\R \to \R$ is
the Rectified Linear Unit $\upsigma(t) = \max(0,t)$ or sigmoid  function $\upsigma(t) = \rme^{t}/(1+\rme^{t})$,
 for $t \in\rset$.
 In addition, for $j\in \{1,\dots,N\}$, denote   by $b_j \in\rset^{\dx}$ and $a_j \in \rset^{\dy}$ the $j$-th weights of the hidden and output layers respectively, and set $w_j=(b_j,a_j) \in \rset^d$, $d = \dx +\dy$, $\bw=(w_j)_{j=1}^{N}$ all the weights of the NN under consideration.
With this notation, for each input $x\in \X$, the output prediction $f_{\bw} : \msx \to \rset^{\dy}$ of the neural network can be written as:
\begin{equation}\label{eq:deffw}
f_{\bw}(x)=N^{-1}\sum_{j=1}^{N}s(w_j, x),\text{ with } s(w_j, x) = a_j h(b_j,x) .
 \end{equation}
Given a loss function $\loss : \msy\times \msy \to \rset_+$, we use the prediction function $f_{\bw}$ to define the conditional likelihood
\begin{equation}\label{eq:deflikelihood}
  \likelihood(y|x,\bw)
\propto \exp(-\ell(f_{\bw}(x), y )) \eqsp,
\end{equation}
with respect to the Lebesgue measure on $\rset^{d \times N}$ denoted by $\Leb_{d \times N}$.
Then, choosing a prior pdf $\prior$ on $\bw$, the posterior pdf $\posterior$ of the weights 
is proportional to $\bw \mapsto \prior(\bw) \prod_{i=1}^p \likelihood(y_i |x_i,\bw)$.
We perform Bayesian inference using VI \citep{khan2021bayesian,blei2017variational,blundell2015weight,graves2011practical,khan2018fast}.
The general procedure is to consider a variational family of pdfs $\family_{\Theta} = \{\densityq_{\btheta} \, : \, \btheta \in \Theta \}$, for $\Theta \subset \rset^{\dtheta}$ and the Evidence Lower Bound (ELBO) defined for any  $\btheta \in\Theta$ by:
\begin{equation}
  \label{eq:elbo}
  \ELBO(\btheta) =  -\KL(\densityq_{\btheta} \, | \, \prior)+
   \sum_{i=1}^p \int_{\R^{N\times d}} \log \likelihood(y_i |x_i,\bw) q_{\btheta}(\bw) \rmd \Leb_{d \times N}(\bw) \eqsp.
 \end{equation}
It is known that maximizing $\ELBO$ is equivalent to minimizing $\btheta \mapsto \KL(\densityq_{\btheta} \, | \, \posterior)$. For this reason, VI consists in approximating the posterior distribution  by $\densityq_{\bthetas}$ with $\bthetas \in \argmax \ELBO$.
The first term in \eqref{eq:elbo} acts as a penalty term to control the deviation of $q_{\btheta^*}$ from the prior $\prior$, while the second term plays the role of empirical risk and promotes adaptation of the data.
In practice, however, it has been shown that the choice of the prior and the variational approximation $\ELBO$ is crucial for good performance. It was proposed by \cite{zhang2018noisy,khan2018fast,osawa2019practical,ashukha2020pitfalls} to weaken the regularization term KL and consider a partially tempered version of $\ELBO$, which for a cooling parameter $\etaN > 0$ is given by
\begin{equation}\label{eq:elboeta}
  \ELBOeta(\btheta) =  -\etaN\KL(\densityq_{\btheta} \, | \, \prior) 
 + \sum_{i=1}^p \int_{\R^{N\times d}} \log \likelihood(y_i |x_i,\bw) q_{\btheta}(\bw) \rmd \Leb_{d \times N}(\bw) \eqsp.
\end{equation}
It has been shown in \cite{wenzel2020good,wilson2020bayesian} that $\ELBOeta$ is the same as $\ELBO$ but considering instead of the common posterior $\posterior$, a partially tempered posterior $\posterior_{T_N} \propto \likelihood^{1/T_N} \prior$,  where the likelihood function is tempered for some temperature $T_N \geq 0$. The parameter $\etaN$ (or equivalently the temperature $T_N$) controls the tradeoff of the likelihood term with respect to the prior. Setting $\etaN < 1$ corresponds to a \emph{cold posterior}, where the likelihood term is strengthened so that the posterior is concentrated in regions of high likelihood. The case $\etaN=1$ corresponds to "plain" Bayesian inference, while $\etaN < 1$ corresponds to \textit{warm posterior} where the prior has a stronger influence on the posterior.

In a series of paper, \cite{grunwald2012safe,grunwald2017inconsistency,bhattacharya2019bayesian,heide2020safe,grunwald21PAC}  have shown - significantly extending earlier results of \cite{barron1991minimum,zhang2006information} -  that  partially tempered posteriors may have better statistical properties under model misspecification than the "plain" posterior as the number of data points goes to infinity (expressed in terms posterior contraction around the best approximation of the truth). These results have been derived for Generalized Linear Models and it is not clear how these results extend to BNN.
\cite{wilson2020bayesian} more informally argues that tempering is not inconsistent with Bayesian principles and that it may be particularly relevant in a parametric setting (where the model is defined by parameters), as opposed to Bayesian Nonparametric approaches - \eg, Gaussian processes. Namely, while in nonparametric approaches the model capacity is automatically scaled with the available data, this is not the case in parametric approaches, where the model capacity (which is determined by the number of neurons and the neural network architecture) is chosen by the user. Model misspecification is the rule in such case, as we show in \Cref{sec:bnn_infinite} for neural networks with a hidden layer. However, to the best of our knowledge, the temperature scaling with respect to the number of data points and network parameters has not been investigated theoretically, in particular in the context of BNN.

Other studies, \eg, \cite{farquhar2019radial}, noted that a potential cause of the predominance of the KL term in \eqref{eq:elbo} stems from the choice of the prior. Indeed, it has been noticed that the role of $\prior$ is important since it lead to very different inferences, see \cite{fortuin2021priors}. In particular, 
using priors on $\bw$ which factorize over the weights, \ie, of the form
\begin{equation}\label{eq:prioriid}
  \prior(\bw) = \prod_{j=1}^N \priorOne(w_j) \eqsp,
\end{equation}
do not yield optimal performance and as a result  \cite{tran2020all,fortuin2021bayesian,ober2021global,sun2018functional} have proposed the design of new priors which  introduce correlation amongst the weights and/or heavier tails than Gaussian ones.

In the present work, we take a novel approach to justify the use of $\ELBOeta$ based on the so-called  overparameterized regime and study the impact of the choice of the cooling parameter $\etaN$.
We assume that the prior factorizes over the neurons, \ie, the prior takes the form \eqref{eq:prioriid} and for each $\btheta = (\theta_1,\cdots,\theta_N) \in\ThetaOne^N$, $\densityq_{\btheta}(\bw) = \prod_{i=1}^N \densityqOne_{\theta_j}(w_j)$, where $\priorOne$ and $\{ \densityqOne_{\theta_j}\}_{j=1}^N$ are distributions over $ \ThetaOne \subset \rset^d$. In this case, $\Theta=\ThetaOne^N$ and the prior distribution for each neuron $\priorOne$ is the same.
Further,
we assume that for any $\theta \in \ThetaOne$, the variational distribution corresponding to $\densityqOne_{\theta}$ is the pushforward  of a reference probability measure with density  $\gammaOne$ by $\Tscr_{\theta}$, where $\{\Tscr_{\theta}\,:\, \theta \in \ThetaOne\}$ is a family of $\rmC^1$-diffeomorphisms on $\rset^{d}$, \ie,   $\densityqOne_{\theta}(w) = \gammaOne(\Tscr_{\theta}^{\inverse}(w)) \Jac_{\Tscr_{\theta}^{\inverse}}(w)$, denoting by $\Jac_{\Tscr_{\theta}^{\inverse}}$ the Jacobian determinant of $\Tscr_{\theta}^{\inverse}$. A common choice for $\Tscr_{\theta}$ is, setting $\theta = (\mu,\sigma) \in \rset^d \times (\rset_+^*)^d$,
\begin{equation}
  \label{eq:examplettheta}
\Tscr_{\theta}:z \mapsto \mu + \sigma \odot z \eqsp,
\end{equation}
where $\odot$ is the component wise product but of course much more sophisticated choices are possible.
Then, by~\eqref{eq:deffw}-\eqref{eq:deflikelihood} and a change of variable, the ELBO can be expressed as
\begin{equation}
 \label{eq:finitefunctional}
    \ELBOeta(\btheta) = -\sum_{i=1}^p \rmG^{N}_{\Theta}(\btheta;(x_i,y_i))
-\etaN    \sum_{j=1}^N \KL(\densityqOne_{\theta_j} | \priorOne)
   \end{equation}
with denoting the output of a neuron parametrized by $\theta\in \R^d$ for an input $x_i$ by
   \begin{equation}\label{eq:defphi}
     \phi(\theta,z,x_i) = s(\Tscr_{\theta}(z),x_i) \eqsp,
   \end{equation}
   and
   $\bz = (z_1,\ldots,z_N)\in \R^{d\times N}$,
   \begin{equation}
     \label{eq:defrmGN}
   \rmG^{N}_{\Theta}(\btheta;(x,y))=   \int \loss\left( y,  \sum_{j=1}^N \frac{\phi(\theta_j,z_j,x)}{N}
\right)
\gammaOne^{\otimes N}(\rmd \bz) \eqsp.
\end{equation}
Although the VI framework we are considering may seem overly simplistic in light of the above, it is the one most commonly used in practice, and therefore it is still very important to obtain useful guidelines for implementation in order to optimize its performance. Moreover, it is a first step before considering other VI methods with more complex priors and/or variational families.
The expression of $\ELBOeta$ shows that the parameter $\etaN$ must be chosen to balance the two terms in \eqref{eq:elboeta} to obtain a well-posed VI functional as $N,p \to \plusinfty$ and a variational posterior $\densityq_{\bthetas}$ different from the prior. Without this parameter, optimizing the $\ELBO$ \eqref{eq:elbo} leads to the collapse of the variational posterior to the prior, as shown in the following proposition. 

\begin{proposition}\label{prop:posterior_collapse}
 Assume that $\family_{\Theta}$ is a family of Gaussians with diagonal covariance matrices, that $\prior \in \family_{\Theta}$ and that $\X$ is compact.   
Let $\btheta^{*,N} = \argmax_{\btheta \in \Theta}\ELBO(\btheta)$.  Assume also that $l$ is the square loss or cross-entropy, and that $\upsigma$ is Lipschitz. 
Then, $\KL(\densityq_{\btheta^{*,N}}, \prior)\to 0$ as $N\to \infty$.
\end{proposition}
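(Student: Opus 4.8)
The plan is to compare the value of the ELBO at the maximizer with its value at the prior, and to exploit the $1/N$ averaging in $f_{\bw}$ to show that any gain in the likelihood term is too cheap to sustain a non-vanishing KL. Write $D_N = \KL(q_{\btheta^{*,N}}\,|\,\prior)$ and $\ell_i(\bw) = \loss(f_{\bw}(x_i),y_i)$. For both the square loss (Gaussian likelihood) and the cross-entropy (softmax likelihood) the log-normalizer in \eqref{eq:deflikelihood} is independent of $\bw$, so $\log\likelihood(y_i|x_i,\bw) = -\ell_i(\bw)+c_i$ with $c_i$ independent of $\btheta$. Since $\prior\in\family_\Theta$ and both the prior and the variational family factorize over neurons, there is $\btheta_0\in\Theta$ with $q_{\btheta_0}=\prior$, and the factorization gives $\KL(q_{\btheta}\,|\,\prior)=\sum_{j=1}^N\KL(\densityqOne_{\theta_j}\,|\,\prior^1)$. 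Using $\ELBO(\btheta^{*,N})\geq\ELBO(\btheta_0)$ and $\KL(q_{\btheta_0}\,|\,\prior)=0$, one obtains
\begin{equation}\label{eq:plan_comparison}
 D_N \leq \sum_{i=1}^p\big(\E_{\prior}[\ell_i(\bw)]-\E_{q_{\btheta^{*,N}}}[\ell_i(\bw)]\big).
\end{equation}
Because $\loss\geq 0$, while $\E_{\prior}[f_{\bw}(x_i)]=\E_{\prior^1}[s(\cdot,x_i)]$ is constant in $N$ and $\Var_{\prior}(f_{\bw}(x_i))=N^{-1}\Var_{\prior^1}(s(\cdot,x_i))=\bigO(1/N)$, the right-hand side is at most $C_0:=\sum_i\E_{\prior}[\ell_i]$, uniformly in $N$. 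Hence $D_N\leq C_0$ and, a fortiori, $\KL(\densityqOne_{\theta^{*,N}_j}\,|\,\prior^1)\leq C_0$ for every $j$, confining each optimal neuron law to a fixed KL-ball around $\prior^1$.

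Next I would sharpen \eqref{eq:plan_comparison}. For the square loss, $\E_\nu[\ell_i]=(\E_\nu[f_{\bw}(x_i)]-y_i)^2+\Var_\nu(f_{\bw}(x_i))$, so the improvement splits into a mean-shift part and a variance part. Writing $\Delta_j(x)=\E_{\densityqOne_{\theta^{*,N}_j}}[s(\cdot,x)]-\E_{\prior^1}[s(\cdot,x)]$, the shift of the mean prediction is $N^{-1}\sum_j\Delta_j(x_i)$, and Cauchy--Schwarz gives $|\E_{q_{\btheta^{*,N}}}[f_{\bw}(x_i)]-\E_{\prior}[f_{\bw}(x_i)]|\leq N^{-1/2}(\sum_j\Delta_j(x_i)^2)^{1/2}$. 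The key lemma to establish is that, with $C$ independent of $j,N$,
\begin{equation}\label{eq:plan_key}
 \Delta_j(x_i)^2 \leq C\,\KL(\densityqOne_{\theta^{*,N}_j}\,|\,\prior^1).
\end{equation}
Granting \eqref{eq:plan_key}, the mean shift is at most $\sqrt{CD_N/N}$; combined with the factor $|\E_{\prior}[f_{\bw}(x_i)]+\E_{q_{\btheta^{*,N}}}[f_{\bw}(x_i)]-2y_i|$ being bounded and both variance terms being $\bigO(1/N)$, \eqref{eq:plan_comparison} becomes
\begin{equation}\label{eq:plan_selfbound}
 D_N \leq C_1\sqrt{D_N/N}+C_2/N.
\end{equation}
Solving this quadratic inequality in $\sqrt{D_N}$ yields $D_N=\bigO(1/N)\to 0$, which is the claim. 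For the cross-entropy, $\ell_i$ is Lipschitz in the logit $f_{\bw}(x_i)$, so the improvement in \eqref{eq:plan_comparison} is controlled by the Wasserstein-1 discrepancy between the laws of $f_{\bw}(x_i)$ under $q_{\btheta^{*,N}}$ and $\prior$, which is again bounded by the mean shift plus the $\bigO(N^{-1/2})$ spread, leading to the same self-bound \eqref{eq:plan_selfbound}.

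The main obstacle is \eqref{eq:plan_key}. The difficulty is that $s(w,x)=a\,\upsigma(\ps{b,x})$ is unbounded: it is bilinear in the output weight $a$ and in the hidden response $\upsigma(\ps{b,x})$, which is only Lipschitz and hence grows linearly, so $\Delta_j$ involves second moments of the neuron law. I would handle this by explicit Gaussian computations: writing $\theta_j=(\mu_j,\sigma_j)$, both $\E_{\densityqOne_{\theta_j}}[s(\cdot,x)]$ and $\Var_{\densityqOne_{\theta_j}}(s(\cdot,x))$ are smooth functions of $(\mu_j,\sigma_j)$ with at most polynomial dependence (using $\upsigma$ Lipschitz and $\X$ compact). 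The a priori bound $\KL(\densityqOne_{\theta_j}\,|\,\prior^1)\leq C_0$ forces, for diagonal Gaussians, $|\mu_j-\mu_0|$ bounded and $\sigma_j$ bounded away from $0$ and $\infty$, so $(\mu_j,\sigma_j)$ stays in a compact set; on this set $\Delta_j$ is smooth and vanishes at the prior parameters, while $\KL$ is bounded below quadratically in the parameter displacement, which yields \eqref{eq:plan_key}. The same moment estimates give the uniform control $\sum_j\Var_{\densityqOne_{\theta^{*,N}_j}}(s(\cdot,x_i))=\bigO(N)$ needed to make the variance terms $\bigO(1/N)$.
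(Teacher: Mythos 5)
Your proposal is correct and shares the paper's skeleton in its first half --- the comparison $\ELBO(\btheta^{*,N})\geq\ELBO(\btheta_0)$ with $q_{\btheta_0}=\prior$, the resulting bound $\KL(\densityq_{\btheta^{*,N}}|\prior)\leq \mathcal{L}(\prior)-\mathcal{L}(\densityq_{\btheta^{*,N}})$, and the a priori bound $\KL\leq C_0$ obtained from $\loss\geq 0$ are exactly the paper's steps --- but the second half takes a genuinely different route. The paper controls the first and second moments of the predictive distribution under $\densityq_{\btheta^{*,N}}$ by functions $\phi(\KL)/\sqrt{N}$ and $\psi(\KL)/N$ of the \emph{total} KL (via crude per-coordinate bounds such as $\|\mu_a\|_2^2\leq 2\KL$), shows that $\mathcal{L}(\prior)$ and $\mathcal{L}(\densityq_{\btheta^{*,N}})$ converge to the same explicit limit, and concludes that their difference, hence the KL, vanishes; no rate is extracted. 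You instead prove a \emph{per-neuron} comparison $\Delta_j^2\leq C\,\KL(\densityqOne_{\theta_j}|\priorOne)$ (valid because the a priori bound confines each $(\mu_j,\sigma_j)$ to a compact set on which the Gaussian KL is bounded below quadratically in the parameter displacement while $\Delta_j$ is Lipschitz and vanishes at the prior), sum by Cauchy--Schwarz, and close with the self-bounding inequality $D_N\leq C_1\sqrt{D_N/N}+C_2/N$. This buys an explicit rate $D_N=\bigO(1/N)$ for the square loss, works verbatim for non-centered priors (the paper's moment lemmas exploit $\E_{\prior}[a_j]=0$), and replaces the paper's explicit folded-Gaussian computations by a soft compactness argument; the price is that you need the local quadratic lower bound on the KL and Lipschitz regularity of $\theta\mapsto\E_{\densityqOne_\theta}[s(\cdot,x)]$, which the paper avoids. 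Two minor points: for the cross-entropy your coupling/Wasserstein bound gives an additive term $C_2/\sqrt{N}$ rather than $C_2/N$ (still yielding $D_N=\bigO(1/\sqrt{N})\to 0$), and the uniform control of $\Var_{\densityq_{\btheta^{*,N}}}(f_{\bw}(x_i))$ is not actually needed for the upper bound on $D_N$ in the square-loss case since that variance enters with a favorable sign.
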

This result and its proof, that can be found in \Cref{sec:proof_posterior_collapse}, 
are inspired from \citet[Theorem 1,2]{coker2021wide} that show that the moments of the predictive posterior collapse to the ones of the prior and that the KL converges to 0 as $N\to \infty$, when $l$ is the square loss and $\upsigma$ is odd. \Cref{prop:posterior_collapse} states an analog result, but which holds for additional losses (i.e., also cross-entropy) and more general activation functions (e.g., non odd ones as RelU). This is partly due to our different scaling of the output of the neural network, in $O(1/N)$ (see \eqref{eq:deffw}) that differs from theirs (in $O(1/\sqrt{N})$). 
The result of \Cref{prop:posterior_collapse} highlights that optimizing $\ELBO$ becomes ill-posed as $N\to \infty$. 
This suggests that 
the optimal variational posterior tends to ignore the data fitting term in \eqref{eq:finitefunctional}, 
and that $\etaN$ must be chosen to rebalance $\ELBO$. 
In the next section, we provide a theoretical framework supporting this informal discussion and then present our main results regarding the choice of $\etaN$.


\section{Identifying well-posed regimes for the ELBO 
with product priors}\label{sec:bnn_infinite}

We follow the approach outlined in \cite{chizat2018global,rotskoff2019global,mei2018mean} for ERM. We first generalize the definition of $\ELBOeta$ defined in \eqref{eq:finitefunctional} over $\ThetaOne^N$ to probability measures $\nu$ on $\ThetaOne$. Indeed, the following result states that $\NELBO$ can be expressed as a functional of the empirical measure over the weights $\nu_N^{\btheta}$ defined for each $\btheta = (\theta_1,\ldots,\theta_N) \in \ThetaOne^N$ by
\begin{equation}\label{eq:empiricalmeasure}
  \nu_N^{\btheta} = N^{-1}\sum_{i=1}^N \updelta_{\theta_i} \eqsp,
\end{equation}
where $\updelta_{\theta}$ is the Dirac mass at $\theta \in \ThetaOne$. Define $\mcp_{N}(\ThetaOne)$ the subset of $\mcp(\ThetaOne)$ which can be written as \eqref{eq:empiricalmeasure} for some $\btheta \in\ThetaOne^N$.
\begin{proposition}
  \label{propo:1}
For any $N \in\nsets$, there exists a function $\funFN$ defined over $\mcp_{N}(\ThetaOne)$ valued in $\rset\cup\{\plusinfty\}$ such that $\ELBOeta(\btheta) = \funFN(\nu_N^{\btheta})$ for any $\btheta \in\ThetaOne^N$.
\end{proposition}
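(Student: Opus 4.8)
The plan is to prove that $\ELBOeta$ is invariant under permutations of its $N$ arguments, and then to invoke the elementary fact that symmetric functions factor through the empirical-measure map. Precisely, the map $\btheta \in \ThetaOne^N \mapsto \nu_N^{\btheta} \in \mcp_N(\ThetaOne)$ of \eqref{eq:empiricalmeasure} is surjective by the very definition of $\mcp_N(\ThetaOne)$, and for two tuples $\btheta,\btheta' \in \ThetaOne^N$ one has $\nu_N^{\btheta} = \nu_N^{\btheta'}$ if and only if the multisets $\{\theta_1,\dots,\theta_N\}$ and $\{\theta'_1,\dots,\theta'_N\}$ coincide, that is, if and only if $\btheta' = \btheta^{\tau} := (\theta_{\tau(1)},\dots,\theta_{\tau(N)})$ for some permutation $\tau$ of $\{1,\dots,N\}$. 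Consequently, any function on $\ThetaOne^N$ that is invariant under all relabelings $\btheta \mapsto \btheta^{\tau}$ is constant on the fibers of this map, hence descends to a well-defined function on $\mcp_N(\ThetaOne)$.

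First I would verify that both terms of $\ELBOeta$ in \eqref{eq:finitefunctional} are invariant under $\btheta \mapsto \btheta^{\tau}$. The regularization term $\sum_{j=1}^N \KL(\densityqOne_{\theta_j}|\priorOne)$ is a sum of quantities each depending only on a single $\theta_j$, so it is plainly unchanged by reindexing. For the data term $\rmG^{N}_{\Theta}$ of \eqref{eq:defrmGN}, the integrand of $\rmG^{N}_{\Theta}(\btheta^{\tau};(x,y))$ involves $\sum_{j=1}^N \phi(\theta_{\tau(j)},z_j,x)/N$; writing $k=\tau(j)$ turns this into $\sum_{k=1}^N \phi(\theta_k,z_{\tau^{-1}(k)},x)/N$, and then relabelling the integration variables via $z_k \leftarrow z_{\tau^{-1}(k)}$---a change of variables that preserves the product measure $\gammaOne^{\otimes N}$ by exchangeability---yields $\rmG^{N}_{\Theta}(\btheta^{\tau};(x,y)) = \rmG^{N}_{\Theta}(\btheta;(x,y))$. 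Summing over the data $\{(x_i,y_i)\}_{i=1}^{p}$ then gives $\ELBOeta(\btheta^{\tau}) = \ELBOeta(\btheta)$.

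With permutation invariance in hand, I define $\funFN(\nu) := \ELBOeta(\btheta)$ for an arbitrary $\btheta$ with $\nu_N^{\btheta} = \nu$; such a $\btheta$ exists since $\nu \in \mcp_N(\ThetaOne)$, and the value is independent of the representative by the fiber characterization of the first paragraph combined with the invariance of the second. By construction $\ELBOeta(\btheta) = \funFN(\nu_N^{\btheta})$ for every $\btheta \in \ThetaOne^N$, and $\funFN$ inherits its (possibly infinite) values directly from $\ELBOeta$, so it is valued in the claimed extended reals. It is worth recording that the regularization part descends explicitly, since $\sum_{j=1}^N \KL(\densityqOne_{\theta_j}|\priorOne) = N\int \KL(\densityqOne_{\theta}|\priorOne)\,\rmd\nu_N^{\btheta}(\theta)$, whereas the data part is only defined through a representative because the variables $z_j$ are coupled to the individual atoms $\theta_j$ in the product measure $\gammaOne^{\otimes N}$.

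The only mildly delicate step is the change of variables for the data term, which rests on the exchangeability of $\gammaOne^{\otimes N}$; everything else is bookkeeping. In particular, no regularity or integrability hypotheses on $\loss$, $\phi$, or $\gammaOne$ are required for the existence of $\funFN$, since the proposition is purely about the invariance of $\ELBOeta$ under relabeling of the neurons and not about the finiteness of the resulting value.
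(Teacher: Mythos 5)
Your proposal is correct and takes essentially the same route as the paper: both arguments rest on the permutation invariance of $\ELBOeta$ together with the identification of the fibers of $\btheta \mapsto \nu_N^{\btheta}$ with permutation orbits. The only difference is that you spell out the verification of invariance (in particular the exchangeability-based change of variables in the data term of \eqref{eq:defrmGN}) that the paper merely asserts.
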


\begin{proof}
Denote by $\scrS_N$ the set of permutations $\tau : \{1,\ldots,N\} \to \{1,\ldots,N\}$ and for any $\btheta = (\theta_1,\ldots,\theta_N)\in\Theta$, $\tau \in\scrS_N$, $\btheta^{\tau} =  (\theta_{\tau(1)},\ldots,\theta_{\tau(N)})$. Note that for any $\tau \in \scrS_N$, $\ELBOeta(\btheta) = \ELBOeta(\btheta^{\tau})$. The proof is then completed upon using that  $\btheta \mapsto \nu_N^{\btheta}$ is a bijection from $\Xi^N/\sim$ to $ \mcp_{N}(\ThetaOne)$, where $\sim$ is the equivalence relation defined by $\btheta \sim \btheta'$ if  $\exists\tau \in \scrS_N$ s.t. $\btheta' = \btheta^{\tau}$.
\end{proof}

\Cref{propo:1} is a first step toward our results. However, its main caveat is that  $\funFN$ cannot be non-trivially extended to a functional defined for a general probability measure on $\ThetaOne$.  However, in our next result, we show that, when restricted to empirical probabilities, it is a perturbation, as $N \to \plusinfty$, of the functional $\tfunFN$ defined over all $\mcp(\ThetaOne)$ by
\begin{equation}
 \label{eq:trmFN}
 \tfunFN(\nu) = -\sum_{i=1}^p \trmGN(\nu;(x_i,y_i))
-\etaN  N \int \KL(\densityqOne_{\theta} | \priorOne) \rmd \nu(\theta) \eqsp, 
   \end{equation}
   where
   \begin{equation}
     \label{eq:trmGN}
   \trmGN(\nu;(x,y))=   \loss\left( y,  \iint\phi(\theta,z,x) \rmd\nu(\theta) \rmd \gamma(z) \right) \eqsp,
\end{equation}
and $\phi$ is given by \eqref{eq:defphi}. We now define for any
$\theta\in\ThetaOne$ and $x\in\msx$,
$\tphi(\theta,x) = \int \phi(\theta,z,x) \rmd \gamma(z)$. Consider the
following assumption:
\begin{assumption}
  \label{ass:unifbound}
  \begin{enumerate}[label=(\roman*),wide, labelwidth=!, labelindent=0pt]
  \item   \label{ass:unifboundi} There exists $\Liploss > 0$ such that for any $y \in\msy$,  the function $\tilde{y} \mapsto
    \loss(y,\tilde{y})$ is $\Liploss$-smooth: for any $\tilde{y}_1,\tilde{y}_2 \in\msy$,
    \begin{equation}
      \label{eq:10}
      \norm{\nabla_{\tilde{y}}    \loss(y,\tilde{y}_1)-\nabla_{\tilde{y}}    \loss(y,\tilde{y}_2)} \leq \Liploss \norm{\tilde{y}_1-\tilde{y}_2} \eqsp.
    \end{equation}
\item \label{ass:unifboundii} There exists $\constphi \geq 0$,  such that for any $\theta \in \ThetaOne$ and $x \in\msx$,
  \begin{equation}
    \label{eq:varphibound}
    \int \norm{\phi(\theta,z,x) - \tphi(\theta,x)}^2\rmd \gamma (z) \leq \constphi \eqsp.
  \end{equation}
\end{enumerate}
\end{assumption}
Note that \Cref{ass:unifbound}-\ref{ass:unifboundi} is satisfied for the quadratic or logistic loss if $\msy$ is bounded. We give practical conditions on the activation function $\upsigma$, the prior $\priorOne$ and the set $\ThetaOne$ to ensure that \Cref{ass:unifbound}-\ref{ass:unifboundii} holds in the case where $\scrT_{\theta}$ is supposed to be of the form \eqref{eq:examplettheta} for any $\theta\in\ThetaOne$, later in this section after stating our general results.
\begin{theorem}
  \label{theo:unifbound}
  Assume \Cref{ass:unifbound}. Then, there exists $C \geq 0$ such that for any $N,p \in \nsets$, $\{(x_i,y_i)\}_{i=1}^p \in (\msx \times \msy)^p$,  $\btheta \in \ThetaOne^N$ and $\etaN >0$,
  \begin{equation}
    \label{eq:2}
    \abs{    \ELBOeta(\btheta) -  \tfunFN(\nu_N^{\btheta})} \leq C p/N \eqsp,
\end{equation}
where $\nu_N^{\btheta}$ is defined in \eqref{eq:empiricalmeasure}.
\end{theorem}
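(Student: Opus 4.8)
The plan is to exploit the fact that the two functionals differ only through their data-fitting terms. Evaluating the penalty of $\tfunFN$ in \eqref{eq:trmFN} at the empirical measure $\nu_N^{\btheta}$ gives $\etaN N \int \KL(\densityqOne_{\theta}\,|\,\priorOne)\,\rmd\nu_N^{\btheta}(\theta) = \etaN \sum_{j=1}^N \KL(\densityqOne_{\theta_j}\,|\,\priorOne)$, which is exactly the penalty appearing in $\ELBOeta$ by \eqref{eq:finitefunctional}; the factor $N$ in \eqref{eq:trmFN} is tailored precisely so that this cancellation occurs. Hence
\[
\ELBOeta(\btheta) - \tfunFN(\nu_N^{\btheta}) = -\sum_{i=1}^p \left( \rmG^{N}_{\Theta}(\btheta;(x_i,y_i)) - \trmGN(\nu_N^{\btheta};(x_i,y_i)) \right),
\]
and by the triangle inequality it suffices to bound each summand by $C/N$ for a constant $C$ independent of $N$, $p$, and the data; summing over the $p$ points then yields \eqref{eq:2}.

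Fix a datapoint $(x,y)$. I would introduce the random vector $S(\bz) = N^{-1}\sum_{j=1}^N \phi(\theta_j,z_j,x)$ with $\bz=(z_1,\dots,z_N)\sim\gammaOne^{\otimes N}$. By independence of the coordinates and the definition $\tphi(\theta,x)=\int\phi(\theta,z,x)\,\rmd\gamma(z)$, its mean is $\E[S(\bz)] = N^{-1}\sum_{j=1}^N \tphi(\theta_j,x) =: \bar{S}$, which is exactly the second argument of the loss in $\trmGN(\nu_N^{\btheta};(x,y))$ after inserting $\nu_N^{\btheta}$ into \eqref{eq:trmGN}. Therefore, comparing with \eqref{eq:defrmGN}, we have $\rmG^{N}_{\Theta}(\btheta;(x,y)) = \E[\loss(y,S(\bz))]$ while $\trmGN(\nu_N^{\btheta};(x,y)) = \loss(y,\E[S(\bz)])$, so their difference is a Jensen gap.

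The central step is to control this gap via the $\Liploss$-smoothness of $\tilde y\mapsto\loss(y,\tilde y)$ granted by \Cref{ass:unifbound}-\ref{ass:unifboundi}. Smoothness yields the descent-type bound $\abs{\loss(y,u)-\loss(y,\bar{S})-\langle\nabla_{\tilde y}\loss(y,\bar{S}),u-\bar{S}\rangle} \le (\Liploss/2)\norm{u-\bar{S}}^2$ for every $u$. Applying this with $u=S(\bz)$ and taking expectation over $\bz$, the first-order term drops out because $\E[S(\bz)-\bar{S}]=0$, leaving $\abs{\E[\loss(y,S(\bz))]-\loss(y,\bar{S})} \le (\Liploss/2)\,\E[\norm{S(\bz)-\bar{S}}^2]$. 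To finish, I write $S(\bz)-\bar{S} = N^{-1}\sum_{j=1}^N(\phi(\theta_j,z_j,x)-\tphi(\theta_j,x))$ as a sum of independent, mean-zero vectors; the cross terms vanish and \Cref{ass:unifbound}-\ref{ass:unifboundii} gives $\E[\norm{S(\bz)-\bar{S}}^2] = N^{-2}\sum_{j=1}^N \int\norm{\phi(\theta_j,z,x)-\tphi(\theta_j,x)}^2\,\rmd\gamma(z) \le \constphi/N$. Combining, each summand is at most $\Liploss\constphi/(2N)$, so \eqref{eq:2} holds with $C=\Liploss\constphi/2$.

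I expect the only delicate point to be the vanishing of the first-order Taylor term, which hinges on the identity $\bar{S} = \E[S(\bz)]$, \ie, on matching the argument of the loss in $\trmGN$ with the mean of the Monte Carlo average defining $\rmG^{N}_{\Theta}$. This is exactly what upgrades the naive $O(1/\sqrt N)$ fluctuation estimate one would obtain from a mere Lipschitz argument to the sharper $O(1/N)$ rate, through the combination of second-order smoothness and the independence of the $z_j$.
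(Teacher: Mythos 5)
Your proposal is correct and follows essentially the same route as the paper: the KL terms cancel exactly at the empirical measure, and the data-fitting gap is a Jensen gap controlled by the descent lemma for the $\Liploss$-smooth loss (the paper cites Nesterov's Lemma 1.2.3 for precisely this), with the first-order term vanishing in expectation and the variance of the independent sum bounded by $\constphi/N$ via \Cref{ass:unifbound}-\ref{ass:unifboundii}. Your write-up simply makes explicit the steps the paper compresses into a single display, arriving at the same constant $C=\Liploss\constphi/2$.
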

\begin{proof}
Using that for any $y \in\msy$,   the function 
$\tilde{y} \mapsto \loss(y,\tilde{y})$ is $\Liploss$-smooth, we get by \citep[Lemma 1.2.3]{nesterov:2004}, \Cref{propo:1} and the definitions \eqref{eq:finitefunctional}-\eqref{eq:defrmGN}-\eqref{eq:trmFN}-\eqref{eq:trmGN},
  \begin{align}
    \label{eq:4}
    &    \abs{    \funFN(\nu_N^{\btheta}) -  \tfunFN(\nu_N^{\btheta})} \leq \frac{\Liploss}{2 N^{2}}\sum_{i=1}^p \int_{}\, \norm{\sum_{j=1}^N \phi(\theta_j,z_j,x_i) - \tphi(\theta_j,x_i)}^2 \rmd \gamma^{\otimes N}(\bz)\\
    & \leq \frac{\Liploss}{2 N^2}\sum_{i=1}^p \sum_{j=1}^N \int_{} \,\norm{ \phi(\theta_j,z_j,x_i) - \tphi(\theta_j,x_i)  }^2  \rmd \gamma(z) \eqsp.
  \end{align}
The proof follows from \Cref{ass:unifbound}-\ref{ass:unifboundii}.
\end{proof}
We can also show that minimization of $\funFN$ over $\mcp_{N}(\ThetaOne^N)$ provides a good approximation for the minimization problem corresponding to $\tfunFN$ for sufficiently large $N$.
\begin{theorem}
  \label{theo:2}
  Assume \Cref{ass:unifbound} and that there exists $\nus \in \mcp(\ThetaOne)$ such that $\nus \in \argmax_{\mcp(\ThetaOne)} \tfunFN$. Suppose in addition that there exists $\constphinus \geq 0$ such that for any $x \in\msx$,
  \begin{equation}
    \label{eq:condition_theo_2}
    \int \norm{\tphi(\theta,x)-  \int \tphi(\theta',x) \rmd \nus(\theta')}^2 \rmd \nus(\theta)
    \leq \constphinus  \eqsp.
  \end{equation}
  Then, there exists $C \geq 0$ such that for any $N,p \in \nsets$, $\{(x_i,y_i)\}_{i=1}^p \in (\msx \times \msy)^p$ and $\etaN >0$,
  \begin{equation}
    \label{eq:2}
    \abs{    \sup_{\btheta \in \ThetaOne^N}\ELBOeta(\btheta) -  \sup_{\nu \in \mcp(\ThetaOne)}\tfunFN(\nu)} \leq C p/N \eqsp.
\end{equation}
\end{theorem}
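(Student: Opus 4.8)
The plan is to bound the gap $\bigl|\sup_{\btheta \in \ThetaOne^N}\ELBOeta(\btheta) - \sup_{\nu \in \mcp(\ThetaOne)}\tfunFN(\nu)\bigr|$ by proving two one-sided inequalities, each at order $Cp/N$. The upper bound is essentially free from \Cref{theo:unifbound}: for every $\btheta \in \ThetaOne^N$ the empirical measure $\nu_N^{\btheta}$ lies in $\mcp(\ThetaOne)$, so $\ELBOeta(\btheta) \leq \tfunFN(\nu_N^{\btheta}) + Cp/N \leq \sup_{\nu \in \mcp(\ThetaOne)}\tfunFN(\nu) + Cp/N$, and taking the supremum over $\btheta$ gives one direction.

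For the reverse inequality the supremum on the right is attained at $\nus$ by hypothesis, so it suffices to produce a single $\btheta \in \ThetaOne^N$ with $\tfunFN(\nu_N^{\btheta}) \geq \tfunFN(\nus) - Cp/N$ and then apply \Cref{theo:unifbound} once more to return to $\ELBOeta$. I would build such a $\btheta$ by a first-moment (probabilistic) argument: draw $\theta_1,\dots,\theta_N$ i.i.d.\ from $\nus$, i.e.\ $\btheta \sim \nus^{\otimes N}$, and show $\E[\tfunFN(\nu_N^{\btheta})] \geq \tfunFN(\nus) - Cp/N$; a good realization then exists because the supremum over outcomes dominates the mean.

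The crux is that the two terms of $\tfunFN$ react differently to this averaging. The penalty $\etaN N \int \KL(\densityqOne_{\theta} | \priorOne)\rmd\nu(\theta)$ is linear in $\nu$, so its expectation is \emph{exact}, $\E\bigl[N\int \KL\,\rmd\nu_N^{\btheta}\bigr] = N\int \KL\,\rmd\nus$, and it cancels identically against the same term of $\tfunFN(\nus)$; crucially this cancellation does not involve $\etaN$, which is what keeps $C$ independent of $\etaN$. For the data term, $\trmGN(\nu_N^{\btheta};(x_i,y_i)) = \loss\bigl(y_i, N^{-1}\sum_j \tphi(\theta_j,x_i)\bigr)$ depends on $\btheta$ only through the empirical mean $N^{-1}\sum_j \tphi(\theta_j,x_i)$, whose expectation under $\nus^{\otimes N}$ is $\int \tphi(\theta,x_i)\rmd\nus(\theta)$ and whose variance is $N^{-1}$ times the quantity in \eqref{eq:condition_theo_2}, hence at most $\constphinus/N$. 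Using the $\Liploss$-smoothness of $\loss$ (\Cref{ass:unifbound}-\ref{ass:unifboundi}) exactly as in the proof of \Cref{theo:unifbound} (the descent-lemma inequality of [Nesterov, Lemma 1.2.3]), a second-order expansion about the mean kills the first-order term in expectation and controls the remainder by $(\Liploss/2)$ times the variance, giving $\bigl|\E[\trmGN(\nu_N^{\btheta};(x_i,y_i))] - \trmGN(\nus;(x_i,y_i))\bigr| \leq \Liploss\constphinus/(2N)$; summing over the $p$ data points yields the $Cp/N$ bound.

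The main obstacle is this lower-bound direction, and specifically making the approximating construction tight and uniform. The delicate points are keeping $C$ free of $\etaN$, $N$, $p$ and the data — which works precisely because the KL term is linear (hence cancels exactly) while \eqref{eq:condition_theo_2} bounds the loss-term variance uniformly in $x$ — and justifying the first-moment existence argument even though $\tfunFN$ is $\rset\cup\{+\infty\}$-valued. For the latter I would note that any maximizer $\nus$ must satisfy $\int \KL(\densityqOne_{\theta}|\priorOne)\rmd\nus < \infty$ (otherwise $\tfunFN(\nus) = -\infty$ and the claim is trivial), so $\sum_j \KL(\densityqOne_{\theta_j}|\priorOne)$ is almost surely finite and the expectation is well defined. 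The rest is routine bias/variance bookkeeping as above.
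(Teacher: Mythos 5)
Your proposal is correct and follows essentially the same route as the paper: the upper bound comes directly from \Cref{theo:unifbound}, and the lower bound from averaging $\ELBOeta(\btheta)$ over $\btheta \sim \nus^{\otimes N}$, using the $\Liploss$-smoothness of the loss together with the variance bound \eqref{eq:condition_theo_2} to control the data term at rate $\Liploss p \constphinus/N$. Your explicit observation that the KL penalty is linear in $\nu$ and therefore cancels \emph{exactly} in expectation (working with the signed difference rather than its absolute value) is in fact the cleaner way to write the paper's step \eqref{eq:unfimin2}, where taking absolute values inside the integral would otherwise leave the $\etaN N$-scaled KL fluctuation unaccounted for.
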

\begin{proof}
Using \Cref{theo:unifbound}, we easily get that for any $\btheta \in\ThetaOne^N$,
  \begin{equation}
    \label{eq:5}
    \ELBOeta(\btheta)  \leq  \tfunFN(\nu_N^{\btheta}) + Cp/N \leq \sup_\nu \tfunFN(\nu) +C p/N \eqsp,
 \end{equation}
 for some $C \geq 0$ independent of $\{(x_i,y_i)\}_{i=1}^p \in (\msx \times \msy)^p$ and $\etaN >0$.
 On the other hand, we have using $\nus$ is a maximizer of $\tfunFN$,
\begin{multline}
\label{eq:unfimin1}
\sup_{\btheta \in \ThetaOne^N}   \ELBOeta(\btheta)  \geq  \\ 
\sup_\nu \tfunFN(\nu) - \int \abs{ \ELBOeta(\btheta) - \tfunFN(\nu_N^{\btheta}) }\rmd \nus^{\otimes N}(\btheta) -  \int \abs{ \tfunFN(\nu_N^{\btheta}) - \tfunFN(\nus)} \rmd \nus^{\otimes N}(\btheta) \eqsp.
\end{multline}
Using  \Cref{ass:unifbound}, for any $y \in\msy$,  
$\tilde{y} \mapsto \loss(y,\tilde{y})$ is $\Liploss$-smooth, we get by \citep[Lemma 1.2.3]{nesterov:2004},
 \begin{multline}
\label{eq:unfimin2}
 \hspace{-0.5cm} \int \abs{ \tfunFN(\nu_N^{\btheta}) - \tfunFN(\nus)} \rmd \nus^{\otimes N}(\btheta)  \leq \Liploss \sum_{i=1}^p \int \norm{\frac{1}{N} \sum_{j=1}^N \tphi(\theta_j,x_i) - \int \tphi(\theta',x_i)\rmd \nus(\theta')}^2 \rmd \nus^{\otimes N}(\btheta)\\
  \leq \Liploss p \constphinus/N \eqsp.
\end{multline}

Combining \eqref{eq:unfimin1}, \eqref{eq:unfimin2} and \Cref{theo:unifbound} completes the proof.
\end{proof}
We now set $\etaN = \etawN p/N$ with $\etawN >0$. With this particular choice, $\tfunFN$ depends only on the number of observations $p$ but no longer on the number of neurons $N$. We denote, for that particular choice of $\etaN$,
\begin{equation}
\funFinftyp (\nu)
= p^{-1} \tfunFN(\nu) 
= -\frac{1}{p}\sum_{i=1}^p \trmGN(\nu;(x_i,y_i))
-\etawN  \int \KL(\densityqOne_{\theta} | \priorOne) \rmd \nu(\theta) \eqsp.
\end{equation}
In our next result, we show that with high probability, $\funFinftyp(\nu)$ provides a good approximation of the function
\begin{equation}
  \label{eq:6}
  \risk(\nu) =- \int \trmGN(\nu;(x,y)) \rmd \pixy(x,y) 
  -\etawN \int \KL(\densityqOne_{\theta} | \priorOne) \rmd \nu(\theta) \eqsp,
\end{equation}
where $\trmGN$ is defined by \eqref{eq:trmGN}. The first term is an integrated log-likelihood with respect to the joint distribution of observations. The second term is a form of penalization of the complexity of  $\nu$. It is interesting to note that this penalty term is defined from the prior $\prior$ and the variational family $\family_{\Theta}$.
\begin{proposition}
  \label{propo:2}
  Assume \Cref{ass:unifbound} and that there exists $M_G>0$, s.t. for any $\nu \in \mcp(\ThetaOne)$, $0\leq \trmGN(\nu;(x,y)) \leq M_G$, for $\pixy$-almost all $(x,y) \in\msx \times \msy$. Suppose in addition that $\{(x_i,y_i)\}_{i=1}^p$ are \iid~with distribution $\pixy$. Then, for any $\nu \in \mcp(\ThetaOne)$ and $\delta >0$,
  with probability $1-\delta$ at least, it holds
  \begin{equation}
    \label{eq:3}
\abs{ \funFinftyp(\nu) -     \risk(\nu)} \leq M_G \sqrt{\log(\delta/2)/(2p)} \eqsp.
  \end{equation}
\end{proposition}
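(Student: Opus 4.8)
The plan is to exploit the fact that the penalization terms in $\funFinftyp$ and $\risk$ are identical and therefore cancel, reducing the claim to a concentration statement for a bounded i.i.d.\ average. First I would subtract the two functionals: since both contain the same term $-\etawN \int \KL(\densityqOne_{\theta} | \priorOne) \rmd \nu(\theta)$, these cancel and one is left with
\begin{equation}
  \funFinftyp(\nu) - \risk(\nu) = \int \trmGN(\nu;(x,y)) \rmd \pixy(x,y) - \frac{1}{p}\sum_{i=1}^p \trmGN(\nu;(x_i,y_i)) \eqsp.
\end{equation}
Thus the KL contribution, which is the genuinely ``variational'' and potentially unbounded part of the objective, plays no role in the deviation bound, and only the data-fitting term matters.

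Next, with $\nu \in \mcp(\ThetaOne)$ held fixed, I would set $Z_i = \trmGN(\nu;(x_i,y_i))$ for $i \in \{1,\ldots,p\}$. Since the samples $\{(x_i,y_i)\}_{i=1}^p$ are i.i.d.\ with law $\pixy$, the variables $Z_1,\ldots,Z_p$ are i.i.d., and by the boundedness hypothesis they satisfy $0 \leq Z_i \leq M_G$ almost surely, with common mean $\E[Z_1] = \int \trmGN(\nu;(x,y)) \rmd \pixy(x,y)$. The quantity to control is exactly $\E[Z_1] - p^{-1}\sum_{i=1}^p Z_i$, the gap between the empirical and population averages of a bounded i.i.d.\ sequence. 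I would then invoke Hoeffding's inequality, which gives, for any $t > 0$,
\begin{equation}
  \PP\left( \abs{ \frac{1}{p}\sum_{i=1}^p Z_i - \E[Z_1] } \geq t \right) \leq 2\exp\left( -\frac{2 p t^2}{M_G^2} \right) \eqsp.
\end{equation}
Setting the right-hand side equal to $\delta$ and solving for $t$ yields $t = M_G \sqrt{\log(2/\delta)/(2p)}$, which is precisely the announced rate, and taking the complementary event gives the bound with probability at least $1-\delta$.

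I do not expect any serious obstacle here: the result is essentially immediate once the cancellation of the KL terms is noticed, and the only structural input needed is the uniform bound $0 \leq \trmGN(\nu;\cdot) \leq M_G$, which is supplied by assumption. The one point worth emphasizing is that the estimate is \emph{pointwise} in $\nu$ rather than uniform over $\mcp(\ThetaOne)$; consequently no covering-number or chaining argument is required, and a single application of Hoeffding suffices. The harder companion statement would be a version uniform in $\nu$, which would demand controlling the complexity of the class $\{(x,y) \mapsto \trmGN(\nu;(x,y)) : \nu \in \mcp(\ThetaOne)\}$, but that is not what is claimed here. (I note in passing that the statement writes $\log(\delta/2)$, which for $\delta \in (0,1)$ should read $\log(2/\delta)$ so that the square root is well defined.)
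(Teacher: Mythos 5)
Your proposal is correct and follows exactly the route the paper takes: the paper's proof is a one-line application of Hoeffding's inequality to the bounded i.i.d.\ variables $\trmGN(\nu;(x_i,y_i))$, $i=1,\dots,p$, after the KL terms cancel. Your observation that the displayed bound should read $\log(2/\delta)$ rather than $\log(\delta/2)$ is also a fair catch of a typo in the statement.
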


 The proof follows from applying Hoeffding's inequality on the bounded i.i.d. variables $\trmGN(\nu;(x_i,y_i))$ for $i=1,\dots,p$. 

 It is worth noting that this limiting risk is  similar to the one 
 obtained in the analysis of the limiting behavior of gradient descent type algorithms for one or two hidden layers in the overparameterized regime, by \cite{chizat2018global,rotskoff2019global,mei2018mean,tzen2019neural,de2020quantitative} - see \eqref{eq:7}. Moreover, the maximization of the ELBO using gradient descent  can be viewed as a temporal and spatial discretization of the Wasserstein gradient flow of the limiting function \eqref{eq:6}.

For simplicity, we illustrate our result for  a mean-field variational family associated with the family of $\rmC^1$-diffeomorphisms on $\rset^{d}$, $\{\Tscr_{\theta}\,:\, \theta \in \ThetaOne\}$ given in \eqref{eq:examplettheta} for $\ThetaOne \subset \rset^d \times (\rset_+^*)^d$. Consider the following assumption.
\begin{assumption}
  \label{ass:gamma_h}
  \begin{enumerate}[label=(\roman*),wide, labelwidth=!, labelindent=0pt]
  \item  The subset $\ThetaOne$ is a compact set of $\rset^d \times (\rset_+^*)^d$, and $\X,\Y$ are compact sets of $\R^{\dx},\R^{\dy}$. 
  \item The probability measure $\gamma$ satisfies $\int \norm{z}^4 \rmd \gamma(z) < \plusinfty$.
  \item  For any $x \in \msx$, there exists $\Liph \geq 0$ such that the function $b \mapsto h(b,x)$ is $\Liph$-Lipschitz on $\rset^{\dx}$ and $\sup_{x\in\msx\, , \, b \in \rset^{\dx}} \abs{h(b,x)}/(1+\norm{b}) < \plusinfty$.  
  \item The prior density $\priorOne$ is positive on $\rset^d$ and satisfies $\theta \mapsto \KL(\densityqOne_{\theta}|\priorOne)$ is continuous on $\ThetaOne$.
  \end{enumerate}
\end{assumption}
Note that the condition that for any $x \in \msx$, the condition $b \mapsto h(b,x)$ is $\Liph$-Lipschitz is automatically satisfied for $h$ of the form \eqref{eq:exampleh} with $\upsigma$ the RELU or sigmoid function if $\msx$ is bounded. Also, we verify in the next proposition that $\theta \mapsto \KL(\densityqOne_{\theta}|\priorOne)$ is  continuous if $\priorOne$ and $\gamma$ are non-degenerate Gaussian distributions.

\begin{proposition}
  \label{propo:verify_A1}
  Assume \Cref{ass:unifbound}-\ref{ass:unifboundi} and \Cref{ass:gamma_h}. Then \Cref{ass:unifbound}-\ref{ass:unifboundii} and the conditions of \Cref{theo:2} hold.
\end{proposition}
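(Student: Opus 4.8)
The plan is to verify, in order, the second moment bound in \Cref{ass:unifbound}-\ref{ass:unifboundii}, the existence of a maximizer $\nus$ of $\tfunFN$ over $\mcp(\ThetaOne)$, and finally the variance bound \eqref{eq:condition_theo_2}; all three rest on a single uniform growth estimate for $\phi$. Writing $\theta=(\mu,\sigma)$, $z=(z_b,z_a)$ and $\Tscr_\theta(z)=\mu+\sigma\odot z=(b,a)$ with $b=\mu_b+\sigma_b\odot z_b$, $a=\mu_a+\sigma_a\odot z_a$, definition \eqref{eq:defphi} gives $\phi(\theta,z,x)=a\,h(b,x)$, so $\norm{\phi(\theta,z,x)}=\norm{a}\,\abs{h(b,x)}$. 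First I would combine the growth condition $\sup_{x,b}\abs{h(b,x)}/(1+\norm{b})<\plusinfty$ from \Cref{ass:gamma_h} with the compactness of $\ThetaOne$ (which bounds $\norm{\mu_a},\norm{\sigma_a},\norm{\mu_b},\norm{\sigma_b}$) to obtain a constant $C$, independent of $\theta\in\ThetaOne$ and $x\in\msx$, with $\norm{\phi(\theta,z,x)}\leq C(1+\norm{z}^2)$. Squaring and integrating against $\gamma$, the fourth moment assumption $\int\norm{z}^4\rmd\gamma(z)<\plusinfty$ yields a finite bound on $\int\norm{\phi(\theta,z,x)}^2\rmd\gamma(z)$ uniform in $(\theta,x)$; since the left-hand side of \eqref{eq:varphibound} is the $\gamma$-variance of $\phi(\theta,\cdot,x)$, hence at most this second moment, \Cref{ass:unifbound}-\ref{ass:unifboundii} follows.

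Next I would establish continuity and boundedness of $\tphi(\cdot,x)=\int\phi(\cdot,z,x)\rmd\gamma(z)$. The uniform bound above gives $\norm{\tphi(\theta,x)}\leq C'$ for all $\theta\in\ThetaOne$, $x\in\msx$, where $C'=C\int(1+\norm{z}^2)\rmd\gamma(z)$. For continuity in $\theta$, I would fix $z,x$ and observe that $\theta\mapsto\phi(\theta,z,x)$ is continuous, since $h(\cdot,x)$ is $\Liph$-Lipschitz (hence continuous) and $\theta\mapsto\Tscr_\theta(z)$ is continuous for each fixed $z$; the domination $\norm{\phi(\theta,z,x)}\leq C(1+\norm{z}^2)$ by a $\gamma$-integrable function then yields continuity of $\theta\mapsto\tphi(\theta,x)$ by dominated convergence. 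Consequently $\nu\mapsto\int\tphi(\theta,x)\rmd\nu(\theta)=\iint\phi\,\rmd\nu\,\rmd\gamma$ is continuous for the weak topology, and composing with the loss---continuous since it is $\Liploss$-smooth---shows $\nu\mapsto\trmGN(\nu;(x,y))$ is weakly continuous. The penalty $\nu\mapsto\int\KL(\densityqOne_\theta|\priorOne)\rmd\nu(\theta)$ is likewise weakly continuous because $\theta\mapsto\KL(\densityqOne_\theta|\priorOne)$ is continuous, hence bounded, on the compact set $\ThetaOne$ by \Cref{ass:gamma_h}. Therefore $\tfunFN$ is weakly continuous and real-valued on $\mcp(\ThetaOne)$, which is weakly compact by Prokhorov's theorem as $\ThetaOne$ is a compact metric space. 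The extreme value theorem then furnishes a maximizer $\nus\in\argmax_{\mcp(\ThetaOne)}\tfunFN$, which is the first hypothesis of \Cref{theo:2}.

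Finally, condition \eqref{eq:condition_theo_2} is immediate from the uniform boundedness of $\tphi$: its left-hand side is the $\nus$-variance of $\theta\mapsto\tphi(\theta,x)$, bounded by the second moment $\int\norm{\tphi(\theta,x)}^2\rmd\nus(\theta)\leq (C')^2$ uniformly in $x$, so $\constphinus=(C')^2$ works. I expect the only delicate point to be the continuity argument for $\tphi$ and $\tfunFN$: one must check that the growth and Lipschitz hypotheses of \Cref{ass:gamma_h}, together with the compactness of $\ThetaOne$, make the dominating function genuinely uniform over $\theta\in\ThetaOne$ and $x\in\msx$, so that dominated convergence and then the extreme value theorem apply. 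Once this uniform estimate is in place, both variance bounds drop out of it and are routine.
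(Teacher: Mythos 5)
Your proposal is correct and follows essentially the same route as the paper: the same pointwise growth estimate $\norm{\phi(\theta,z,x)}\leq C(1+\norm{z}^2)$ from the decomposition $\phi=a\,h(b,x)$, compactness of $\ThetaOne$ and the fourth moment of $\gamma$ for \Cref{ass:unifbound}-\ref{ass:unifboundii}; dominated convergence for continuity and boundedness of $\tphi$ (the paper isolates this as \Cref{eq:prop_tphi}), weak continuity of $\tfunFN$ and weak compactness of $\mcp(\ThetaOne)$ for the existence of $\nus$; and the variance-bounded-by-second-moment observation for \eqref{eq:condition_theo_2}.
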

\begin{proof} We first prove \Cref{ass:unifbound}-\ref{ass:unifboundii}. Recall, that for $\theta,z,x\in \ThetaOne\times\R^d\times\X$,
  $\phi(\theta, z, x) = s(\scrT_{\theta}( z), x)$ where $\scrT_{\theta}( z) = \mu + \sigma\odot z$. Therefore, by \eqref{eq:deffw}, decomposing each weight as $w =(a,b)$ where $a$ is the output weight and $b$ is the hidden weight,
  $  \phi(\theta, z, x) = a h(b,x) \eqsp,$
  with $ a = \mu_a + \sigma_a \odot z_a$ and $b = \mu_b +\sigma_b \odot z_b$, $\theta = (\theta_a,\theta_b)$, $\theta_a = (\mu_a,\sigma_a) \in \rset^{\dy} \times (\rset_+^*)^{\dy}$ and $\theta_b = (\mu_b,\sigma_b) \in \rset^{\dx} \times (\rset_+^*)^{\dx}$.
  Hence,
  \begin{equation}
    \label{eqbounda}
    \|a\|^2\le 2\|\mu_a\|^2+2\|\sigma_a\|^2\|z_a\|^2 \le 2\|\theta\|^2(1+\|z_a\|^2) \eqsp,\quad
    \norm{b}^2 \leq 2\|\theta\|^2(1+\|z_b\|^2) \eqsp.
  \end{equation}
  Also, by~\Cref{ass:gamma_h}, there exist $C_0,C_1 \geq 0$ such that for any $x,b$, $|h(b,x)|\le C_0 + C_1\|b\|$. Hence, 
  we have for any $\theta \in\ThetaOne$, $z \in \rset^d$ and $x \in\msx$,
\begin{align}
&\|\phi(\theta,z,x)\|^2\le \|a\|^2(C_0+C_1\|b\|)^2\\
  &
  \label{eq:bound_phi_z_x}\le 2\norm{\theta}^2(1+\norm{z_a}^2)[C_0+2 C_1\norm{\theta}(1+\norm{z_b})^{\frac{1}{2}}]^2 
\le C_3 (1+\norm{z}^4)(1+\norm{\theta}^2) \eqsp,
\end{align}
for some constant $C_3 >0$. As $\ThetaOne$ is compact and $\int \|z\|^4\rmd\gamma(z)<+\infty$,  it follows that \Cref{ass:unifbound}-\ref{ass:unifboundii} holds.
We now show that $\argmax_{\mcp(\ThetaOne)} \tfunFN \neq \emptyset$.
By  \Cref{eq:prop_tphi} in the supplement, $\tphi$ is bounded and for any $x \in \msx$, $\theta \mapsto \tphi(\theta,x)$ is continuous. Using that under \Cref{ass:gamma_h} for any $y \in\msy$, $\tilde{y} \mapsto \loss(y,\tilde{y})$ is continuous, it follows that $\nu \mapsto    \trmGN(\nu;(x,y))$ is continuous for the weak topology on $\mcp(\ThetaOne)$ for any $(x,y)\in\msx\times \msy$. In addition, since $\theta\mapsto \KL(\densityqOne_{\theta} | \prior)$ is continuous, we get since $\ThetaOne$ is compact that  $\nu \mapsto\int \KL(\densityqOne_{\theta} | \priorOne) \rmd \nu(\theta)$ is continuous for the weak topology.  It follows that $\nu \mapsto \tfunFN(\nu)$ is continuous for the weak topology. Using  $\ThetaOne$ is compact, $\cP(\ThetaOne)$ is compact for the weak topology by \citet[Theorem 5.1.3]{ambrosio2008gradient}, and it follows that $\argmax_{\mcp(\ThetaOne)} \tfunFN \neq \emptyset$. 
The last condition \eqref{eq:condition_theo_2} of  \Cref{theo:2} easily follows from \Cref{eq:prop_tphi}.
\end{proof}


\section{Experiments}\label{sec:experiments}

In this section we illustrate our  findings and their practical implications for image classification on standard datasets (MNIST, CIFAR-10). The reader may refer to \Cref{sec:additional_experiments} for additional experiments, including on regression tasks, that highlight the importance of rescaling the ELBO. Our code is available at \url{https://github.com/THuix/bnn_mfvi}. In this section, we illustrate the influence of the parameter $\tau$ through different metrics. 



\textbf{Evaluation.}  Let $\cD=(x_i,y_i)_{i=1}^p$  be a dataset, where $y_i=c\in \{1,\dots,n_l\}$ is a discrete class label. For an input $x\in \X$, the predictive probability of a class $c$ by a neural network with weights $\bw$ is  defined by $\Psi_c(f_{\bw}(x))$, where $\Psi_c(f_{\bw}(x))$ denotes the $c$-th component of the softmax function applied to the output $f_{\bw}(x)\in \R^{n_l}$ of the neural network.  
The cross entropy loss writes
$\lossCE(y,f_{\bw}(x)) =-\sum_{c=1}^{n_l} \tilde{y}_c \log(\Psi_c(f_{\bw}(x))) 
$, where $\tilde{y}_c$ denotes the $c$-th coordinate of a one-hot representation of the label $y$ and the Negative Log Likelihood (NLL)  $\sum_{i=1}^p \int_{\R^{N\times d}} \lossCE(y_i,f_{\bw}(x_i)) q_{\btheta}(\bw) \rmd \Leb_{d \times N}(\bw)$.
The calibration performance of the model can be estimated by the Expected Calibration Error (ECE) \cite{naeini2015obtaining}, see also \Cref{sec:def_ece}. We recall that model is calibrated if
the predictive posterior 
is the true probability for each class $c\in \{1,\dots,n_l\}$. However, since these probabilities are unknown, they have to be estimated, e.g. through ECE. 
As the NLL, ECE penalizes low probabilities assigned to correct predictions and high probabilities assigned to wrong ones; but these evalutation metrics are not strictly equivalent.
To make our prediction, for $x\in\msx$, we use  the posterior predictive distribution defined for a class $c$ as $ \int \Psi_c(f_{\bw}(x))\densityq_{\bthetas}(\bw) \rmd \Leb_{d \times N}(\bw)$ with $\bthetas$ obtained by minimization of $\ELBOeta$ by Bayes by Backprop. This integral is estimated by an empirical version
\begin{equation}\label{eq:approximatepreddistribution}
\int \Psi_c(f_{\bw}(x))\densityq_{\bthetas}(\bw)\rmd \Leb_{d \times N}(\bw) \approx \frac{1}{m} \sum_{l=1}^m \Psi_c(f_{\bw_l}(x)) \eqsp,
  \end{equation}
  where for $l=1,\dots,m$, $\bw_l$ are \iid~samples from $\densityq_{\bthetas}$. All the evaluation metrics mentioned above (NLL, ECE), as well as the accuracy are estimated using the same procedure.
We will present our results on the MNIST dataset (where $p=6.10^4$) and the CIFAR-10 dataset (where $p=5.10^4$) \cite{krizhevsky2009learning}.

\textbf{Setup.} 
We use a Linear BNN on MNIST, and ResNet20 architecture \cite{he2016deep} on CIFAR-10 \cite{simonyan2014very}. For CIFAR-10, we use the standard data augmentation techniques, see \cite{khan2018fast}. 
For each neuron, we use a centered Gaussian prior with variance $\nicefrac{1}{5}$, following  \cite{osawa2019practical}. 
We train each BNN by Bayes by Backprop \cite{blundell2015weight} with the reparametrization trick (see  \Cref{sec:bayes_by_backprop}) and using batch normalization \cite{sergey2021batch}.  

\begin{figure}[t]
   \centering
\begin{tabularx}{\columnwidth}{cccc}
\tiny Accuracy & \tiny NLL &\tiny ECE & \tiny Confidence\\
	\includegraphics[width=.22\columnwidth]{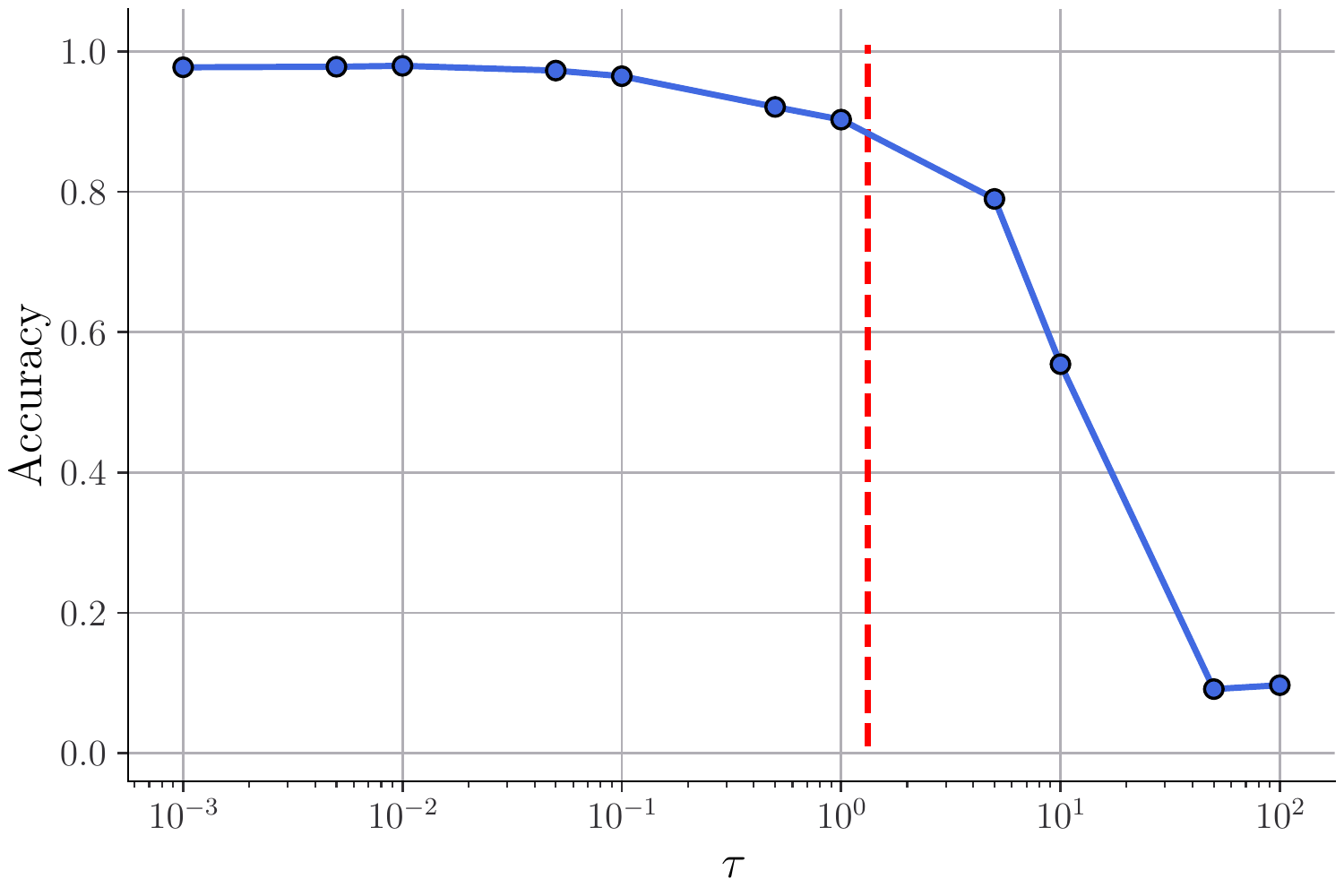}&
	\includegraphics[width=.22\columnwidth]{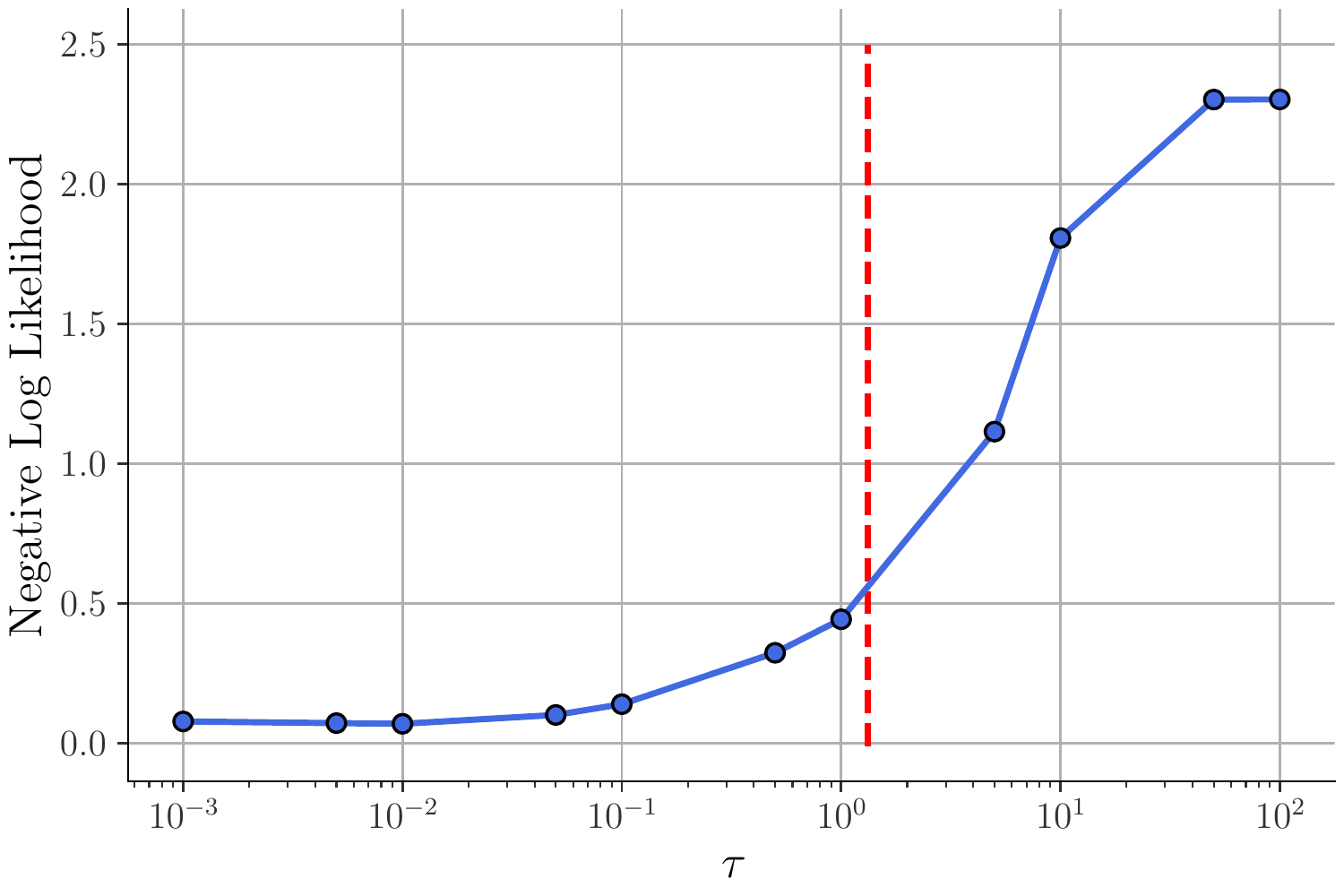}&
  \includegraphics[width=0.22\columnwidth]{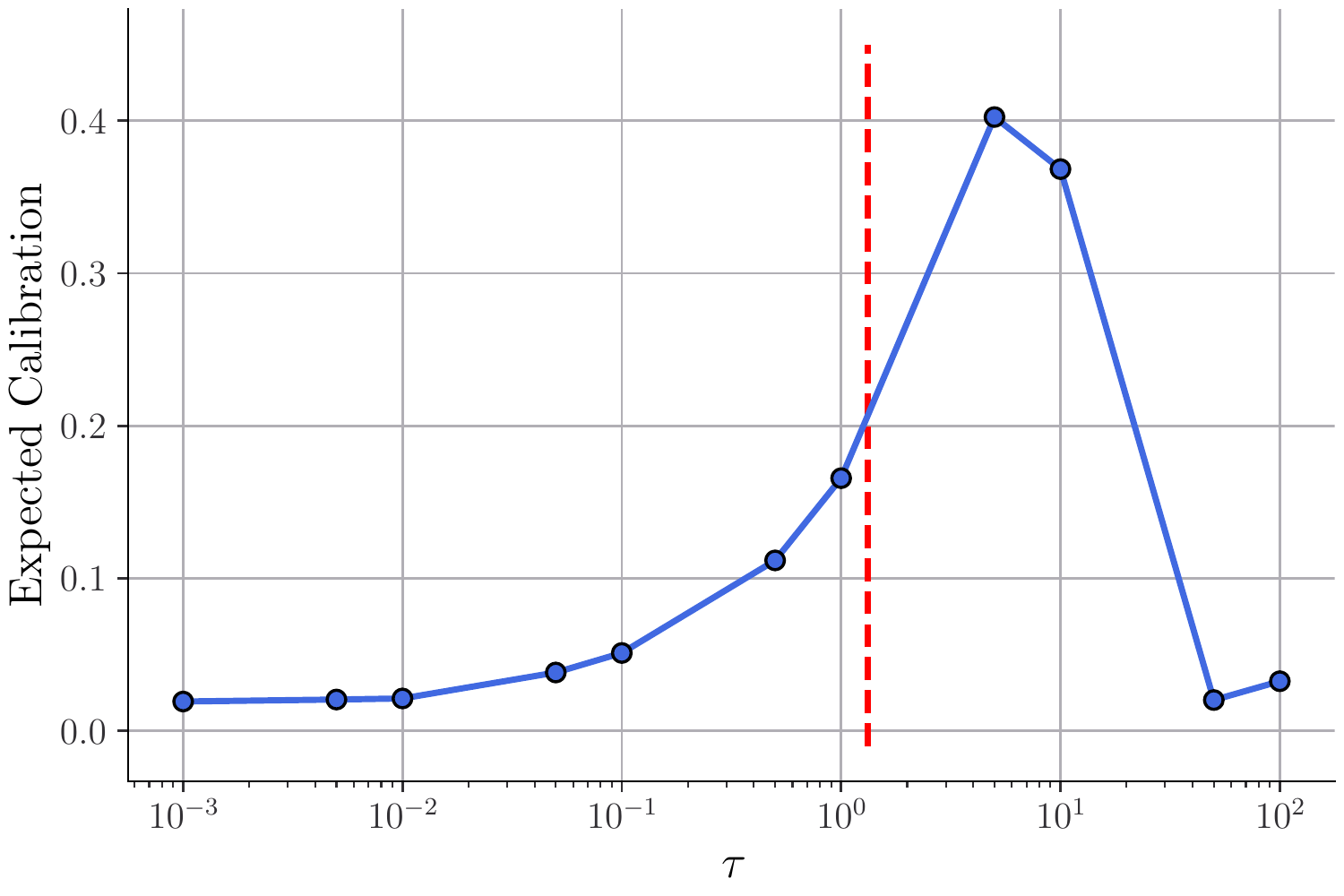} &\includegraphics[width=.22\columnwidth]{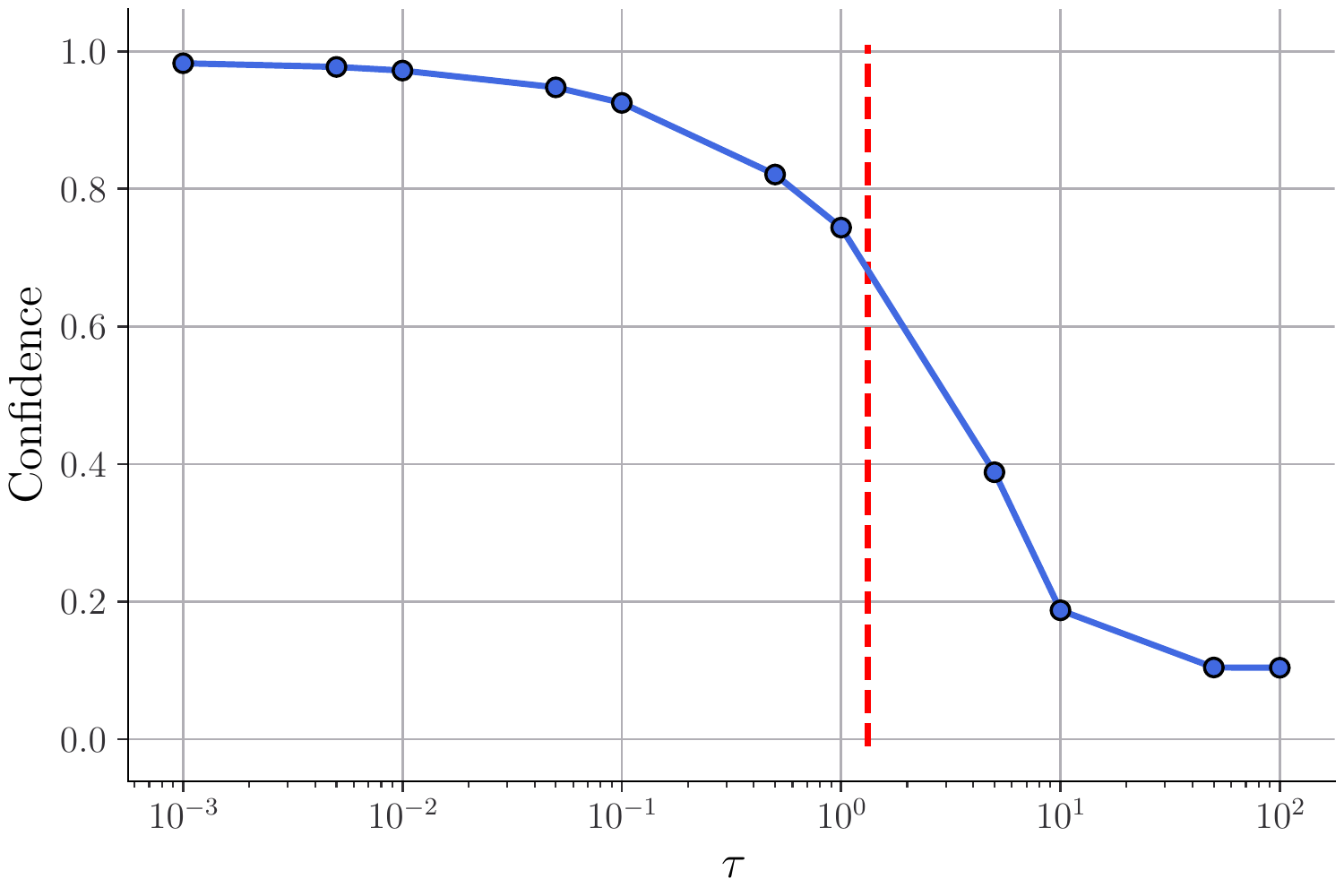}
  \\
\end{tabularx}
    \caption{Effect of the temperature for a Linear BNN trained on MNIST. No cooling $\etaN=1$ is indicated by a red line.}
    \label{fig:linear}
\end{figure}

\begin{figure}[t]
   \centering
\begin{tabularx}{\columnwidth}{cccc}
\tiny Accuracy & \tiny NLL &	\tiny ECE & \tiny Confidence \\
	\includegraphics[width=.22\columnwidth]{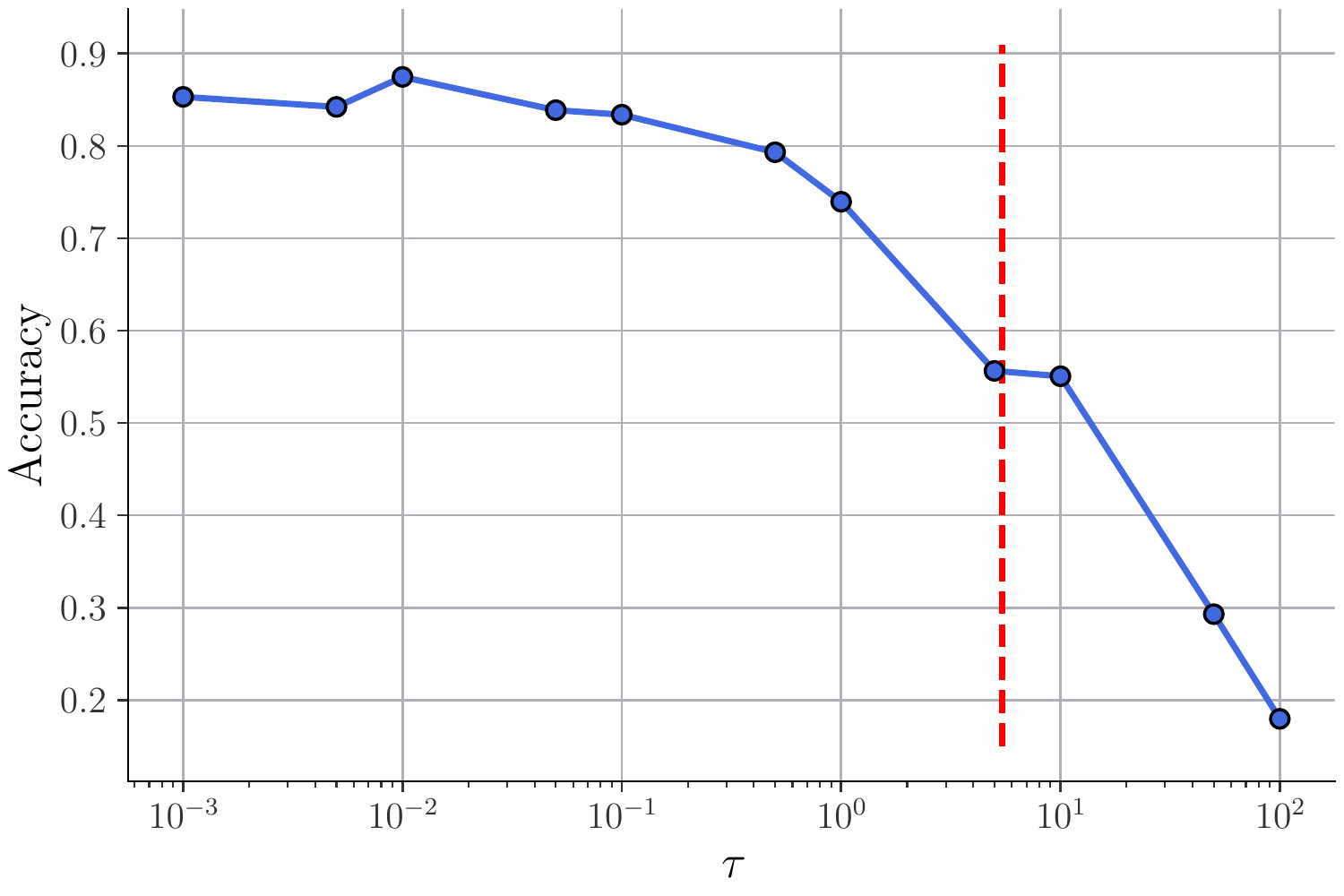}&
	\includegraphics[width=.22\columnwidth]{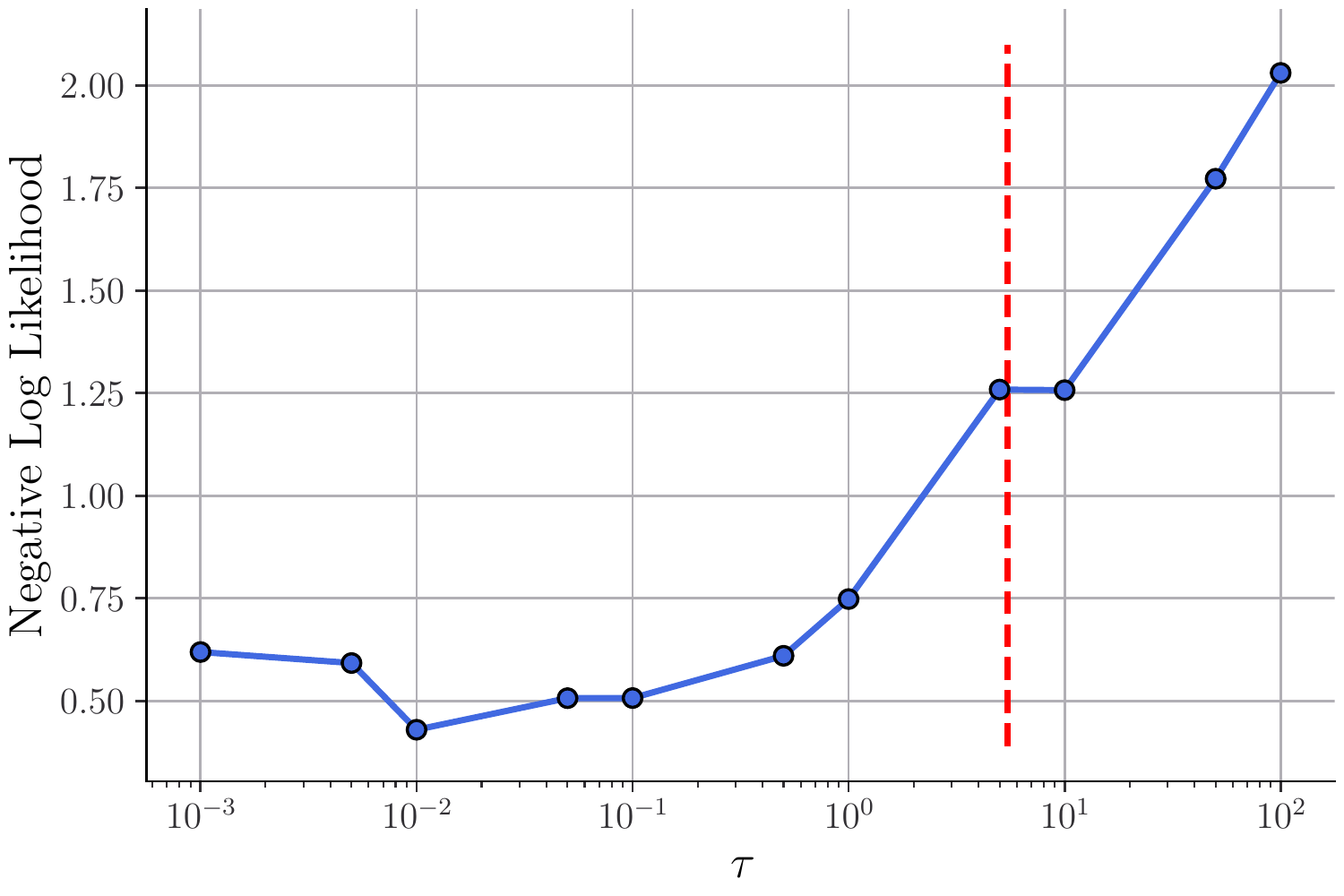}&
  \includegraphics[width=0.22\columnwidth]{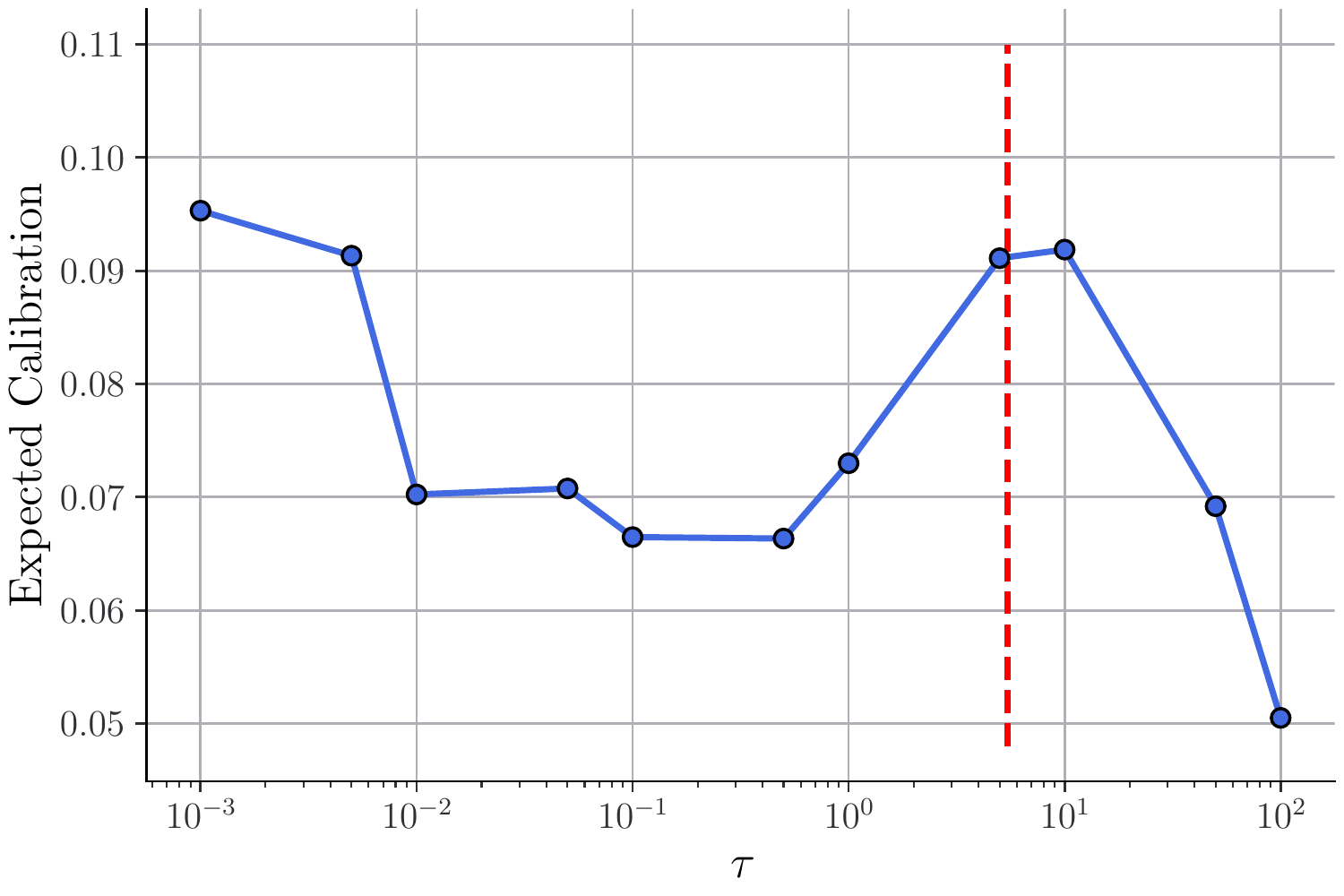} &\includegraphics[width=.22\columnwidth]{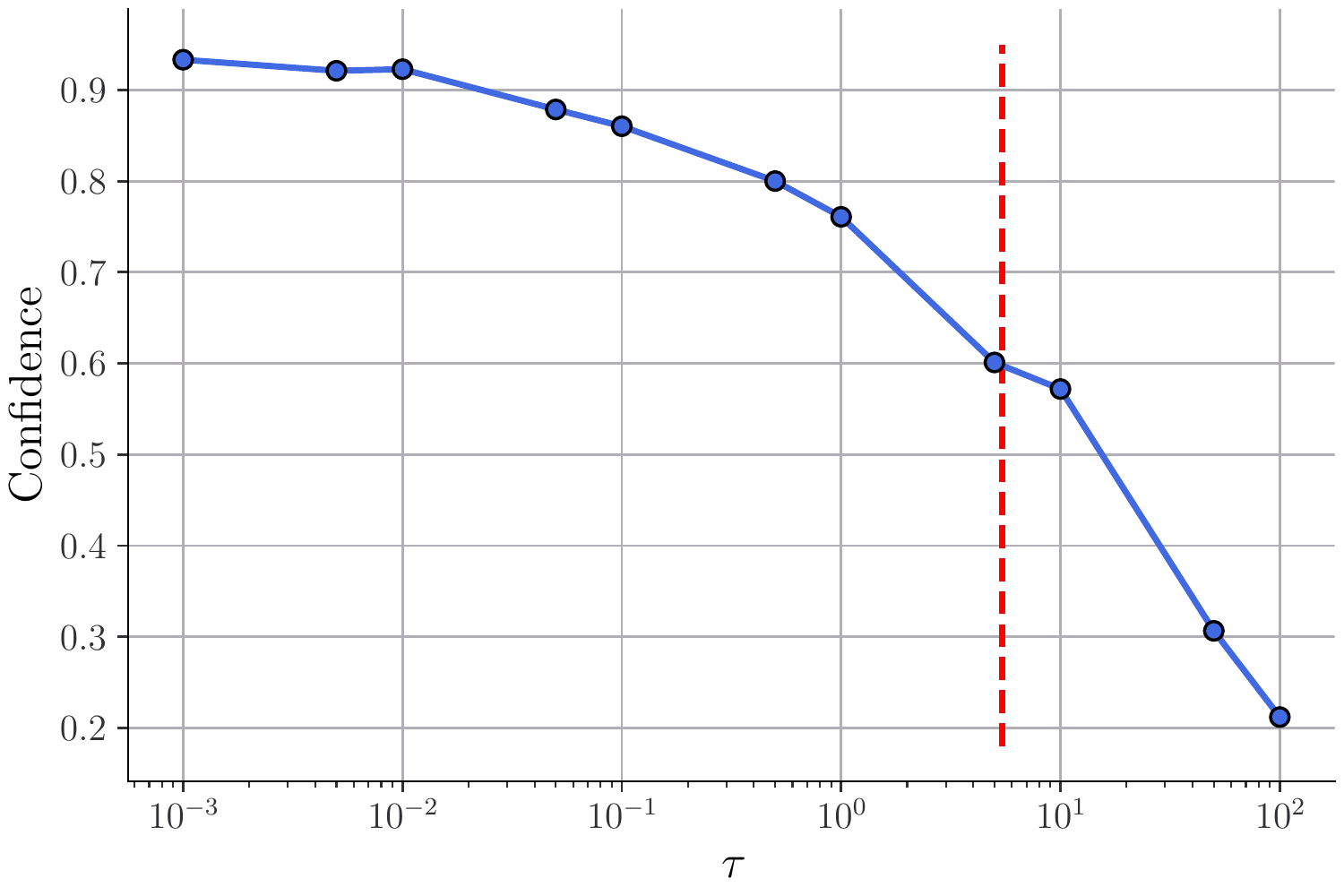}
  \\
\end{tabularx}
    \caption{Effect of the temperature for a Resnet20 trained on CIFAR-10.    No cooling $\etaN=1$ is indicated by a red line.}
    \label{fig:resnet}
\end{figure}

\Cref{fig:linear}, \ref{fig:resnet} illustrate the performance of the
different models and data sets for different values of $\tau$.  We
evaluate the models on the test set in terms of their accuracy, NLL,
ECE, and average confidence over the test set. In all experiments, we
take $m=50$ in \eqref{eq:approximatepreddistribution} to approximate a
BNN prediction and average our results over $5$ experiments for each $\tau$.
It is worth noting that for a large $\tau$, the
accuracy decreases while the NLL increases.  This
is hardly a surprise, since the KL regularization forces the VI
posterior to stay close to the prior distribution, resulting in
underfitting. At the same time, the ECE value is low because of the poor
confidence in the model, which is reflected in the accuracy.  For
small values of $\tau$, the data fitting term is privileged, so the
accuracy of the model is high, while the NLL is
low. At the same time, the confidence in the model is very high, resulting in a
low ECE.  For intermediate values of  $\tau$, the accuracy of the
models starts to decrease, but slower than the confidence in
the model, which explains an increase in ECE. We also illustrate the
different regimes for the parameter $\tau$ with additional experiments
in \Cref{sec:additional_experiments}, including analysis of the weights distribution and
out-of-distribution detection.

\section{Conclusion}

In this work, we studied BNN trained with mean-field VI in the overparameterized regime. We have highlighted both theoretically and numerically that the partially tempered $\ELBOeta$ advocated for VI for BNN effectively addresses the potential imbalance between the data fitting and KL terms. For mean-field VI and product prior distributions, we found that the cooling parameter must be chosen proportional to the ratio between the number of observations and neurons to achieve a balance between the data fitting and KL regularizer. With this choice, $\ELBOeta$ converges to a limiting functional that has the same structure as the one given by
\cite{chizat2018global,rotskoff2019global,mei2018mean,tzen2019neural,de2020quantitative}
for empirical risk minimization. We also explained why, in the absence of cooling, the KL term can dominate the data fitting term, typically leading to underfitting of the model, which in practice translates into poor results on all metrics considered. Our work therefore provides a well-grounded theoretical justification for the importance of using a partial tempering in the overparameterized framework, which completes the justifications given by  \cite{wenzel2020good,izmailov2021bayesian,nabarro2021data,noci2021disentangling,laves2021posterior}.  While our theoretical results apply to a neural network with a single hidden layer, we have shown numerically that similar conclusions can be drawn for more general NN architectures.
We emphasize that the introduction of a cooling factor into the Mean-Field VI for BNN is not without implications for the validity of Bayesian inference, and that the conclusions that can be drawn in this framework-in particular, Bayesian uncertainty quantification-must therefore be used with care (even though the accuracy, NLL, and ECE metrics obtained with Mean-Field VI compare favorably to their "classical" ERM learning counterparts).


\bibliographystyle{apalike}
\bibliography{biblio}
\newpage

\newpage
\appendix

\section{Technical result}

\begin{lemma}
  \label{eq:prop_tphi}
  Assume \Cref{ass:unifbound}-\ref{ass:unifboundi} and \Cref{ass:gamma_h}. Then for any $x \in \msx$, the function $\theta \mapsto \tphi(\theta,x)$ is continuous. In addition, there exists $C \geq 0$ such that for any $x \in \msx$ and $\theta \in\ThetaOne$, $\normLigne{\tphi(\theta,x) }\leq C$.
\end{lemma}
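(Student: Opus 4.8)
The plan is to deduce both assertions from a single pointwise growth bound on the integrand $\phi(\theta,z,x)$ together with the fourth-moment hypothesis on $\gamma$; the loss-smoothness \Cref{ass:unifbound}-\ref{ass:unifboundi} is not actually needed here. Recall from \eqref{eq:defphi} and \eqref{eq:examplettheta} that, writing $\theta=(\theta_a,\theta_b)$ with $\theta_a=(\mu_a,\sigma_a)$, $\theta_b=(\mu_b,\sigma_b)$ and $z=(z_a,z_b)$, one has $\phi(\theta,z,x)=a\,h(b,x)$ with $a=\mu_a+\sigma_a\odot z_a$ and $b=\mu_b+\sigma_b\odot z_b$. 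Using the growth and Lipschitz conditions on $h$ from \Cref{ass:gamma_h} exactly as in the derivation of \eqref{eq:bound_phi_z_x}, I would first record the bound $\norm{\phi(\theta,z,x)}^2\leq C_3(1+\norm{z}^4)(1+\norm{\theta}^2)$ for some constant $C_3>0$ independent of $(\theta,z,x)$.

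For the boundedness claim I would apply Jensen's inequality (equivalently Cauchy--Schwarz against the constant $1$, since $\gamma$ is a probability measure) to get $\norm{\tphi(\theta,x)}\leq\int\norm{\phi(\theta,z,x)}\rmd\gamma(z)\leq(\int\norm{\phi(\theta,z,x)}^2\rmd\gamma(z))^{1/2}$. Plugging in the pointwise bound and using $\int\norm{z}^4\rmd\gamma(z)<\plusinfty$ yields $\norm{\tphi(\theta,x)}^2\leq C_3(1+\norm{\theta}^2)(1+\int\norm{z}^4\rmd\gamma(z))$; since $\ThetaOne$ is compact, $\sup_{\theta\in\ThetaOne}\norm{\theta}^2<\plusinfty$, which produces the uniform constant $C$, valid for all $x\in\msx$ because the bound is $x$-independent.

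For continuity I would fix $x\in\msx$ and take $\theta_n\to\theta$ in $\ThetaOne$. For each fixed $z$, the map $\theta\mapsto(a,b)$ is affine, hence continuous, and $b\mapsto h(b,x)$ is continuous (being $\Liph$-Lipschitz under \Cref{ass:gamma_h}), so $\phi(\theta_n,z,x)\to\phi(\theta,z,x)$ pointwise in $z$. Since all $\theta_n$ lie in the compact $\ThetaOne$, we have $\sup_n\norm{\theta_n}\leq R<\plusinfty$, and the pointwise bound gives the uniform domination $\norm{\phi(\theta_n,z,x)}\leq C_3^{1/2}(1+R^2)^{1/2}(1+\norm{z}^2)$, with $z\mapsto 1+\norm{z}^2$ being $\gamma$-integrable because a finite fourth moment implies a finite second moment. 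Dominated convergence then gives $\tphi(\theta_n,x)\to\tphi(\theta,x)$, establishing continuity of $\theta\mapsto\tphi(\theta,x)$.

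The main ``obstacle'' is really only bookkeeping: one must ensure the dominating envelope is independent of $n$, which is handled by the compactness of $\ThetaOne$ (capping $\norm{\theta_n}$ uniformly), and check integrability of that envelope, which is handled by the fourth-moment assumption on $\gamma$. No delicate estimate is required beyond the elementary growth bound \eqref{eq:bound_phi_z_x}, which is why this result is relegated to a short technical lemma.
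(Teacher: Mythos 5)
Your proposal is correct and follows essentially the same route as the paper's proof: both rest on the pointwise growth bound $\normLigne{\phi(\theta,z,x)}^2 \leq C_3(1+\normLigne{z}^4)(1+\normLigne{\theta}^2)$ from \eqref{eq:bound_phi_z_x}, the compactness of $\ThetaOne$, the fourth-moment condition on $\gamma$, and dominated convergence for continuity. Your version merely spells out the dominating envelope and the Jensen step more explicitly (and your remark that \Cref{ass:unifbound}-\ref{ass:unifboundi} is not actually used is accurate).
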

\begin{proof}
  Since $\phi(\theta, z, x) = a h(b,x)$ and since by \Cref{ass:gamma_h}, $b \mapsto h(b,x)$ is continuous for any $x \in\msx$, it follows that for any $x\in\msx$, $z \in\rset^d$, $\theta \mapsto \phi(\theta,z,x)$ is continuous on $\ThetaOne$. Using \eqref{eq:bound_phi_z_x} and the condition that $\ThetaOne$ is compact, an application of the Lebesgue dominated convergence theorem implies that for any $x \in \msx$, the function $\theta \mapsto \tphi(\theta,x)$ is continuous. Finally, Eq.~\eqref{eq:bound_phi_z_x} and the condition that $\ThetaOne$ is compact shows that there exists $C \geq 0$ such that for any $x \in \msx$ and $\theta \in\ThetaOne$, $\normLigne{\tphi(\theta,x) }\leq C$.
\end{proof}


\section{Proof of \Cref{prop:posterior_collapse}}\label{sec:proof_posterior_collapse}

We have assumed that $\family_{\Theta}$ is a family of Gaussians with diagonal covariance matrices, and  that $\prior \in \family_{\Theta}$ hence there exists $\btheta_0=(\mu,\sigma)\in \R^{N(\dx+\dy)}\times \R^{N(\dx+\dy)}$ such that $\prior= q_{\btheta_0}$. 
For ease of notations, we work with $\prior$ standard Gaussian:
\begin{equation}\label{eq:prior_formula}
    \prior(\bw) = {\prod_{j=1}^{N}\prod_{l=1}^{\dy}{\cN(a_{i, j}; 0, 1)}} \times \prod_{j=1}^{N}{\prod_{l=1}^{\dx}{\cN(b_{j, l}; 0, 1)}}
\end{equation}
Our results hold for more general parameters for $\prior$ but we fix these ones for convenience of notations.
The posterior $\densityq_{\btheta}\in \family_{\Theta}$ is:
\begin{equation}\label{eq:posterior_formula}
\densityq_{\btheta}(\bw) = {\prod_{j=1}^{N}\prod_{l=1}^{\dy}{\cN(a_{j,l}; \mu_{a_{j,l}}, \sigma_{a_{j,l}}^2)}} \times \prod_{j=1}^{N}{\prod_{l=1}^{\dx}{\cN(b_{j, l}; \mu_{b,j}, \sigma_{b_{j, l}}^2)}}
\end{equation}
We define $a_j = (a_{j, 1}, \dots, a_{j, \dy})\in \R^{\dy}$ and $b_j = (b_{1, j}, \dots, b_{\dx, j})\in \R^{\dx}$ respectively the $j^{th}$ row of the first layer weight matrix and the $j^{th}$ column of the second layer weight matrix. We denote $\mu_{a_j}=(\mu_{a_{j,1}},\dots,\mu_{a_{j,\dy}})\in \R^{\dy}$, $\mu_a=(\mu_{a_1},\dots, \mu_{a_N})\in \R^{N\dy}$. \\
\ \\
Recall that 
\begin{equation}\label{eq:rewrite_elbo}
\ELBO(\btheta) = -\mathcal{L}(\densityq_{\btheta}) - \KL(\densityq_{\btheta}|q_{\btheta_0}),\quad \text{ with }\mathcal{L}(\densityq_{\btheta}) = - \E_{\bw \sim \densityq_{\btheta}}\left[\sum_{i=1}^{p}\log(\likelihood(y_i|x_i,\bw))\right],
\end{equation}
where $\likelihood(y|x,\bw)\propto \exp(-\ell(f_{\bw}(x), y ))$ is defined by \eqref{eq:deflikelihood}.\\ 
\ \\
By the optimality of $\btheta^{\star}$, we have:
\begin{equation}
\ELBO(\btheta^{\star})  \geq  \ELBO(\btheta_0),
\end{equation}
Hence,
\begin{equation}\label{eq:optimality_posterior}
     \KL(\densityq_{\btheta^*}|q_{\btheta_0}) \leq \mathcal{L}(q_{\btheta_0}) - \mathcal{L}(\densityq_{\btheta^*}).
\end{equation}
We now deal separately with the square loss (Case 1) and cross-entropy loss (Case 2).  Throughout, we will often use the notation $\upsigma_j=\upsigma(<b_j, x>)$ for any $j=1,\dots,N$ and a generic point $x\in \X$. Since we have assumed that $\upsigma$ is $L$-Lipschitz, for any $y\in \R$, $|\upsigma(y)|\le |\upsigma(0)| + L|y|.$
Also, to explicit the dependence of $\btheta^*, \btheta_0$ in $N$ we will write their associated distributions $\densityq_{\btheta^*}^N$ and $\densityq_{\btheta_0}^N$ respectively.
\\

\subsection{Case of the square loss}
\begin{proof}
 
 The idea of the proof is to show that the right hand side term of \eqref{eq:optimality_posterior} converges to zero by showing that the two negative log likelihoods converge to the same finite limit, and hence their difference to zero as $N$ goes to infinity. 
 When $l$ is the square loss, for any $q_{\btheta}^N \in \family_{\Theta}$, by \eqref{eq:rewrite_elbo} we have
\begin{equation}
    \mathcal{L}(q_{\btheta}^N) = \sum_{i=1}^p \E_{\bw\sim  q_{\btheta}^N}\left[ \|y_i\|^2 + \|f_{\bw}(x_i)\|^2 -2\ps{y_i,f_{\bw}(x_i)} + \log(Z)\right],
\end{equation}
where $Z$ is the normalization constant of the model defined by \eqref{eq:deflikelihood}. We will show that for both the prior $q_{\btheta_0}^N$ and optimal posterior $q_{\btheta^*}^N$, the first and second moment of the predictive distribution converge to zero as $N$ goes to infinity. 

Under the prior distribution \eqref{eq:prior_formula}, for any $x\in \X$, the first and second moments of the predictive distribution can be written:
\begin{align}
  \E_{\bw\sim \densityq_{\btheta_0}^N}[f_{\bw}(x)] &= \E\left[\frac{1}{N}\sum_{j=1}^{N}{\upsigma_j a_j}\right] = \frac{1}{N} \sum_{j=1}^{N}{\E[\upsigma_j ] \E[a_j]}= 0\\
  \E_{\bw \sim \densityq_{\btheta_0}^N}[\|f_{\bw}(x)\|^2]& = \E_{\bw \sim \densityq_{\btheta_0}^N}\left[\frac{1}{N^2}\left(\sum_{j=1}^{N}{\upsigma_j^2 \|a_j\|^2} + 2 \sum_{j=1}^{N}{\sum_{k<j}{\upsigma_j  \upsigma_k \ps{a_j,a_k}}}\right)\right]\\ 
  &= \frac{1}{N^2} \sum_{j=1}^{N}{\E_{\bw\sim \densityq_{\btheta_0}^N}[\upsigma_j^2]}\le \frac{1}{N^2}\left(\sum_{j=1}^N|\upsigma(0)|^2 + L^2 \|x\|^2 \E_{\bw\sim \densityq_{\btheta_0}^N}[\|b_j\|^2]\right)\\
 & \xrightarrow[N \to \infty]{} 0,
\end{align}
Hence we first obtain: 
\begin{equation}\label{eq:limit_nn_prior}
\lim_{N \rightarrow \infty}
\mathcal{L}(q_{\btheta_0}^N)
=\sum_{i=1}^{p}{\|y_i\|^2} + \log Z.
\end{equation}
We now turn to showing that  $\mathcal{L}(\densityq_{\btheta^*}^N)$ has the same limit. First notice that since $\mathcal{L}$ is a positive function, by \eqref{eq:optimality_posterior} we have: $\KL(\densityq_{\btheta^*}^N|q_{\btheta_0}^N) \leq \mathcal{L}(\densityq_{\btheta_0}^N)$. Since the right-hand term is a converging sequence, it means that $\KL(\densityq_{\btheta^*}^N|q_{\btheta_0}^N)$ is bounded by a constant $C_{\KL}$ independent of $N$.\\
\ \\
By applying  \Cref{lemm:mean_predictive} and  \Cref{lemm:stq_predictive}, we have:
\begin{align}
\E_{\bw \sim \densityq_{\btheta^*}^N}[<y_i,f_{\bw}(x_i)>] &\leq \|y_i\| \|\E_{\bw \sim \densityq_{\btheta^*}^N}[f_{\bw}(x_i)]\| \leq \frac{\phi(\KL(\densityq_{\btheta^*}^N, \densityq_{\btheta_0}^N), \X, \dy)}{\sqrt{N}} \leq \frac{\phi(C_{\KL}, \X, \dy)}{\sqrt{N}}\\
\E_{\bw \sim \densityq_{\btheta^*}^N}[\|f_{\bw}(x)\|^2]&\leq \frac{\psi(\KL(\densityq_{\btheta^*}^N, \densityq_{\btheta_0}^N), \X, \dy)}{\sqrt{N}} \leq \frac{\psi(C_{\KL}, \X, \dy)}{\sqrt{N}}
\end{align}
where the most right hand side inequalities come from the fact that $\KL(\densityq_{\btheta^*}^N|q_{\btheta_0}^N)$ is bounded by a constant $C_{\KL}$ independent of N; and $\phi(C_{\KL}, \X, \dy),\psi(C_{\KL}, \X, \dy)$ are constants that only depend on the data points $(x_i,y_i)_{i=1}^p$, the spaces $\X$, $\Y$ and parameters of the prior distribution (through $C_{\KL}$). 
Hence, the first and second moments of the predictive under the posterior $\densityq_{\btheta^*}^N$ converge to 0. Hence, we obtain:
\begin{equation}\label{eq:limit_nn_posterior}
\lim_{N \rightarrow \infty} \mathcal{L}(q_{\btheta^*}^N)=
\sum_{i=1}^{p}{\|y_i\|^2} + \log Z.
\end{equation} 
From \eqref{eq:optimality_posterior}, \eqref{eq:limit_nn_prior} and  \eqref{eq:limit_nn_posterior} we finally that
\begin{equation}
    \lim_{N \rightarrow \infty} \KL(\densityq_{\btheta^*}^N, \densityq_{\btheta_0}^N) = 0.\qedhere
    \end{equation}
\end{proof}

\subsection{Case of the cross-entropy}
\begin{proof}
Similarly to the square loss case, the idea of the proof is to show that $\mathcal{L}(\densityq_{\btheta_0}^N),\mathcal{L}(\densityq_{\btheta^*}^N) $ have the same limit. We will make use of \Cref{lemm:main_lemma_ce} which specify that limit under a null moment assumption. \\

Under the prior distribution $\densityq_{\btheta_0}^N$,
\begin{equation}
\|\E_{\bw \sim \densityq_{\btheta_0}^N}[\frac{1}{N} \sum_{j=1}^{N}{\upsigma_j a_{j}}]\| = \frac{1}{N} \| \sum_{j=1}^{N}{\E[\upsigma_j]\E[a_j] } \| = 0,
\end{equation}
hence by \Cref{lemm:main_lemma_ce}: $$\lim_{N \rightarrow \infty} \mathcal{L}(\densityq_{\btheta_0}^N) =p( \log (\dy) + \log Z).$$
\ \\
We now turn to the predictive distribution under the posterior $\densityq_{\btheta^*}^N$. Recall  that since $\mathcal{L}$ is a positive function, using the optimality of the posterior we have: $\KL(\densityq_{\btheta^*}^N|\densityq_{\btheta_0}^N) \leq \mathcal{L}(\densityq_{\btheta_0}^N)$. Since the right-hand term is a converging sequence, it means that $\KL(\densityq_{\btheta^*}^N|\densityq_{\btheta_0}^N)$ is bounded by a constant $C_{\KL}$ independent of N.\\
\ \\
 By  \Cref{lemm:mean_predictive}, we can bound the first moment of the predictive distribution as:
\begin{equation}
    \|\E_{\bw \sim \densityq_{\btheta^*}^N}[f_{\bw}(x)]\| \leq \frac{\phi(\KL(\densityq_{\btheta^*}^N, \densityq_{\btheta_0}^N), \X, \dy)}{\sqrt{N}} \leq \frac{\phi(C_{\KL}, \X, \dy)}{\sqrt{N}},
    \end{equation}
    where the last inequality comes from the fact that the KL term is bounded by a constant $C_{\KL}$ independent of $N$ for the optimal variational parameter $\btheta^{*,N}$. 
Moreover, by using similar argument than in the proof of Lemma \ref{lemm:mean_predictive}, we can show that each coordinate $\mu,\sigma$ of $\btheta^{*,N}$ is bounded as:
\begin{itemize}
    \item $\mu \leq \sqrt{2 \KL(\densityq_{\btheta^*}^N, \densityq_{\btheta_0}^N)} \leq \sqrt{2 C_{\KL}}$
    \item $\sigma \leq 2 \KL(\densityq_{\btheta^*}^N, \densityq_{\btheta_0}^N) + 1 \leq 2 C_{\KL} + 1$
\end{itemize}
It means that each neuron weight has bounded mean and variance. 
We can thus apply \Cref{lemm:main_lemma_ce}, which yields: $$\lim_{N \rightarrow \infty}  \mathcal{L}(q_{\btheta}^N) = p(\log(\dy) - \log Z).$$

As $0 \leq \KL(\densityq_{\btheta^*}^N|\densityq_{\btheta_0}^N) \leq \mathcal{L}(\densityq_{\btheta_0}^N) - \mathcal{L}(\densityq_{\btheta^*})$ we obtain: 
\begin{equation}
    \lim_{N \rightarrow \infty} \KL(\densityq_{\btheta^*}^N, \densityq_{\btheta_0}^N) = 0. \qedhere
\end{equation}
\end{proof}

\begin{lemma}\label{lemm:mean_predictive} 
Assume the conditions of \Cref{prop:posterior_collapse} hold.
Then there exists a function $\phi$, increasing in its first variable, such that
$$\|\E_{\bw \sim \densityq_{\btheta}^N}[f_{\bw}(x)]\| \leq \frac{\phi(\KL(\densityq_{\btheta}^N,\densityq_{\btheta_0}^N), \X, \dy)}{\sqrt{N}}.$$

\end{lemma}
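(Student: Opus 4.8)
The plan is to compute the mean of the predictive distribution exactly, exploiting the full independence across neurons and coordinates in the variational family, and then to bound it by Cauchy--Schwarz, controlling the two resulting sums through the KL divergence. First I would write $f_{\bw}(x) = N^{-1}\sum_{j=1}^N \upsigma(\ps{b_j,x})\, a_j$ and observe that under $\densityq_{\btheta}^N$ the vectors $a_1,b_1,\dots,a_N,b_N$ are mutually independent with $\E[a_j]=\mu_{a_j}$, so that
\[
\E_{\bw\sim\densityq_{\btheta}^N}[f_{\bw}(x)] = \frac{1}{N}\sum_{j=1}^N \E[\upsigma(\ps{b_j,x})]\,\mu_{a_j}.
\]
By the triangle inequality it then suffices to bound $N^{-1}\sum_{j=1}^N c_j\,\norm{\mu_{a_j}}$, where $c_j := \absLigne{\E[\upsigma(\ps{b_j,x})]}$.

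The two ingredients are a pointwise control of $c_j$ and a KL-control of the parameter norms. Since $\upsigma$ is $L$-Lipschitz we have $\absLigne{\upsigma(t)}\le\absLigne{\upsigma(0)}+L\absLigne{t}$, and since $\X$ is compact there is $C_{\X}$ with $\norm{x}\le C_{\X}$ on $\X$; together with $\absLigne{\ps{b_j,x}}\le C_{\X}\norm{b_j}$ and Jensen this gives $c_j \le \absLigne{\upsigma(0)} + L C_{\X}\,\E[\norm{b_j}] \le \absLigne{\upsigma(0)} + L C_{\X}(\norm{\mu_{b_j}}^2 + \sum_l \sigma_{b_{j,l}}^2)^{1/2}$, whence $\sum_j c_j^2 \le 2N\,\upsigma(0)^2 + 2L^2 C_{\X}^2 \sum_j \E[\norm{b_j}^2]$. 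On the other hand, because $\prior$ is standard Gaussian and $\densityq_{\btheta}^N$ is a product of diagonal Gaussians, writing $K := \KL(\densityq_{\btheta}^N,\densityq_{\btheta_0}^N)$, the KL factorizes coordinatewise as a sum of nonnegative terms $\tfrac12(\mu^2+\sigma^2-\log\sigma^2-1)$. Each squared mean is therefore dominated by $2K$, giving $\sum_j\norm{\mu_{a_j}}^2\le 2K$ and $\sum_j\norm{\mu_{b_j}}^2\le 2K$, while the elementary inequality $t/2\le(t-\log t-1)+1$ (valid for $t>0$) yields $\sum_{j,l}\sigma_{b_{j,l}}^2\le 4K+2N\dx$; hence $\sum_j\E[\norm{b_j}^2]\le 6K+2N\dx$.

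To conclude I would apply Cauchy--Schwarz, $\sum_j c_j\norm{\mu_{a_j}}\le(\sum_j c_j^2)^{1/2}(\sum_j\norm{\mu_{a_j}}^2)^{1/2}$: the first factor is of order $\sqrt{N}+\sqrt{K}$ and the second is at most $\sqrt{2K}$, so dividing by $N$ and crudely bounding $K/N\le K$ produces a bound of the form $\phi(K,\X,\dy)/\sqrt{N}$ with $\phi$ increasing in its first argument. The key point — and the only place the gain of $1/\sqrt{N}$ arises — is that the means $\mu_{a_j}$ are collectively controlled by the KL in an $N$-independent way ($\sum_j\norm{\mu_{a_j}}^2\le 2K$), whereas the factors $c_j$ are controlled only on average to order $N$; pairing them by Cauchy--Schwarz converts the mismatch $\sqrt{N}\cdot\sqrt{K}$ into the desired $\sqrt{K/N}$ after normalization by $N$. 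The main care needed is to prevent the variance terms from reintroducing an $N$-dependence inside $\phi$, which is exactly what the inequality $t/2\le(t-\log t-1)+1$ together with $K/N\le K$ ensures.
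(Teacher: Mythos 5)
Your proof is correct and follows essentially the same route as the paper's: factor the mean over independent neurons, bound $\absLigne{\E[\upsigma(\ps{b_j,x})]}$ via the Lipschitz property and Gaussian moments, control the parameters through the coordinatewise nonnegativity of the Gaussian KL, and extract the $1/\sqrt{N}$ by pairing the $N$-independent bound $\sum_j\normLigne{\mu_{a_j}}^2\le 2\KL$ against the $O(N)$-sized factor via Cauchy--Schwarz. The only (immaterial) difference is that the paper bounds each $\absLigne{\E[\upsigma(\ps{b_j,x})]}$ uniformly by an increasing function of the KL before applying the $\ell_1$--$\ell_2$ comparison to $\mu_a$, whereas you keep these factors inside the Cauchy--Schwarz step and control them only in aggregate; both yield the same rate and the same monotonicity of $\phi$.
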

\begin{proof}
By Cauchy-Schwartz inequality, the first moment of the predictive distribution under the variational posterior can be upper bounded as:
\begin{equation}
\|\E_{\bw \sim \densityq_{\btheta}^N}[f_{\bw}(x)]\| =\frac{1}{N} \|\E_{\bw \sim \densityq_{\btheta}^N}[ \sum_{j=1}^{N}{\upsigma(\ps{b_j,x}) a_j}]\|
\leq \frac{1}{N} \sum_{j=1}^{N}{|\E[\upsigma(\ps{b_j,x})]| \|\mu_{a_j}\|}. 
\end{equation}
Since $\upsigma$ is Lipschitz, $|\upsigma(x)| \leq C_0 + L |x|$ where $C_0=|\upsigma(0)|$. Hence,
\begin{equation}
|\E[\upsigma(\ps{b_j,x})] | \leq | C_0 + L \E[|\ps{b_j, x}|] | \leq C_0 + L \sum_{l=1}^{\dx} \E[|b_{j, l}|] |x_l|]
\end{equation}
Let's start by finding an upper bound for $\E[|b_{j, l}|]$. If $b_{j, l} \sim \cN(\mu_{b_{j, l}}, \sigma_{b_{j, l}}^2)$, then $|b_{j, l}|$ has an absolute Gaussian distribution and denoting $\Phi$ the CDF of a standard Gaussian, we have
\begin{equation}
   \E[|b_{j, l}|] = \sigma_{b_{j, l}} \sqrt{\frac{2}{\pi}} \exp\left(\frac{-\mu_{b_{j, l}}^2}{2 \sigma_{b_{j, l}}^2}\right) + \mu_{b_{j, l}} \left[ 1 - 2 \Phi\left(- \frac{\mu_{b_{j, l}}}{\sigma_{b_{j, l}}}\right)\right]  \leq \sigma_{b_{j, l}} \sqrt{\frac{2}{\pi}} + |\mu_{b_{j, l}}|.
\end{equation}
Recall that the $\KL$ between the posterior $\densityq_{\btheta}^N$ and prior $\densityq_{\btheta_0}^N$ can be written:
\begin{multline}\label{eq:def_kl_gaussian} \KL(\densityq_{\btheta}^N| \densityq_{\btheta_0}^N)=\frac{1}{2}\sum_{j=1}^{N}\left[\sum_{l=1}^{\dx}( \mu_{b_{j, l}}^2
+ \sigma_{b_j,l}^2 - \log(\sigma_{b_j,l}^2) - 1)\right. \\\left.+ \sum_{l=1}^{\dy}( \mu_{a_{j, l}}^2 + \sigma_{a_j,l}^2 - \log(\sigma_{a_j,l}^2) - 1) \right]
\end{multline} 
Hence, for any $j=1,\dots,N$ and $l=1,\dots, \dx$:
\begin{align}\label{eq:use_def_kl_gaussian_mean}
|\mu_{b_{j,l}}|&\le \|\mu\|_2\le \sqrt{2\KL(\densityq_{\btheta}^N|\densityq_{\btheta_0}^N)}, \\
\label{eq:use_def_kl_gaussian_var}
\sigma_{b_{j,l}}\le &\|\sigma\|_2\le |\sigma_{b_{j,l}}+1-1| \le |\sigma_{b_{j,l}}^2-\log(\sigma_{b_{j,l}}^2)-1| + 1\le 2\KL(\densityq_{\btheta}^N|\densityq_{\btheta_0}^N)+1,
\end{align}
and 
\begin{equation}\label{eq:D_fct}
\E[|b_{j, l}|] \leq \sqrt{\frac{2}{\pi}} (2 \KL(\densityq_{\btheta}^N|\densityq_{\btheta_0}^N) + 1) + \sqrt{2 \KL(\densityq_{\btheta}^N|\densityq_{\btheta_0}^N)} := D(\KL(\densityq_{\btheta}^N|\densityq_{\btheta_0}^N))
\end{equation}
Where D is increasing. Hence, since $\X$ is compact, there exists $C_{\X}$ such that $\|x\|_{1}\le C_{\X}$ and:
\begin{equation}\label{eq:H_fct}
    |\E[\upsigma(\ps{b_j, x})]| \leq
C_0 + L C_{\X} D(\KL(\densityq_{\btheta}^N|\densityq_{\btheta_0}^N)):=E(\KL(\densityq_{\btheta}^N|\densityq_{\btheta_0}^N), \X),
\end{equation}
Where $E$ is increasing in its first variable. Finally, since \begin{equation}
     N^{-1}\sum_{j=1}^N \|\mu_{a_j}\|_2\le N^{-1}\|\mu_a\|_{1}\le N^{-1}\sqrt{N\dy} \|\mu_a\|_2\le N^{-\frac{1}{2}}\sqrt{\dy}\sqrt{2\KL(q_{\btheta}^N, \densityq_{\btheta_0}^N)},
 \end{equation} the first moment of the predictive distribution can be upper bounded as:
\begin{align}
    \|\E_{\bw \sim \densityq_{\btheta}}[f_{\bw}(x)\| 
    & \leq \frac{E(\KL(\densityq_{\btheta}^N|\densityq_{\btheta_0}^N), X)\sqrt{\dy} \sqrt{2\KL(q_{\btheta}^N, \densityq_{\btheta_0}^N)}}{\sqrt{N}}:= \frac{\phi({\KL(q_{\btheta}^N, \densityq_{\btheta_0}^N), \X, \dy)}}{\sqrt{N}},
\end{align}
where $\phi$ is increasing in its first variable.
\end{proof}

\begin{lemma}\label{lemm:stq_predictive}
Assume the conditions of \Cref{prop:posterior_collapse} hold.
Then there exists a function $\psi$ depending only on $\KL(\densityq_{\btheta}^N,\densityq_{\btheta_0}^N)$, $\X$, and $\dy$ such that $G$, increasing in its first variable, such that:
$$\E_{\bw \sim \densityq_{\btheta}^N}[\|f_{\bw}(x)\|^2] \leq \frac{\psi(\KL(\densityq_{\btheta}^N,\densityq_{\btheta_0}^N), \X, \dy)}{N}.$$
\end{lemma}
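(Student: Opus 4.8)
The plan is to expand the squared norm of the prediction and exploit the product structure of $\densityq_{\btheta}^N$. Writing $\upsigma_j = \upsigma(\ps{b_j,x})$ so that $f_{\bw}(x) = N^{-1}\sum_{j=1}^N \upsigma_j a_j$, and using that under $\densityq_{\btheta}^N$ the pairs $(a_j,b_j)$ are independent across $j$ while $a_j$ is independent of $b_j$ within each neuron, I would start from
\begin{equation}
\E_{\bw \sim \densityq_{\btheta}^N}[\norm{f_{\bw}(x)}^2] = \frac{1}{N^2}\sum_{j,k=1}^N \E[\upsigma_j \upsigma_k \ps{a_j,a_k}],
\end{equation}
and split the sum into its diagonal ($j=k$) and off-diagonal ($j\neq k$) parts. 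Each part will be shown to be $O(1/N)$ separately.

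For the off-diagonal part, independence yields $\E[\upsigma_j \upsigma_k \ps{a_j,a_k}] = \E[\upsigma_j]\E[\upsigma_k]\ps{\mu_{a_j},\mu_{a_k}}$ for $j\neq k$. Since the corresponding full sum equals $\norm{\sum_j \E[\upsigma_j]\mu_{a_j}}^2 = N^2\norm{\E_{\bw \sim \densityq_{\btheta}^N}[f_{\bw}(x)]}^2 \geq 0$ and its diagonal contribution $\sum_j \E[\upsigma_j]^2\norm{\mu_{a_j}}^2$ is nonnegative, the off-diagonal sum is bounded above by the full sum. Dividing by $N^2$, the off-diagonal contribution is therefore at most $\norm{\E_{\bw}[f_{\bw}(x)]}^2$, which is already controlled by \Cref{lemm:mean_predictive}: it is at most $\phi(\KL(\densityq_{\btheta}^N,\densityq_{\btheta_0}^N),\X,\dy)^2/N$.

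For the diagonal part $N^{-2}\sum_j \E[\upsigma_j^2]\E[\norm{a_j}^2]$ I would bound the two factors separately. Using that $\upsigma$ is $L$-Lipschitz gives $\E[\upsigma_j^2] \leq 2\upsigma(0)^2 + 2L^2\E[\ps{b_j,x}^2]$, and the Gaussian moment formula $\E[\ps{b_j,x}^2] = \ps{\mu_{b_j},x}^2 + \sum_l \sigma_{b_{j,l}}^2 x_l^2$ is bounded uniformly in $j$ by a function of $\KL(\densityq_{\btheta}^N|\densityq_{\btheta_0}^N)$ and $\X$, via the coordinatewise bounds $\norm{\mu_{b_j}}\leq \sqrt{2\KL(\densityq_{\btheta}^N|\densityq_{\btheta_0}^N)}$ and $\sigma_{b_{j,l}}\leq 2\KL(\densityq_{\btheta}^N|\densityq_{\btheta_0}^N)+1$ from \eqref{eq:use_def_kl_gaussian_mean}--\eqref{eq:use_def_kl_gaussian_var} together with the compactness of $\X$. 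For the second factor, $\E[\norm{a_j}^2] = \norm{\mu_{a_j}}^2 + \sum_l \sigma_{a_{j,l}}^2$, and summing over $j$ while using $\sum_j \norm{\mu_{a_j}}^2 = \norm{\mu_a}^2 \leq 2\KL(\densityq_{\btheta}^N|\densityq_{\btheta_0}^N)$ from \eqref{eq:def_kl_gaussian} and $\sigma_{a_{j,l}}^2 \leq (2\KL(\densityq_{\btheta}^N|\densityq_{\btheta_0}^N)+1)^2$ yields $\sum_j \E[\norm{a_j}^2] \leq 2\KL(\densityq_{\btheta}^N|\densityq_{\btheta_0}^N) + N\dy(2\KL(\densityq_{\btheta}^N|\densityq_{\btheta_0}^N)+1)^2$. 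Multiplying the uniform bound on $\E[\upsigma_j^2]$ by $N^{-2}$ times this sum makes the diagonal contribution $O(1/N)$, the $N\dy(\cdots)^2$ term producing the dominant $1/N$ behavior.

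Combining the two contributions and collecting the resulting $\KL$-, $\X$-, and $\dy$-dependent constants into a single $\psi$ that is increasing in its first argument gives the claim. The only place requiring genuine care is the off-diagonal analysis: one must recognize that, after taking expectations, the cross terms collapse to (a subset of) the squared first moment, so that the crucial $O(1/N)$ control is inherited from \Cref{lemm:mean_predictive} rather than re-derived; once this is seen, the diagonal estimate is routine given the Gaussian moment formulas and the KL-based bounds on the variational parameters.
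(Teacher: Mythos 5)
Your proof is correct and follows essentially the same route as the paper's: the same diagonal/off-diagonal decomposition of $\E_{\bw \sim \densityq_{\btheta}^N}[\|f_{\bw}(x)\|^2]$, the same KL-based bounds on the variational means and standard deviations, and the same Lipschitz control of $\E[\upsigma_j^2]$. The only (minor, and arguably cleaner) difference is your treatment of the cross terms — observing that the off-diagonal sum is dominated by $N^2\|\E_{\bw}[f_{\bw}(x)]\|^2$ and invoking \Cref{lemm:mean_predictive}, whereas the paper bounds $|\E[\upsigma_j]|$ and $\ps{\mu_{a_j},\mu_{a_k}}$ directly — but both yield the same $O(1/N)$ control.
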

\begin{proof}
For a posterior of the form \eqref{eq:posterior_formula}, we can write the second moment of the predictive distribution as:
\begin{align}\label{eq:decompose_second_moment}
    \E_{\bw \sim \densityq_{\btheta}^N}[\|f_{w}(x)\|^2] 
    &= \frac{1}{N^2} \sum_{j=1}^{N}{\E[\upsigma_j^2]\E[\|a_j\|^2]} + \frac{2}{N^2} \sum_{j=1}^{N}\sum_{k<j}^N\E[\upsigma_j] \E[\upsigma_k] \E[\ps{a_j, a_k}].
\end{align}
We start with the second term on the right hand side of \eqref{eq:decompose_second_moment}.
Using $\E[\ps{a_j, a_k}]=\ps{\mu_{a_j},\mu_{a_k}}\le1/2 (\|\mu_{a_j}\|^2+\|\mu_{a_k}\|^2)$, along with \eqref{eq:use_def_kl_gaussian_mean} and \eqref{eq:H_fct}, we have
\begin{align}
    \frac{1}{N^2} \sum_{j=1}^{N}\sum_{k=1}^{N}{\E[\upsigma_j] \E[\upsigma_k] \ps{\mu_{a_j}, \mu_{a_k}}} &\leq \frac{E^2(\KL(\densityq_{\btheta}^N, \densityq_{\btheta_0}^N)) 2\KL(\densityq_{\btheta}^N, \densityq_{\btheta_0}^N) }{N^2}.  
\end{align}
We now turn to the first term on the right hand side of \eqref{eq:decompose_second_moment}. We first have for any $j=1,\dots,N$, using \eqref{eq:def_kl_gaussian} that:
\begin{equation}
    \E[\|a_{j}\|^2] =\sum_{l=1}^{\dy} \E[a_{j,l}^2]= \sum_{l=1}^{\dy} (\sigma_{a_{j,l}}^2+\mu^2_{a_{j,l}})  \leq 2 \KL(\densityq_{\btheta^*}^N, \densityq_{\btheta_0}^N) + \dy (2 \KL(\densityq_{\btheta^*}^N, \densityq_{\btheta_0}^N) + 1)^2
    := F(\KL(\densityq_{\btheta^*}^N, \densityq_{\btheta_0}^N)).
\end{equation}
Then, using that $\upsigma$ is $L$-Lipschitz, Cauchy-Schwartz inequality and that since $\X$ is compact there exists $c_{\X}$ such that $\|x\|\le c_{\X}$, we have:
\begin{align}
    \E[\upsigma_j^2]  \le \E[(C_0 + L |<b_j, x>|)^2] 
    = C_0^2 + 2 C_0 c_{\X}L \E[\|b_j\| ] + L^2 c^2_{\X} \E[\|b_j\|^2],
\end{align}
where, using \eqref{eq:def_kl_gaussian} and \eqref{eq:D_fct}, 
\begin{align}
    \E[\|b_j\| ]&\le \sum_{l=1}^{\dx}\E[|b_{j,l}|] \le \dx D(\KL(\densityq_{\btheta}^N|\densityq_{\btheta_0}^N)),\\
    \E[\|b_j\|^2] &= \sum_{l=1}^{\dx}\E[b_{j,l}^2] =  \sum_{l=1}^{\dx} (\sigma_{b_{j,l}}^2+\mu^2_{b_{j,l}} )\le \dx (2\KL(\densityq_{\btheta^*}^N, \densityq_{\btheta_0}^N) + 1)^2 + 2 \KL(\densityq_{\btheta}^N|\densityq_{\btheta_0}^N).
\end{align}
Hence,
\begin{align}
\E[\upsigma_j^2] &\leq C_0^2+2C_0 c_{\X} L \dx D(\KL(\densityq_{\btheta}^N|\densityq_{\btheta_0}^N))+ L^2 c_{\X}^2 2\KL(\densityq_{\btheta}^N|\densityq_{\btheta_0}^N) :=G(\KL(\densityq_{\btheta^*}^N, \densityq_{\btheta_0}^N)),
\end{align}
with $R$ increasing. Hence, the first term on the right hand side of \eqref{eq:decompose_second_moment} can be bounded as:
\begin{align}
    \frac{1}{N^2} \sum_{j=1}^{N}{\E[\upsigma_j^2]\E[\|a_j\|^2]} 
    \leq \frac{G(\KL(\densityq_{\btheta^*}^N, \densityq_{\btheta_0}^N)) F(\KL(\densityq_{\btheta^*}^N, \densityq_{\btheta_0}^N))}{N}.
\end{align}
Finally, we obtain the desired result with:
\begin{equation}
\psi(\KL(\densityq_{\btheta^*}^N, \densityq_{\btheta_0}^N)) := G(\KL(\densityq_{\btheta^*}^N, \densityq_{\btheta_0}^N)) F(\KL(\densityq_{\btheta^*}^N, \densityq_{\btheta_0}^N)) + E^2(\KL(\densityq_{\btheta^*}^N, \densityq_{\btheta_0}^N)) \sqrt{2\KL(\densityq_{\btheta^*}^N, \densityq_{\btheta_0}^N)}.
\end{equation}
\end{proof}

\begin{lemma}\label{lemm:main_lemma_ce}
Let $l$ be the cross-entropy loss, and $q_{\btheta}^N \in \family_{\Theta}$ where $\family_{\Theta}$ is a family of  Gaussians with diagonal covariance matrices, i.e. for any $\btheta\in \Theta$, $\btheta=(\mu,\sigma) \in\R^{N\dx}\times R^{N\dy}$. Assume that each coordinate of $\btheta$ is bounded by a constant (independent of $N$)  and that $\lim_{N \rightarrow \infty} \|\E_{\bw \sim q_{\btheta}}[f_{\bw}(x)]\| = 0$ for any $x\in \X$. Then, $$\lim_{N \rightarrow \infty} 
\mathcal{L}(\densityq_{\btheta}^N)
= p(\log (\dy) + \log(Z)).$$
\end{lemma}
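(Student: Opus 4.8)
The plan is to rewrite the expected negative log-likelihood as a constant plus a sum of cross-entropy expectations, split each cross-entropy into a linear term and a log-sum-exp (softmax normalizer) term, and then control both using the vanishing first moment together with an $O(1/N)$ variance bound. First I would unpack the definitions. Since $\likelihood(y|x,\bw) = Z^{-1}\exp(-\lossCE(y,f_{\bw}(x)))$, we have $-\log \likelihood(y_i|x_i,\bw) = \log Z + \lossCE(y_i,f_{\bw}(x_i))$, so that
\[
\mathcal{L}(\densityq_{\btheta}^N) = p \log Z + \sum_{i=1}^p \E_{\bw\sim \densityq_{\btheta}^N}\!\left[\lossCE(y_i,f_{\bw}(x_i))\right].
\]
Writing the cross-entropy against the one-hot label and denoting by $f_l$ the $l$-th coordinate of $f_{\bw}(x_i)\in\R^{\dy}$, one has $\lossCE(y_i,f_{\bw}(x_i)) = -f_{y_i} + g(f_{\bw}(x_i))$ with $g(u) = \log\sum_{l=1}^{\dy} \rme^{u_l}$, so each summand equals $-\E[f_{y_i}] + \E[g(f_{\bw}(x_i))]$. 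It then suffices to show the first expectation tends to $0$ and the second to $\log\dy$, which yields the announced limit $p(\log\dy + \log Z)$.

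The first term is immediate, since $\absLigne{\E[f_{y_i}]} \le \normLigne{\E_{\bw\sim \densityq_{\btheta}^N}[f_{\bw}(x_i)]} \to 0$ by hypothesis. For the second, I would exploit that $g$ is $1$-Lipschitz for $\normLigne{\cdot}_{\infty}$ (its gradient is a softmax vector, of $\ell^1$-norm $1$) with $g(0) = \log\dy$, whence
\[
\absLigne{\E[g(f_{\bw}(x_i))] - \log\dy} \le \E[\normLigne{f_{\bw}(x_i)}_{\infty}] \le \left(\E[\normLigne{f_{\bw}(x_i)}^2]\right)^{1/2},
\]
using Jensen. This makes a dominated-convergence argument unnecessary and reduces everything to proving $\E[\normLigne{f_{\bw}(x_i)}^2]\to 0$.

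The key and only substantial step is this second-moment estimate, and it is where I expect the bookkeeping to concentrate. Decomposing coordinatewise, $\E[\normLigne{f_{\bw}(x)}^2] = \sum_{l}(\Var(f_l) + \E[f_l]^2)$; the squared-mean part vanishes because $\normLigne{\E_{\bw}[f_{\bw}(x)]}\to 0$. For the variance, the neurons are independent under the product posterior $\densityq_{\btheta}^N$, so with $s(w_j,x)=a_j h(b_j,x)$,
\[
\Var(f_l) = N^{-2}\sum_{j=1}^N \Var\big(a_{j,l}\, h(b_j,x)\big) \le N^{-2}\sum_{j=1}^N \E[a_{j,l}^2]\,\E[h(b_j,x)^2],
\]
where I used independence of $a_j$ and $b_j$ under the diagonal Gaussian. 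Boundedness of the coordinates of $\btheta$ bounds $\E[a_{j,l}^2]$ and the second moments of $b_j$ uniformly in $j$ and $N$, and $\upsigma$ Lipschitz together with $\X$ compact gives $\absLigne{h(b_j,x)}\le C_0 + L\,\absLigne{\ps{b_j,x}}$, so $\E[h(b_j,x)^2]$ is likewise uniformly bounded; hence $\Var(f_l)=O(1/N)$.

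This last computation is essentially that of \Cref{lemm:stq_predictive}, carried out here under the bounded-coordinates hypothesis rather than a KL bound, so the only point to watch is that the per-neuron second moments are uniformly bounded in $j$ and $N$, which is precisely what the boundedness assumption supplies. Combining the three displays gives $\E[\normLigne{f_{\bw}(x_i)}^2]\to 0$ and therefore $\mathcal{L}(\densityq_{\btheta}^N)\to p\log Z + p\log\dy = p(\log\dy + \log Z)$, as claimed.
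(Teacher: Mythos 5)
Your proposal is correct, and while it shares the paper's overall skeleton (split each cross-entropy term into the linear coordinate $-f_{y_i}$ plus the log-normalizer, and kill the linear part with the vanishing-first-moment hypothesis), it handles the log-normalizer by a genuinely different and, in my view, more robust argument. The paper bounds $\E\bigl[\log\frac{1}{\dy}\sum_k \rme^{f_{\bw}(x,k)}\bigr]$ via Jensen's inequality and a product over all $N\dx$ per-coordinate moment-generating functions of folded Gaussians, each expanded as $1+o(1/N)$; this only directly yields a one-sided bound (Jensen gives an upper bound on the expectation of a concave function) and leans on a delicate expansion of the product. You instead use that the log-sum-exp map $g$ is $1$-Lipschitz for $\normLigne{\cdot}_\infty$ with $g(0)=\log\dy$, which gives the two-sided bound $\absLigne{\E[g(f_{\bw}(x))]-\log\dy}\le \E[\normLigne{f_{\bw}(x)}_\infty]\le (\E[\normLigne{f_{\bw}(x)}^2])^{1/2}$ directly, and then reduce everything to the second-moment estimate $\E[\normLigne{f_{\bw}(x)}^2]=O(1/N)+\normLigne{\E[f_{\bw}(x)]}^2\to 0$, obtained from independence of the neurons under the product Gaussian posterior, the bounded-coordinate hypothesis, $\upsigma$ Lipschitz and $\X$ compact — essentially the computation of \Cref{lemm:stq_predictive} with the KL bound replaced by the bounded-coordinates hypothesis. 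What your route buys is that it avoids the MGF bookkeeping entirely and controls the deviation from $\log\dy$ symmetrically in one inequality; what the paper's route buys is nothing you lose, so your argument is a clean substitute.
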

\begin{proof}
 For any $i=1,\dots,p$, denote
\begin{equation}
\begin{array}[t]{lrcl}
 l_{y_i} : & \R^{\dy} & \longrightarrow & \R \\
    & (z_1, \dots, z_{\dy})  &  \longmapsto &  - \log\left(\frac{e^{z_{y_i}}}{\sum_{j=1}^{\dy}{e^{z_j}}}\right) \end{array},
\end{equation}
so that $\forall z = (z_1, \dots z_{\dy}) \in \R^{\dy}$,
\begin{equation}\label{eq:decompose_cross_entropy}
    |l_{y_i}(z)| = \left|-\log(\exp(z_{y_i}) + \log\left(\sum_{k=1}^{\dy}{\exp(z_k)}\right)\right|.
\end{equation}

By the definition of $\mathcal{L}$  and plugging  $-\log(\dy) - \log(Z)$ in \eqref{eq:decompose_cross_entropy}, we have:
\begin{align}
    |
    \mathcal{L}(\densityq_{\btheta}^N)
    - p(\log(\dy)+\log(Z))|
    &\leq \sum_{i=1}^p|\E_{\bw \sim \densityq_{\btheta}^N}[f_{\bw}(x, y_i)]| + |\E_{\bw \sim \densityq_{\btheta}^N}[\log \frac{1}{\dy}\sum_{k=1}^{\dy}{e^{f_{\bw}(x, k)}}]|,
\end{align}
where $f_{\bw}(x,k)$ denotes the $k$-th coordinate of $f_{\bw}(x)\in \R^{\dy}$ for $l=1,\dots,\dy$. 
The first term on the right hand side of the previous inequality converges to 0 as $N$ goes to infinity by assumption. Hence, we can focus on the second term. For any $k=1,\dots,\dy$, since $\upsigma$ is $L$-Lipschitz,
\begin{equation}
    f_{\bw}(x, k) = \frac{1}{N} \sum_{j=1}^{N}{\upsigma(<b_j, x>) a_{j, k}} \leq  \frac{1}{N} \sum\limits_{j=1}^{N} C_0 a_{j, k} + 
 \frac{L}{N} \sum\limits_{j=1}^{N}{\sum\limits_{l=1}^{\dx}{|b_{j, l}||x_l| a_{j, k}}}.
\end{equation}
Using the previous inequality along with Jensen's inequality, we have
\begin{equation}
\left|\E_{\bw \sim \densityq_{\btheta}^N}\left[\log \left(\frac{1}{\dy}\sum_{k=1}^{\dy}{e^{\frac{L}{N} \sum\limits_{j=1}^{N}{\sum\limits_{l=1}^{\dx}{|b_{j, l}||x_l| a_{j, k}}}}}\right)\right]\right| \leq \left|\log \frac{1}{\dy} \sum_{k=1}^{\dy}{\prod_{j=1}^{N}{\prod_{l=1}^{\dx}{\E_{\bw \sim \densityq_{\btheta}^N}\left[e^{\frac{L|b_{j, l}| |x_l| a_{j, k}}{N}}\right]}}}\right|.
\end{equation}
Since the posterior is of the form \eqref{eq:posterior_formula} we have for any index $(j,k,l)$:
\begin{align}
    \E_{\bw \sim \densityq_{\btheta}^N}\left[e^{L\frac{|b_{j, l}| |x_l| a_{j, k}}{N}}\right] &\leq \E_{\bw \sim \densityq_{\btheta}^N}\left[e^{L\frac{|b_{j, l}| |x_l| |a_{j, k}|}{N}}\right] \\
   & = \E_{u \sim \cN(0, 1); v \sim \cN(0, 1)}\left[e^{\frac{L|\sigma_{b_{j,l}} u + \mu_{b_{j,l}}| |x| |\sigma_{a_{j,k}} v + \mu_{a_{j,k}}|}{N}}\right]\\
   &\le \E_{u \sim \cN(0, 1); v\sim \cN(0, 1)}\left[e^{\frac{|C b + C| |x| | C a + C|}{N}}\right],
    \end{align}
for some constant $C>0$ since by assumption each coordinate of the variational parameter is bounded.    
By the dominated convergence theorem, when $N$ goes to infinity we have:
\begin{equation}
  \E_{u \sim \cN(0, 1); v \sim \cN(0, 1)}\left[e^{\frac{L|\sigma_{b_{j,l}} u + \mu_{b_{j,l}}| |x| (\sigma_{a_{j,k}} v + \mu_{a_{j,k}})}{N}}\right]   = 1 + o\left(\frac{1}{N}\right).
    \end{equation}
Hence, 
\begin{equation}
    \left|\E_{\bw \sim \densityq_{\btheta}^N}\left[\log \frac{1}{\dy}\sum_{k=1}^{\dy}{e^{\frac{L}{N} \sum\limits_{j=1}^{N}{\sum\limits_{l=1}^{\dx}{|b_{j, l}||x_l| a_{j, k}}}}}\right]\right| \leq N \dx \log\left(1 + o\left(\frac{1}{N}\right)\right)
\end{equation}
Similarly, we can prove that:
\begin{equation}
    \lim_{N \rightarrow \infty} \left|\E_{\bw \sim \densityq_{\btheta}^N}\left[\log \frac{1}{\dy}\sum_{k=1}^{\dy}{e^{\frac{\upsigma(0)}{N} \sum\limits_{j=1}^{N}{ a_{j, k}}}}\right]\right| = 0
\end{equation}
Finally, we have: 
\begin{equation}
    \lim_{N \rightarrow \infty} |\mathcal{L}(\densityq_{\btheta}^N) - p (\log(\dy)+\log(Z))| \leq \lim_{N \rightarrow \infty} \left|N \dx \log\left(1 + o\left(\frac{1}{N}\right)\right)\right| = 0. 
\end{equation}
\end{proof}

\section{Additional Experiments}\label{sec:additional_experiments}

\subsection{Balanced ELBO with cooling} 
We first support with a very simple experiment the theoretical results of \Cref{sec:bnn_infinite} and the relevance of the form of the parameter $\etaN = \tau p/N$ we find. This experiment does not require training, since the goal here is to illustrate how introducing this parameter allows to balance the contributions of the two terms in the decomposition of  $\ELBOeta$ in \eqref{eq:elboeta}.
We choose the architecture of a one hidden layer neural network with RelU activation functions, to which we will refer to as \textit{Linear BNN}. We consider a regression task on the Boston dataset and a classification task on MNIST. We choose a zero-mean Gaussian prior with variance $1/5$ for each neuron. Also, we initialize the variational parameters $\theta=(\mu,\sigma)$ where $\mu$ is close to zero and $\sigma=10^{-3}$.
\Cref{fig:balanced_mnist} and \ref{fig:balanced_boston} illustrate the ratio between the likelihood and KL terms in $\ELBOeta$ when the number of weights grows, for  $\etaN=1$ (no cooling), $\etaN=\tau p/N$ and different values of the hyperparameter $\tau$, on the MNIST and BOSTON datasets respectively. They confirm that when the number of data points $p$ is fixed and $\ELBOeta$ is not rescaled, one of the two terms become dominant contrary to the case where we set $\etaN=\tau p/N$. 

\begin{figure}[H]
   \centering
\begin{tabularx}{\columnwidth}{cccc}
	\includegraphics[width=.22\columnwidth]{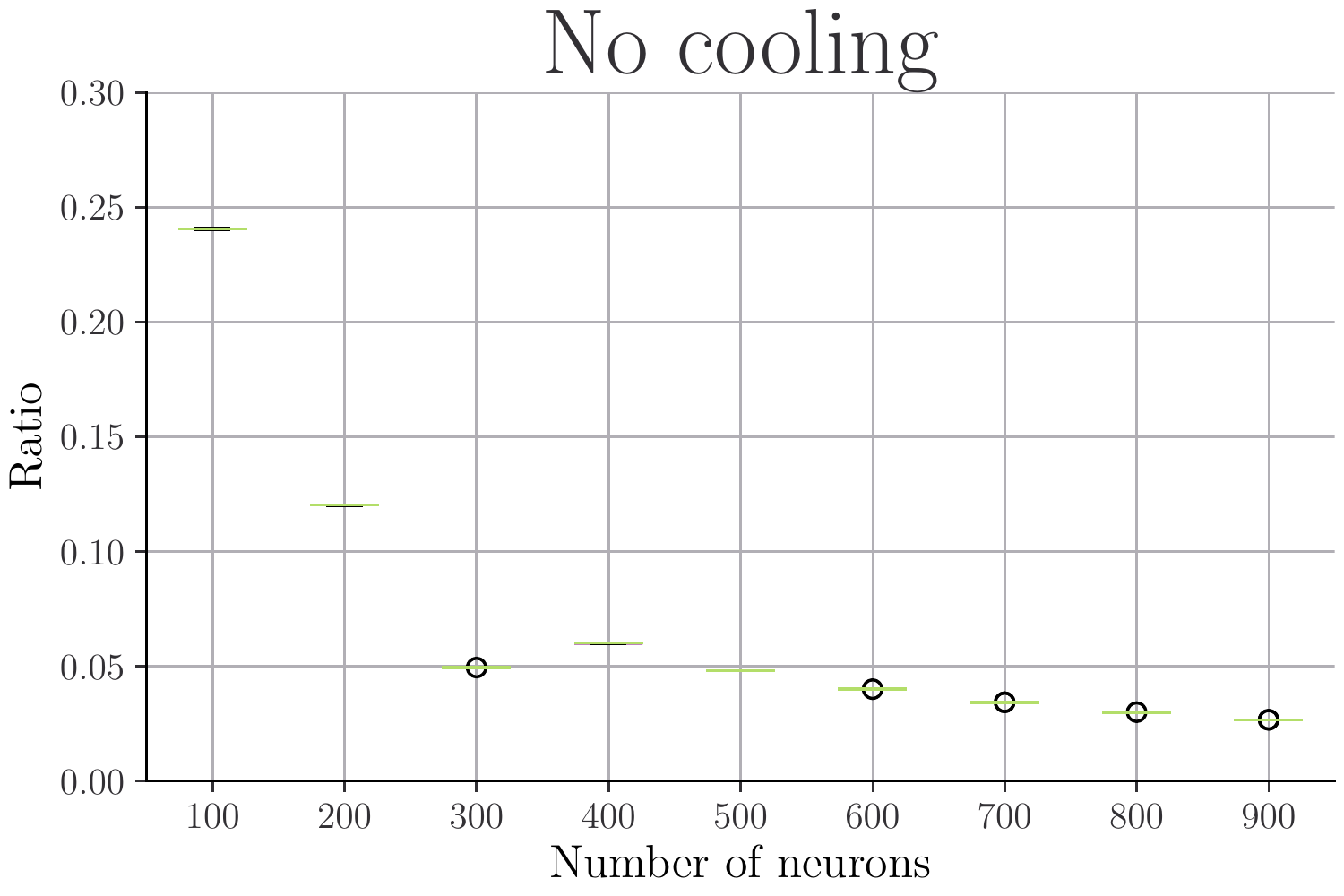}&
	\includegraphics[width=.22\columnwidth]{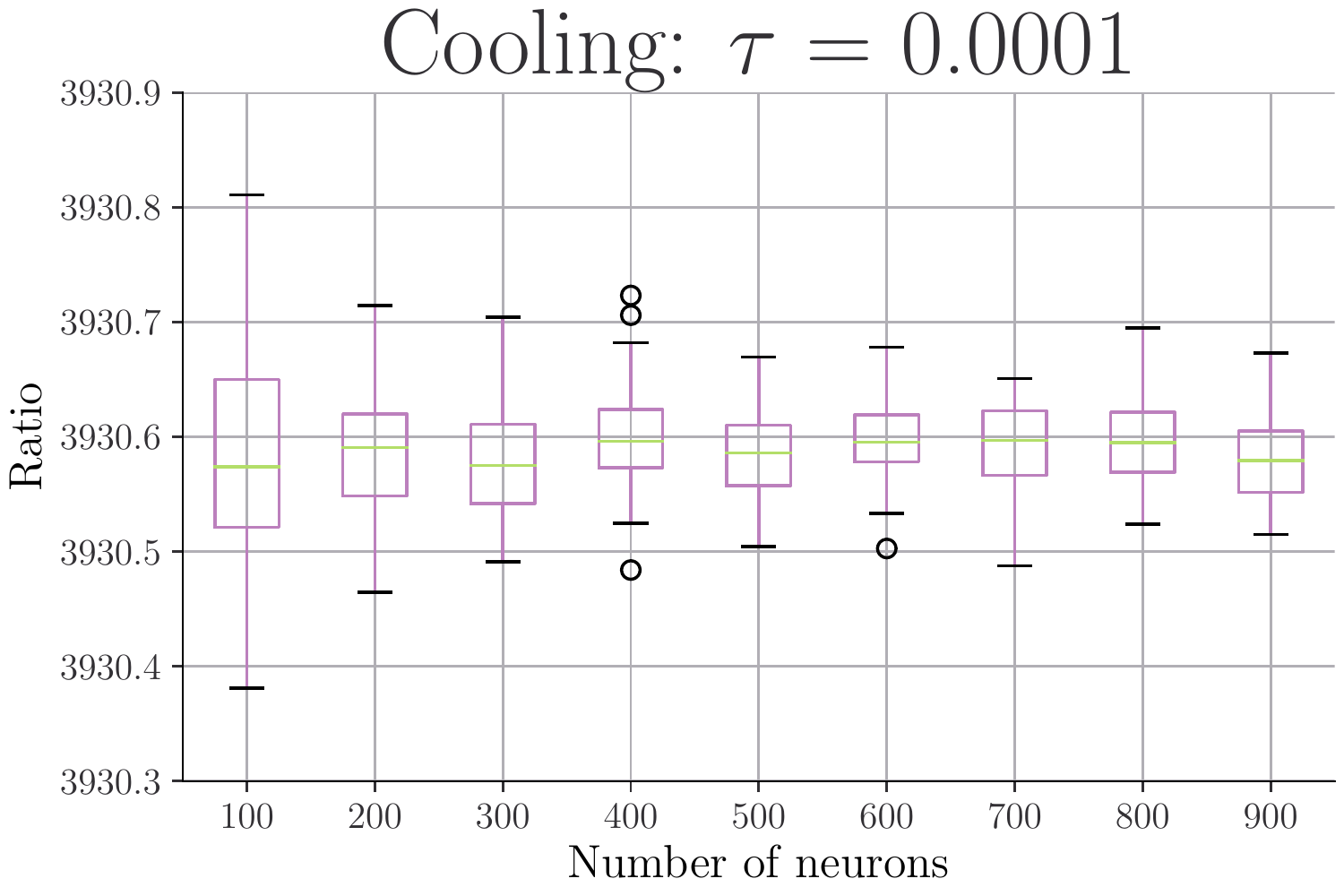}&
  \includegraphics[width=0.22\columnwidth]{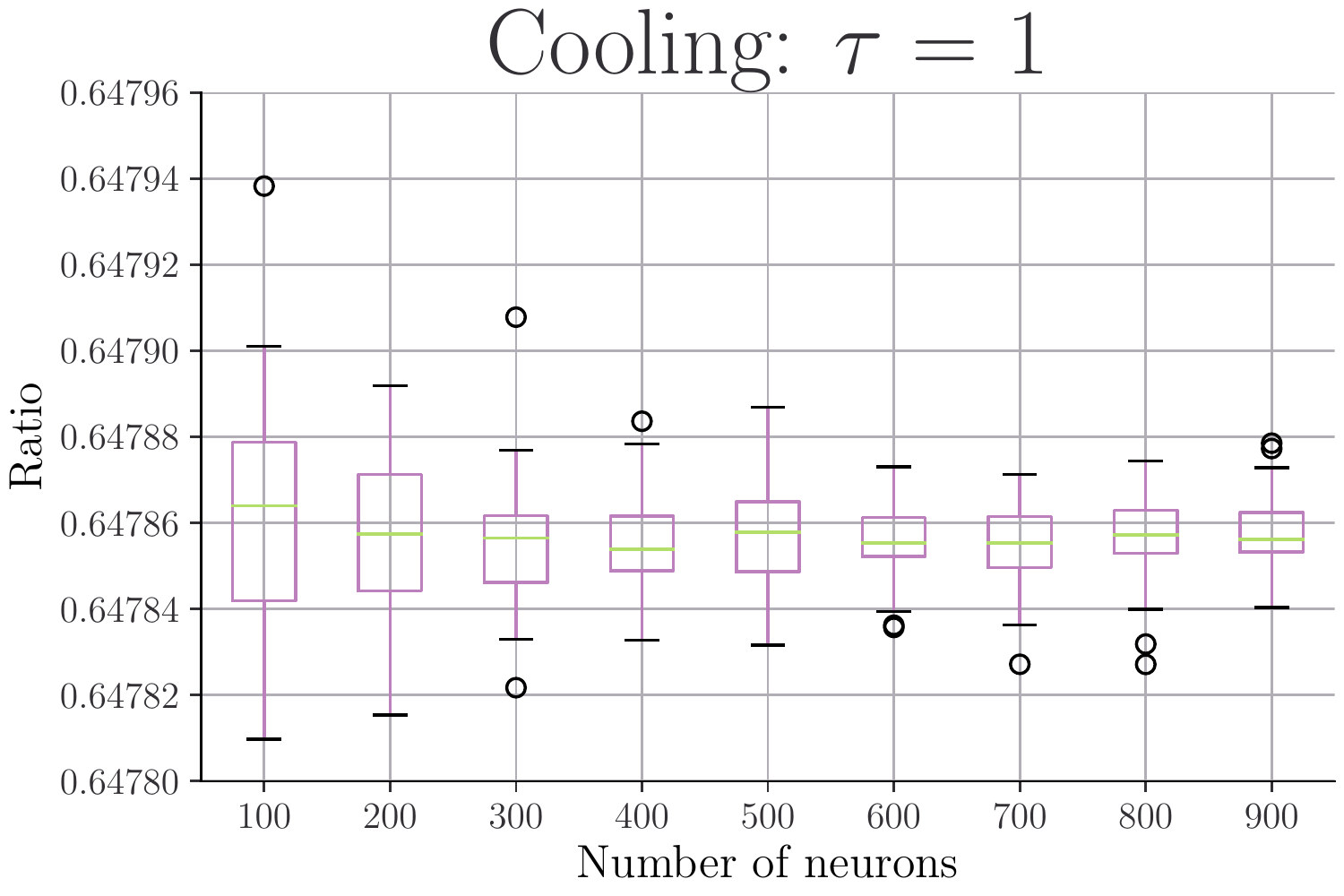}
&\includegraphics[width=.22\columnwidth]{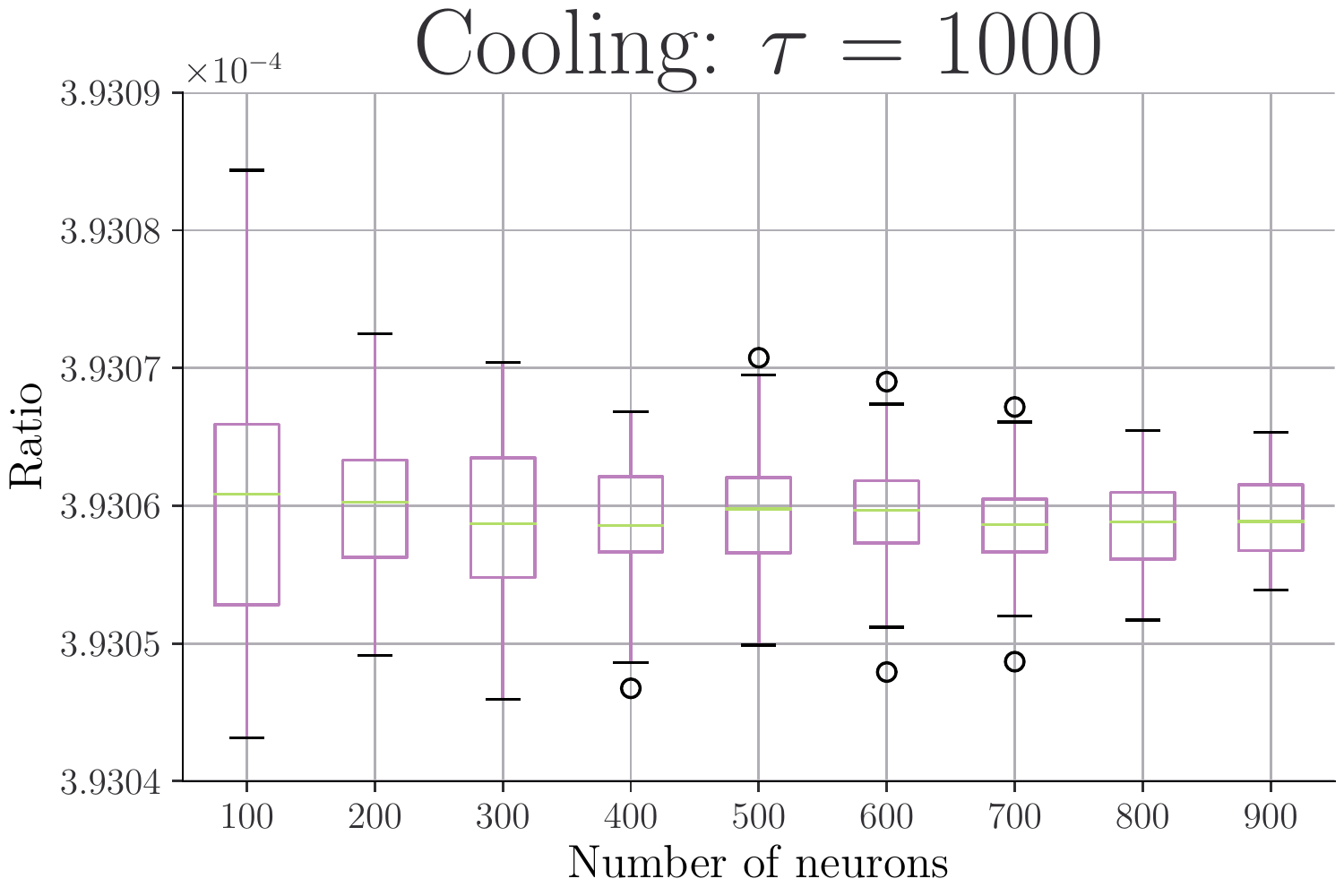}
\end{tabularx}
    \caption{Ratio of the two $\ELBO$ terms,  for a Linear BNN (non trained) on MNIST.}
    \label{fig:balanced_mnist}
\end{figure}

\begin{figure}[H]
   \centering
\begin{tabularx}{\columnwidth}{cccc}
	\includegraphics[width=.22\columnwidth]{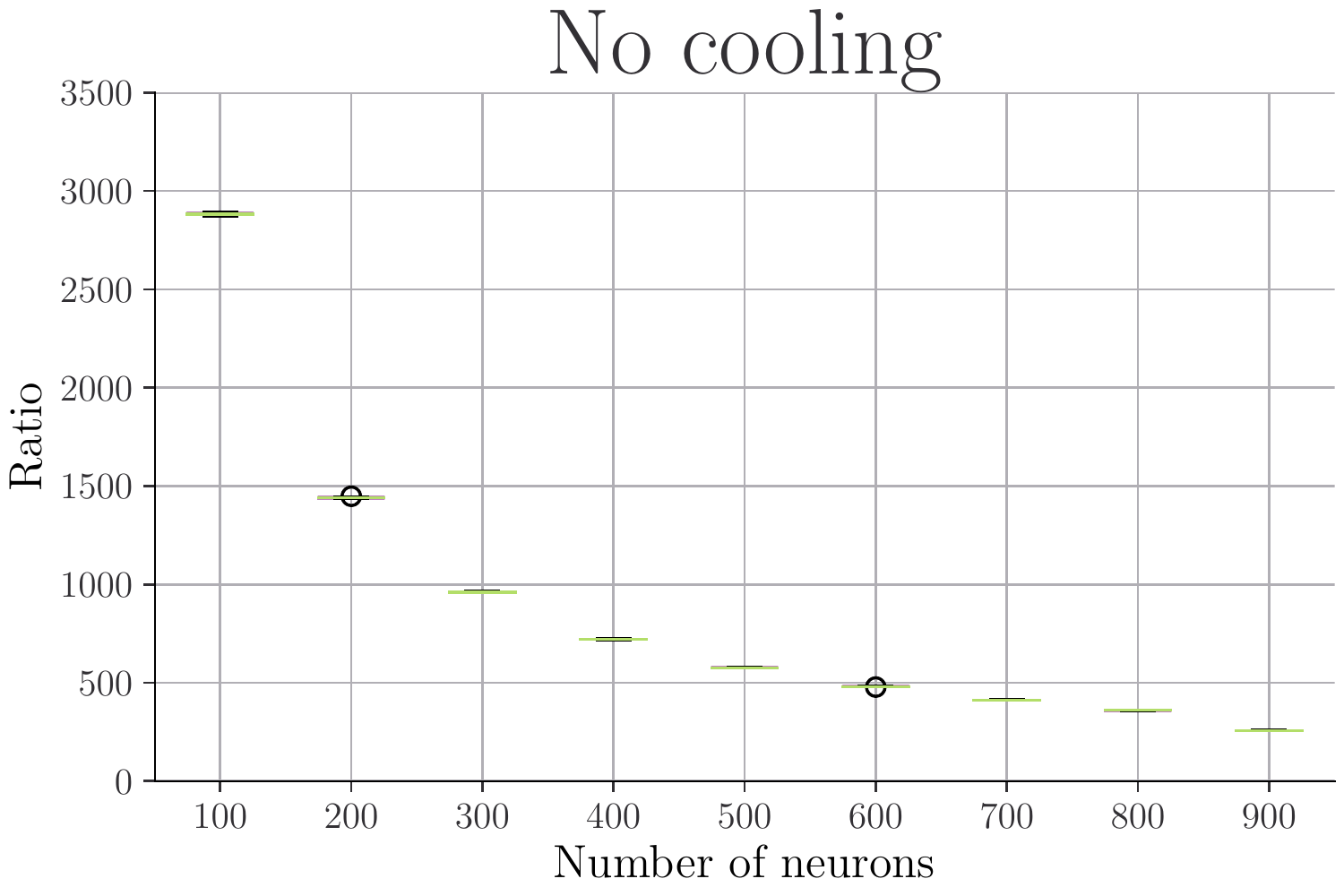}&
	\includegraphics[width=.22\columnwidth]{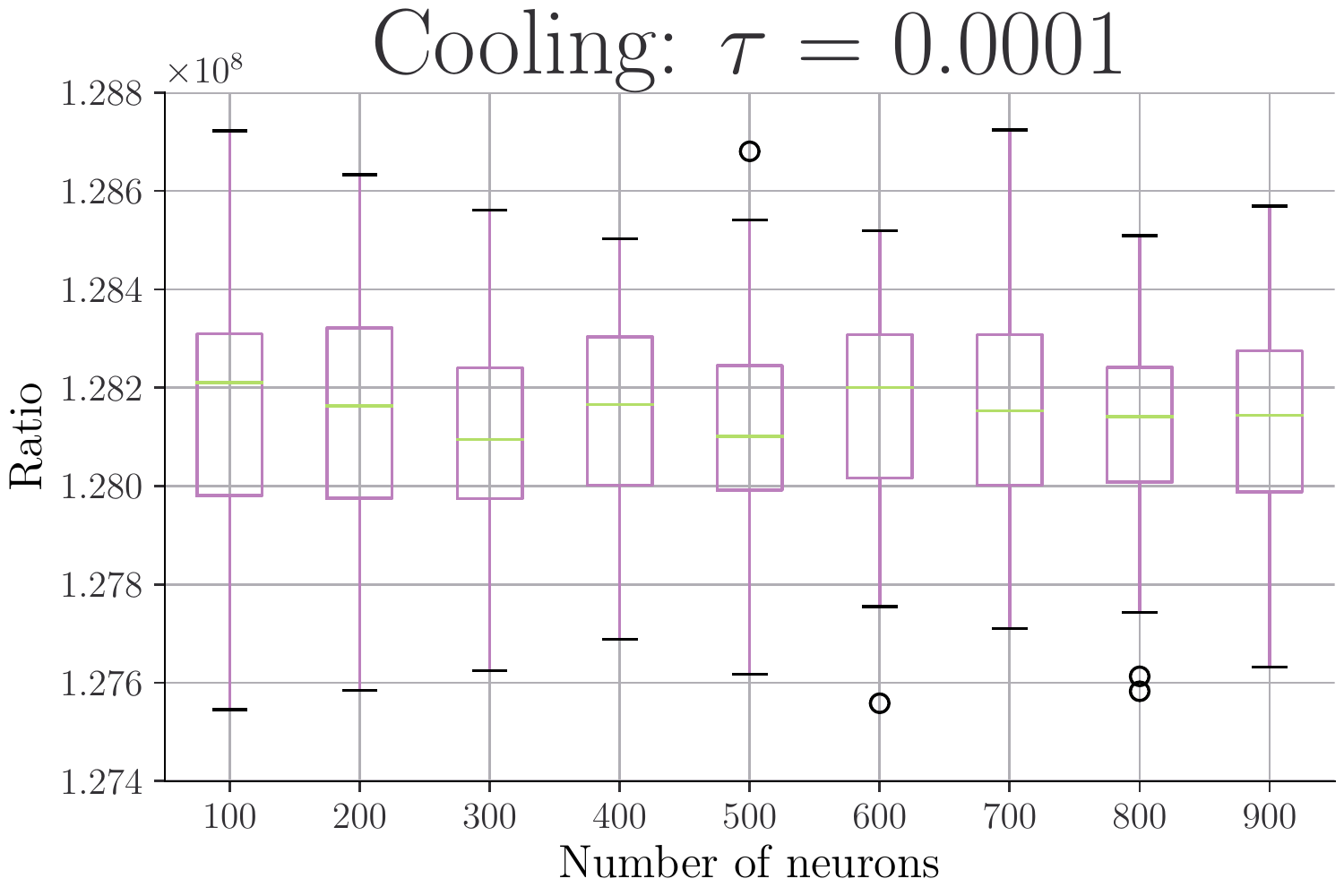}&
  \includegraphics[width=0.22\columnwidth]{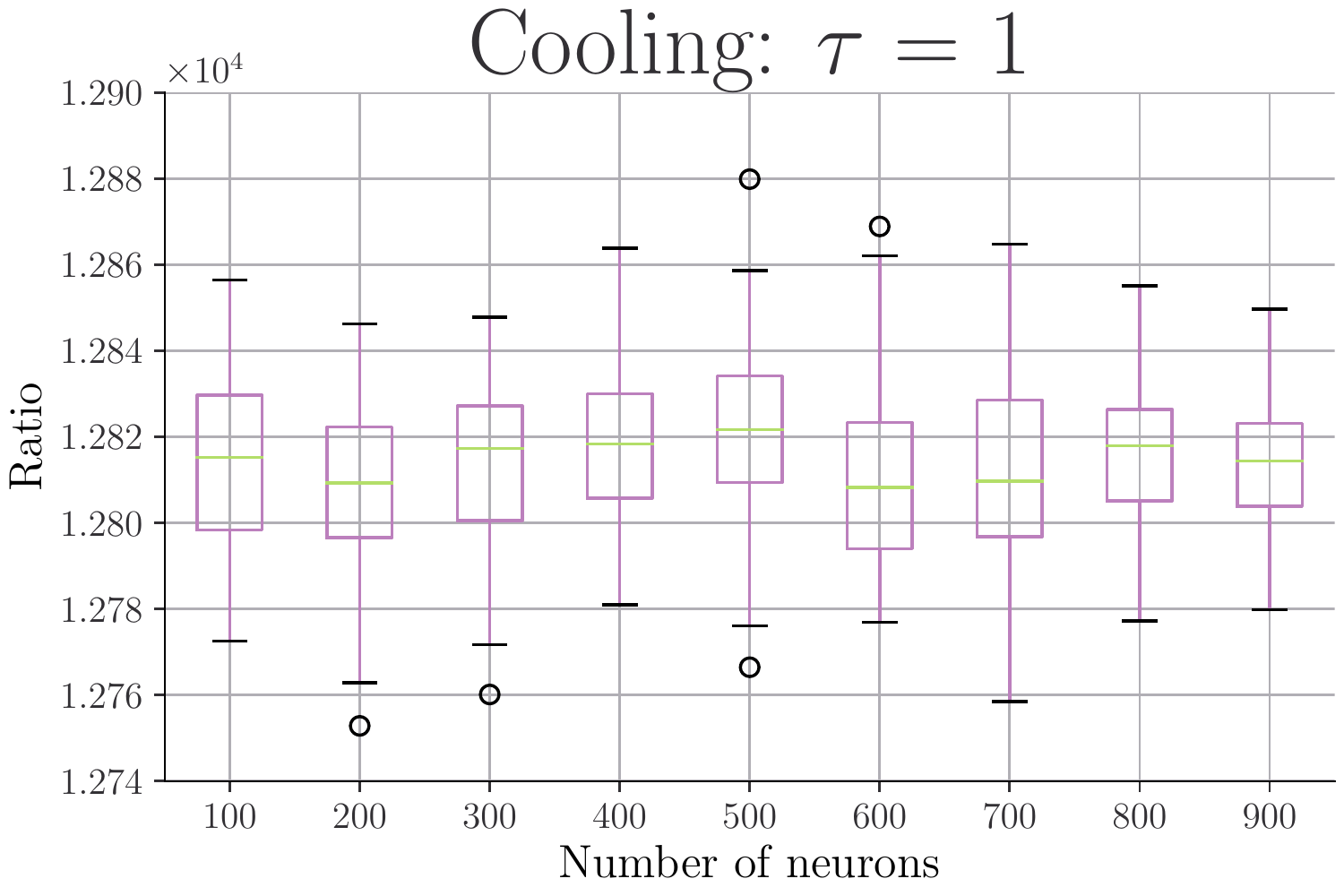} &\includegraphics[width=.22\columnwidth]{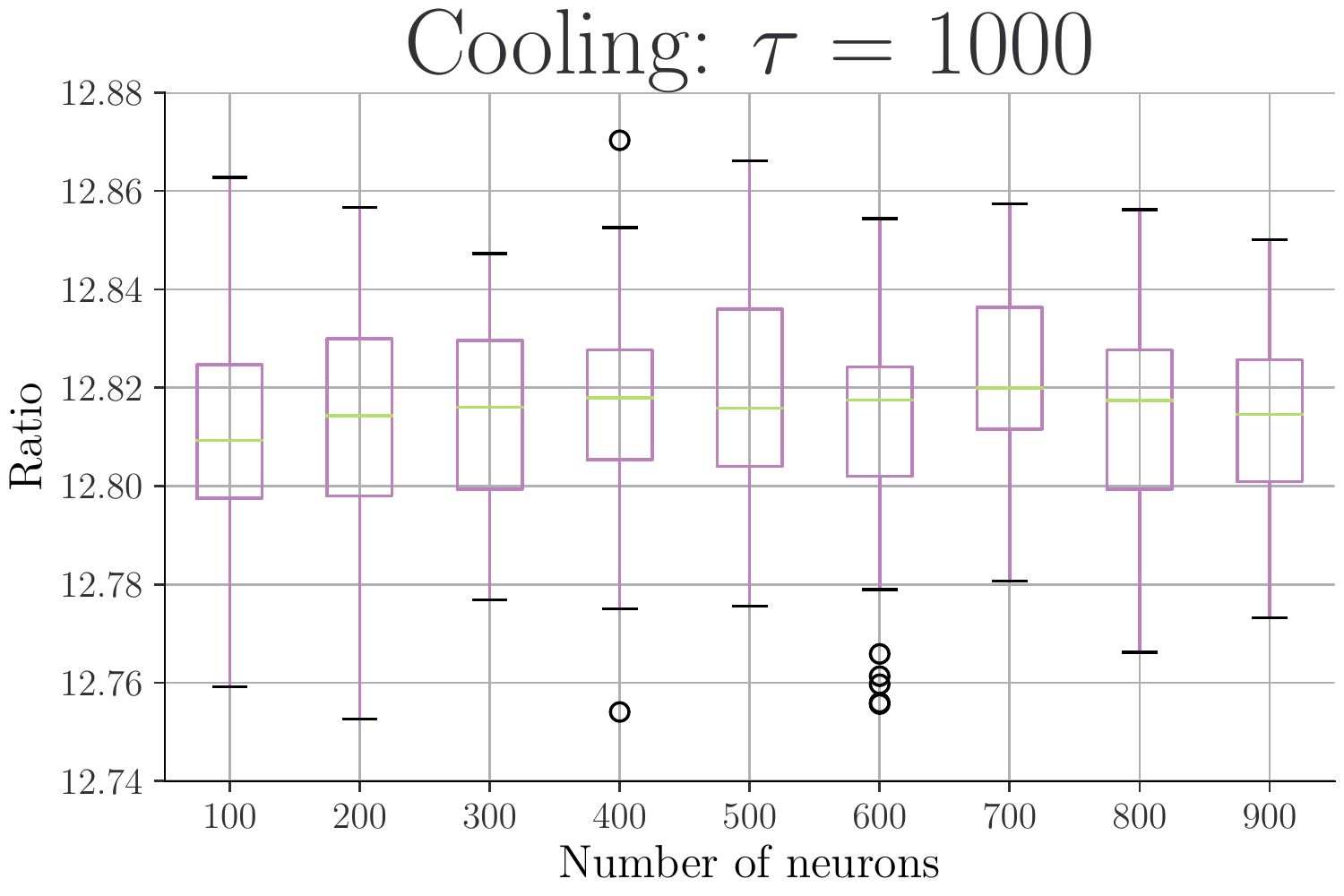}
\end{tabularx}
    \caption{Ratio of the two $\ELBO$ terms,  for a Linear BNN (non trained) on BOSTON.}
    \label{fig:balanced_boston}
\end{figure}

\subsection{ECE definition}\label{sec:def_ece}

For any input $x$, define $\text{conf}(x)=\max_{c\in \{1,\dots,n_l\}} \Psi_c(f_{\bw} (x))$, i.e., the maximal predicted probability of the network. This quantity can be viewed  as a
prediction confidence for the input $x$. ECE
discretizes the interval $[0,1]$ into a given number
of bins $B$ and groups predictions based on the confidence score: $S_b=\{i\in\{1,\dots,p\}, \text{conf}(x_i)\in [b/B, (1+b)/B[  \}$.
The calibration error is the difference between the fraction of predictions in the bin that
are correct (accuracy) and the mean of the probabilities
in the bin (confidence).
\begin{equation}\label{eq:ece}
\text{ECE} = \sum_{b=1}^B \frac{|S_b|}{p} |\text{acc}(S_b)-\text{conf}(S_b)| \eqsp,
\end{equation}
where $p$ is the total number of data points, and $|S_b|$, $\text{acc}(S_b)$
and $\text{conf}(S_b)$ are the number of predictions, the accuracy and confidence of bin $S_b$ respectively.

\subsection{Cooling effect on the distribution of the variational parameters} 
\Cref{fig:weights} illustrates the distribution of the variational parameters after training a linear BNN (i.e., single hidden layer with RelU) on MNIST. For a large $\tau$, the distribution of the variational parameters is close to the prior (a centered Gaussian with standard deviation $0.2$). For a small $\tau$, we can see that the network has learnt values of $\sigma$ that are very different from the prior (e.g., close to zero). Intermediate values of $\tau$ interpolate between the two previous regimes

\begin{figure}[H]
\centering
\subfigure{\includegraphics[width=0.22\columnwidth]{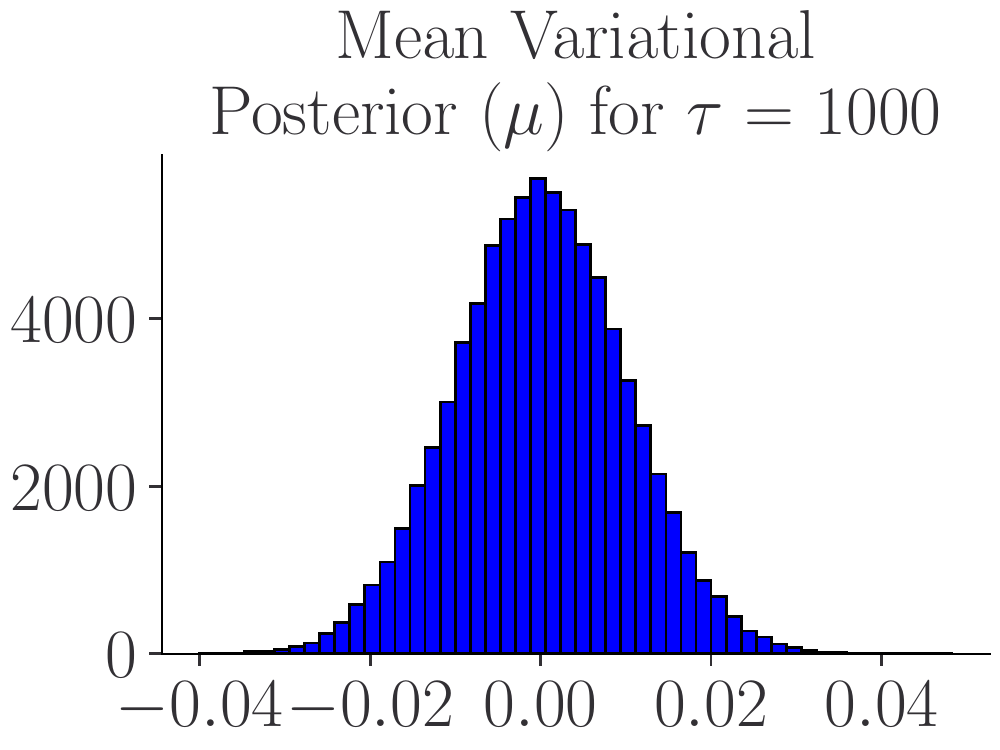}}
\subfigure{\includegraphics[width=0.22\columnwidth]{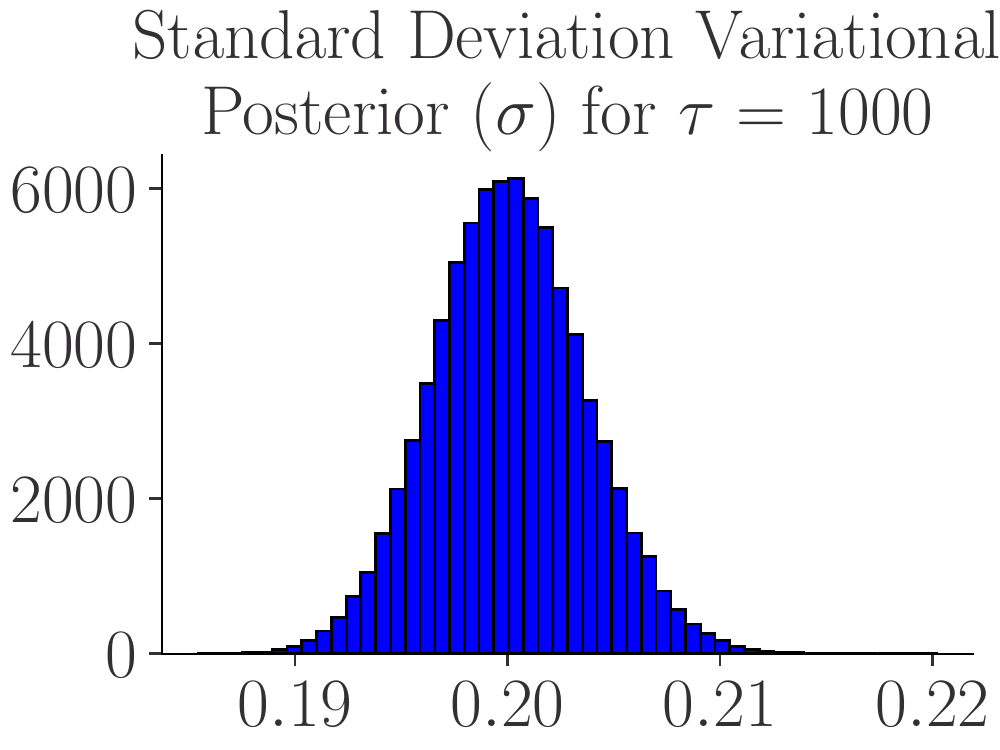}}
\subfigure{\includegraphics[width=0.22\columnwidth]{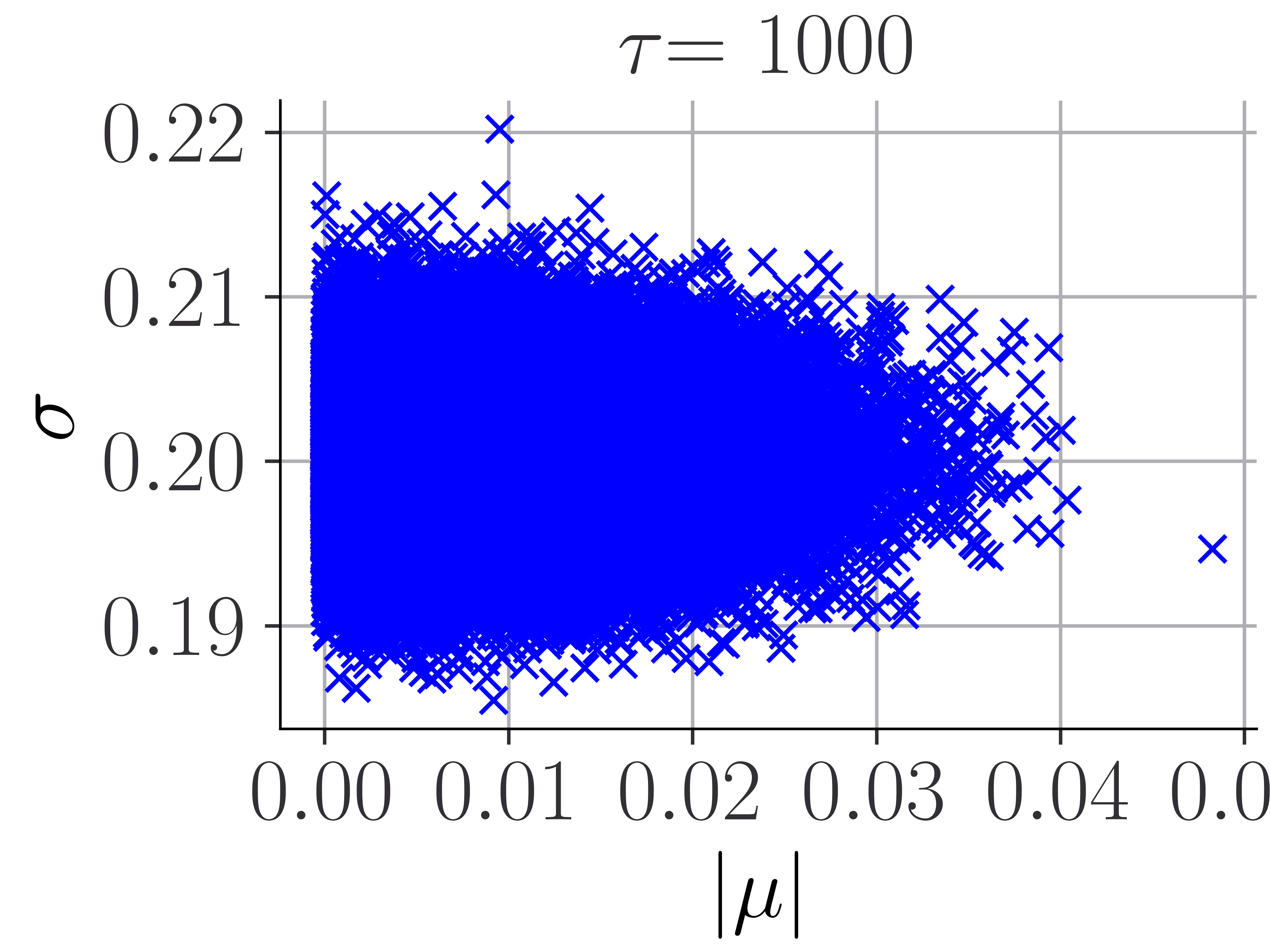}} \\
\subfigure{\includegraphics[width=0.22\columnwidth]{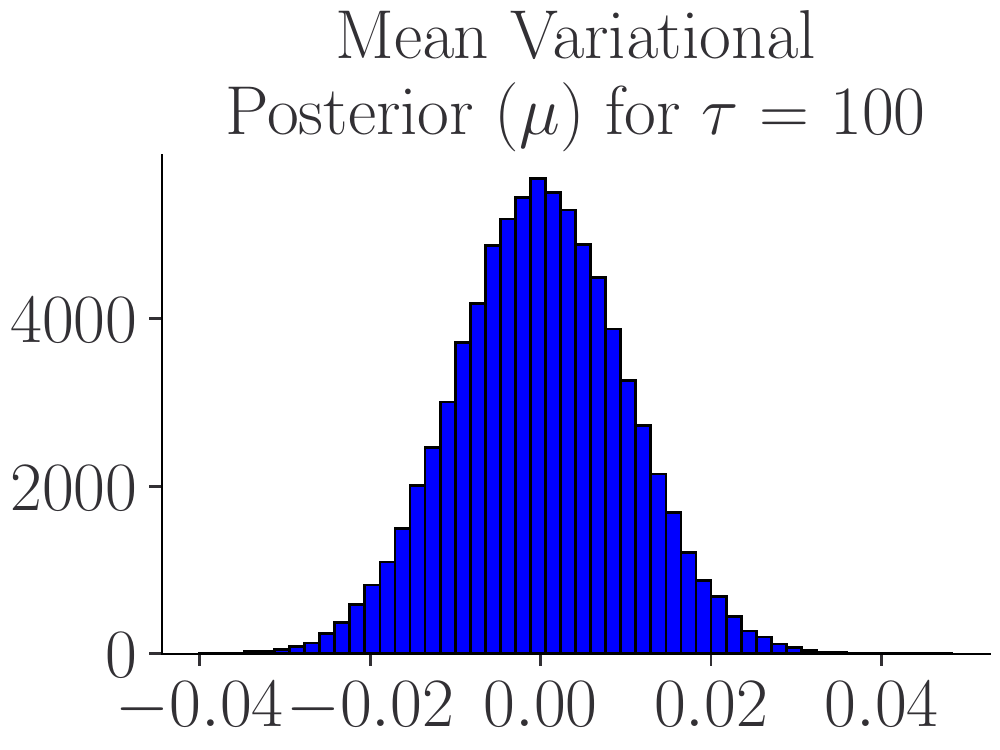}}
\subfigure{\includegraphics[width=0.22\columnwidth]{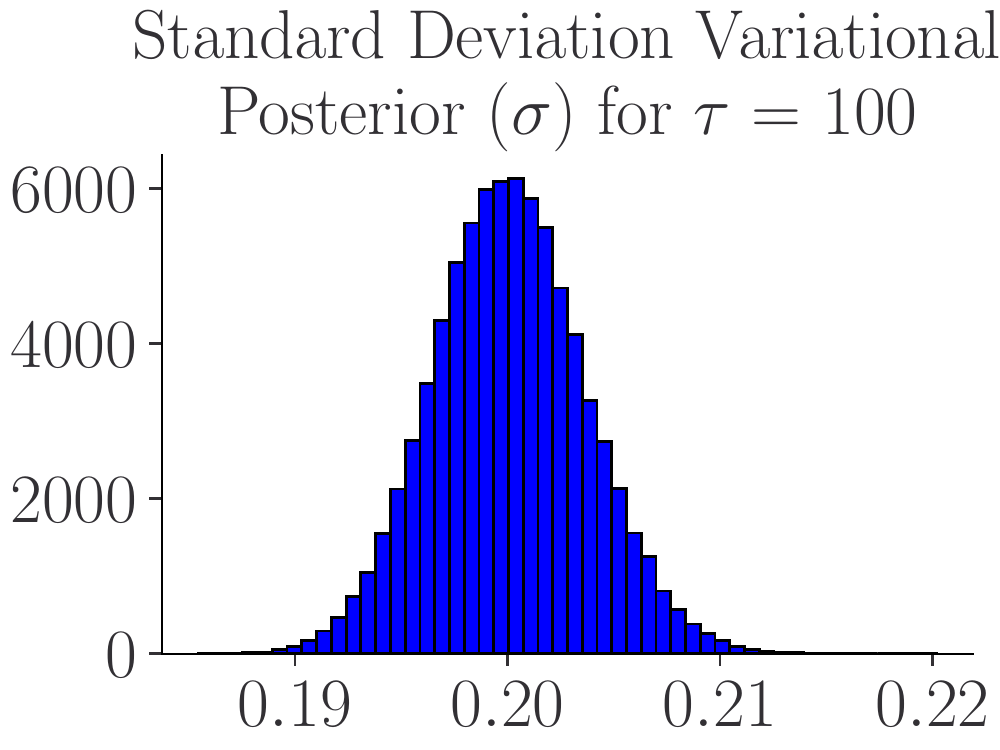}}
\subfigure{\includegraphics[width=0.22\columnwidth]{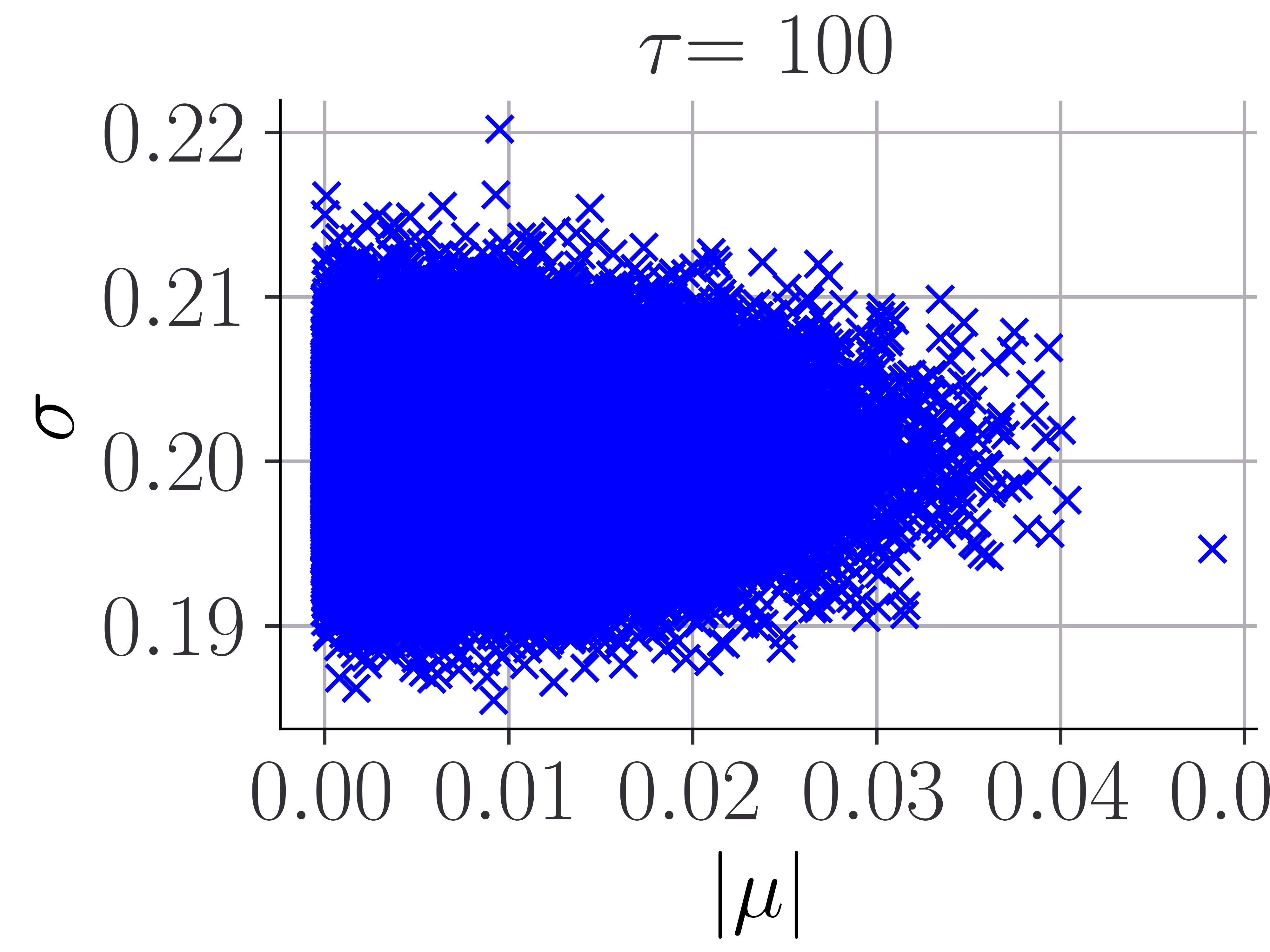}} \\
\subfigure{\includegraphics[width=0.22\columnwidth]{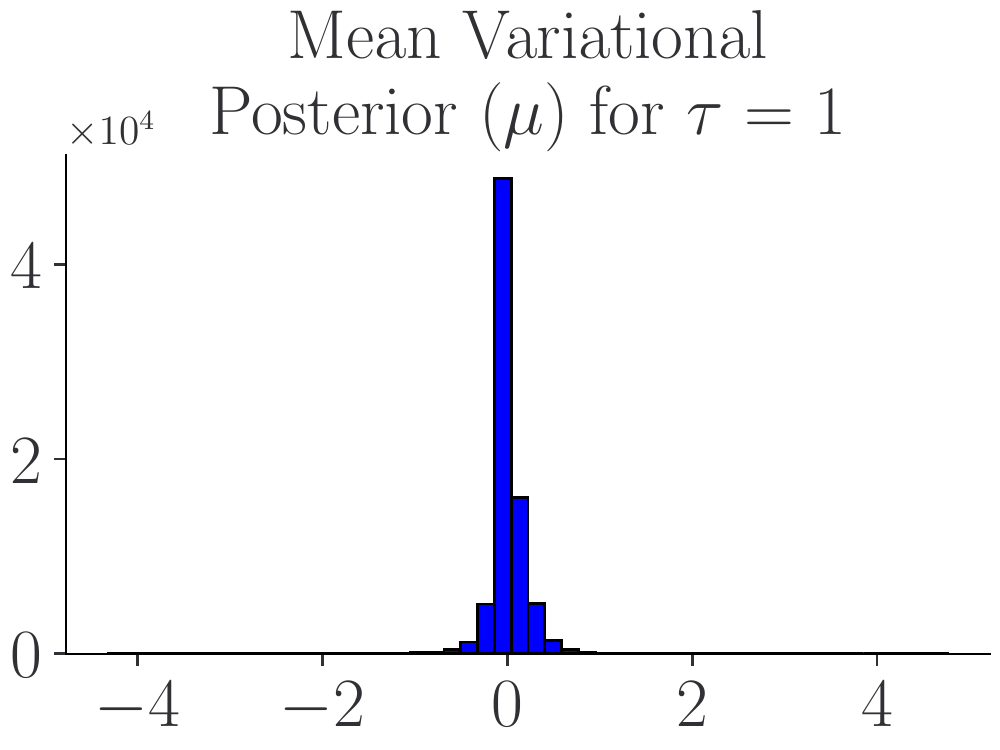}}
\subfigure{\includegraphics[width=0.22\columnwidth]{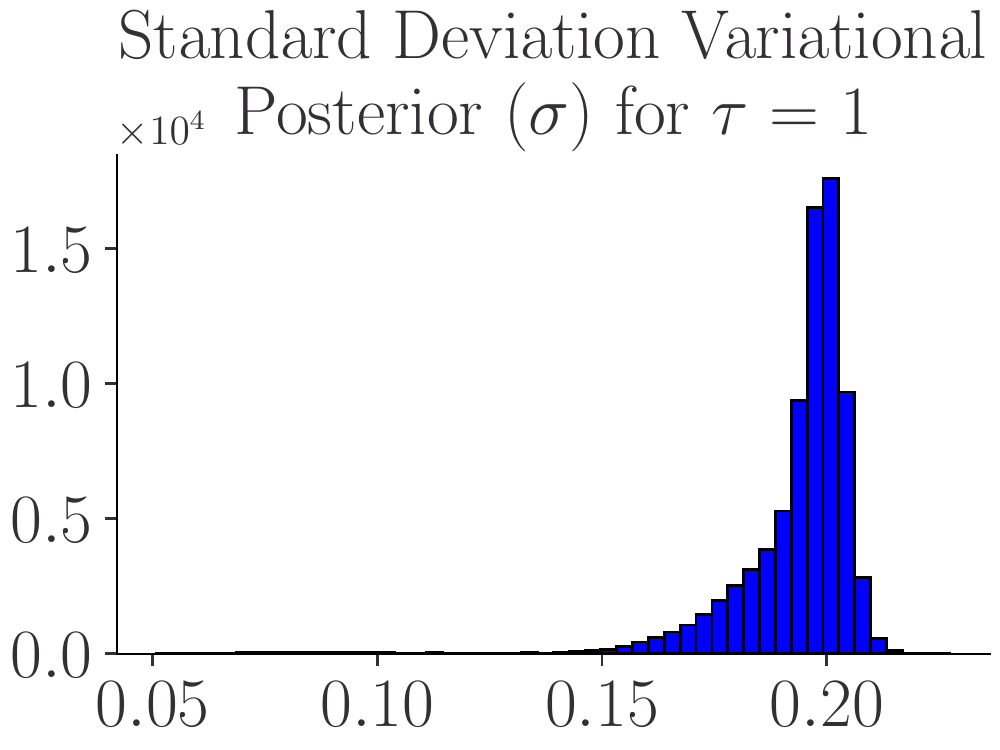}}
\subfigure{\includegraphics[width=0.22\columnwidth]{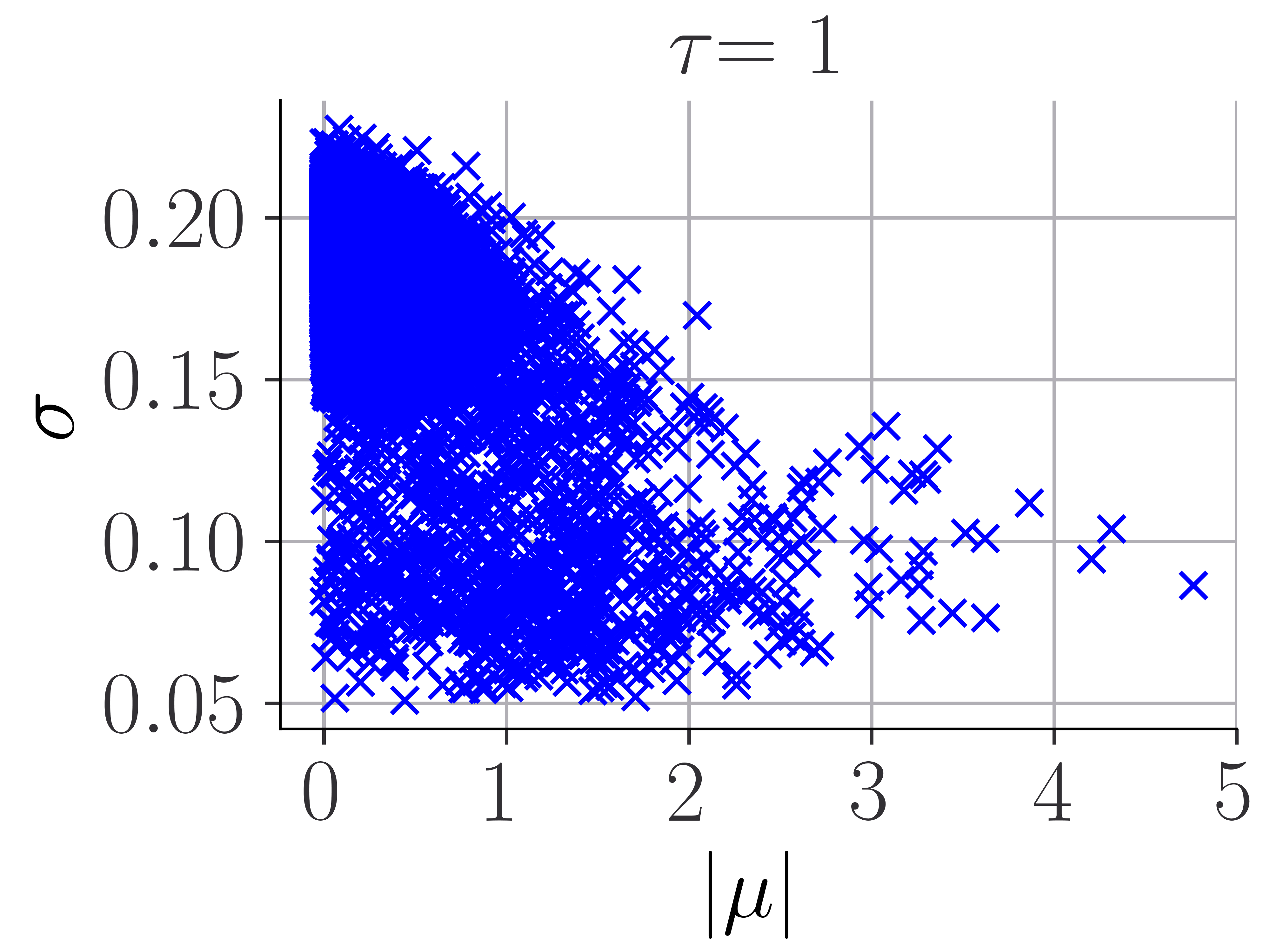}} \\
\subfigure{\includegraphics[width=0.22\columnwidth]{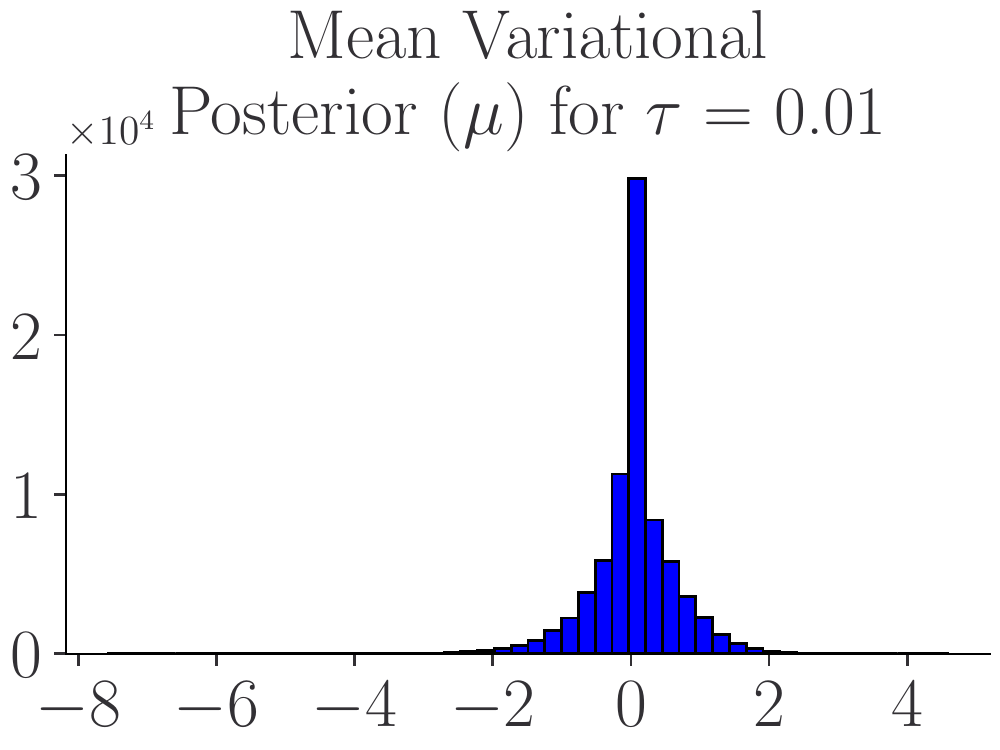}}
\subfigure{\includegraphics[width=0.22\columnwidth]{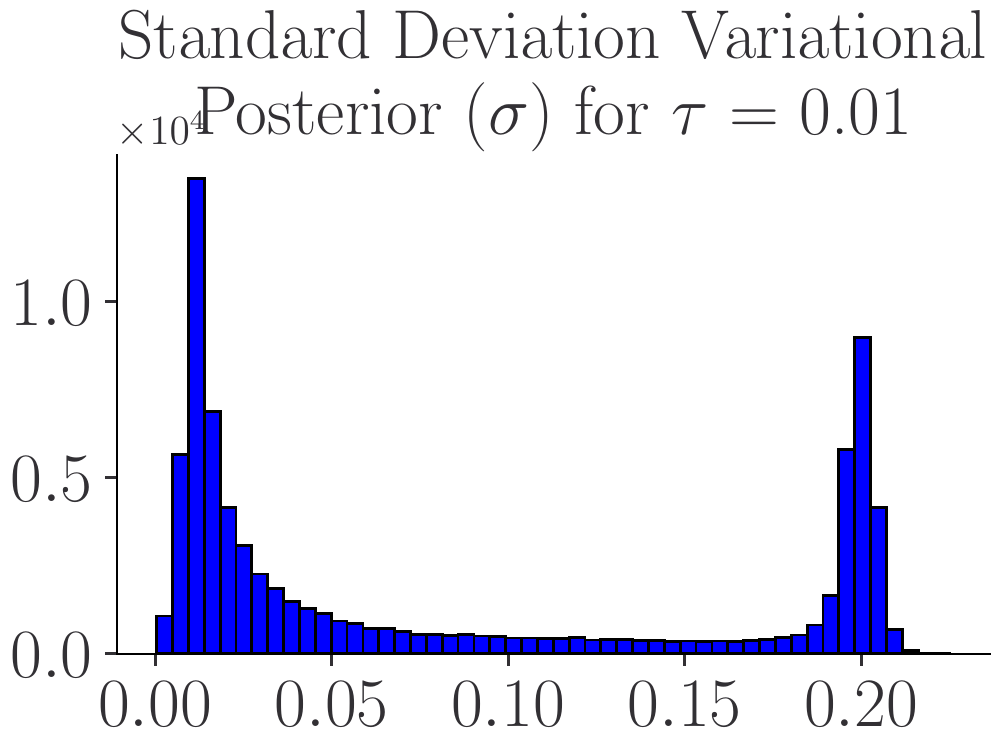}}
\subfigure{\includegraphics[width=0.22\columnwidth]{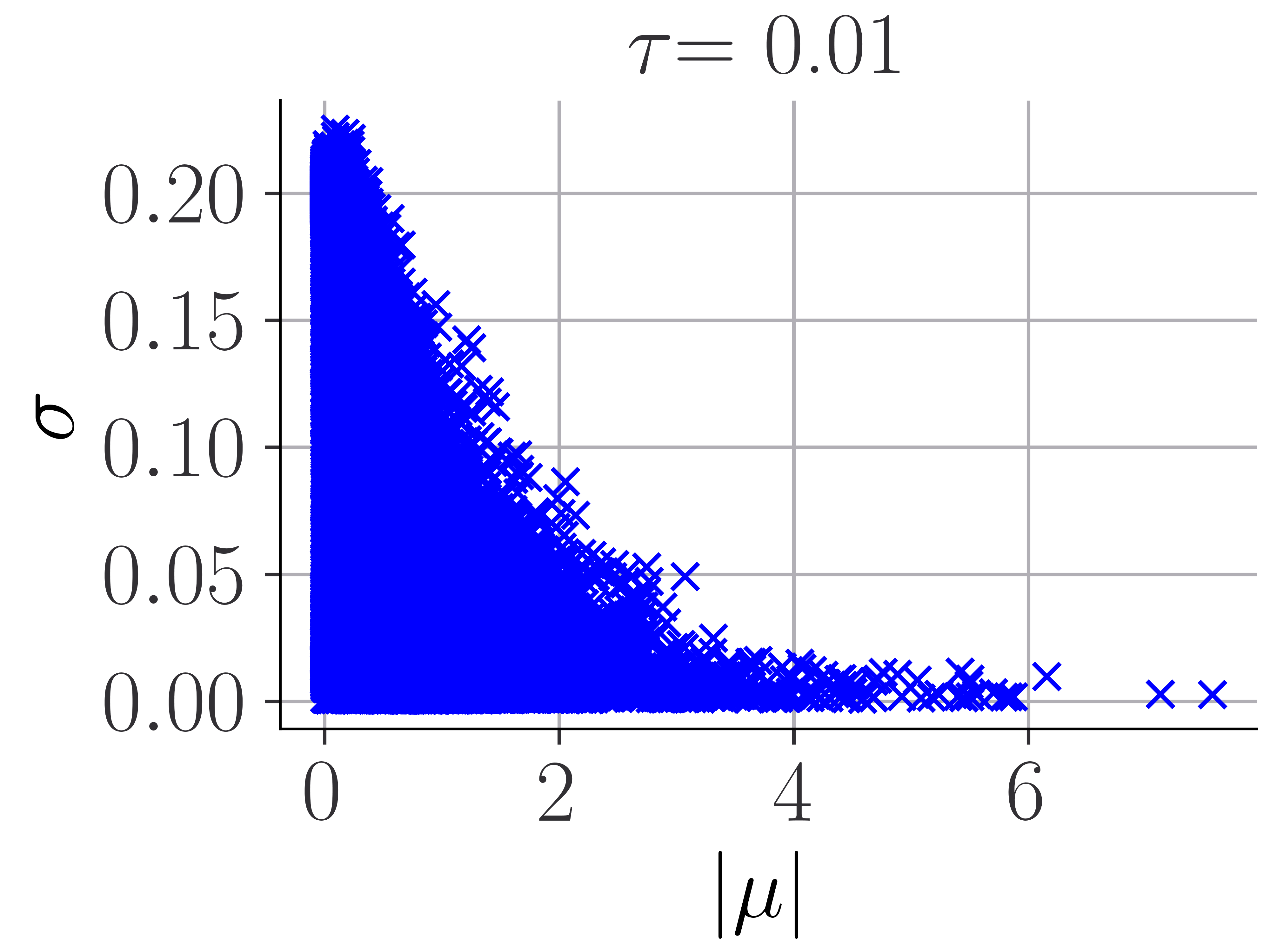}} \\
\caption{Histograms of the variational parameters $\theta=(\mu,\sigma)$ for a Linear BNN trained on MNIST.
From left to right: histogram of variational means, standard deviations, and standard deviation as a function of the norm of the mean.}
\label{fig:weights}
\end{figure}

\subsection{OOD detection}
We also compare the performance on out-of-distribution of a Resnet20 trained on CIFAR-10 with Bayes by Backprop. We compute the the histogram of predictive entropies for $5000$ in-distribution samples and out-of-distribution samples. Recall that the negative entropy is defined for a vector of class probabilities $[p(y=c|x,\cD)]_{c\in \{1,\dots,n_l\}}$ as $-\sum_{c=1}^{n_l}p(y=c|x,\cD) \log(p(y=c|x,\cD))$.
The first ones correspond to samples from the test set of CIFAR-10; while the out-of-distribution samples are chosen from another image dataset, namely SVHN \cite{netzer2011reading}. Our results are to be found in \Cref{fig:ood} and illustrate again the importance of the parameter $\tau$.
When $\tau$ is very small, the model is highly confident for in-distribution samples, and has diffuse predictive entropies for out-distribution samples. As $\tau$ increases, the model starts to be less confident, resulting in higher entropies on both in-distribution and out-distribution samples, especially for the out-distribution samples. Finally if $\tau$ is too large, as the model sticks to the prior distribution, it is not confident neither on the in-distribution nor out-distribution, resulting on a spiky distribution of predictive entropies at high values.

\begin{figure}[H]
\centering
\begin{tabularx}{\columnwidth}{ccc}
	\includegraphics[width=.3\columnwidth]{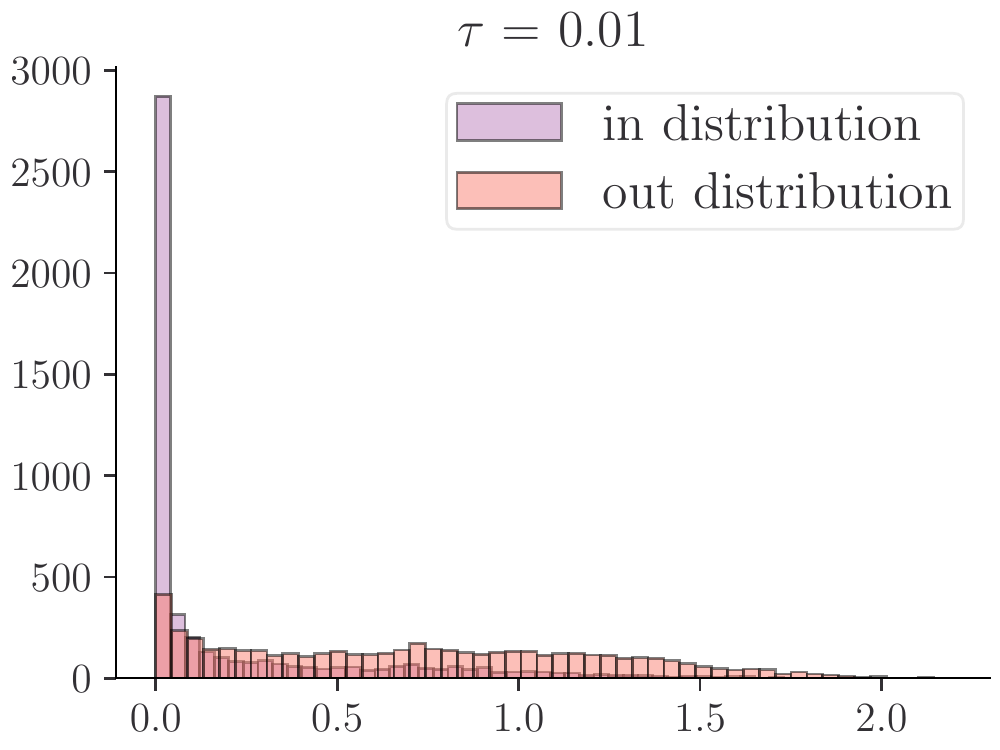}&
		\includegraphics[width=.3\columnwidth]{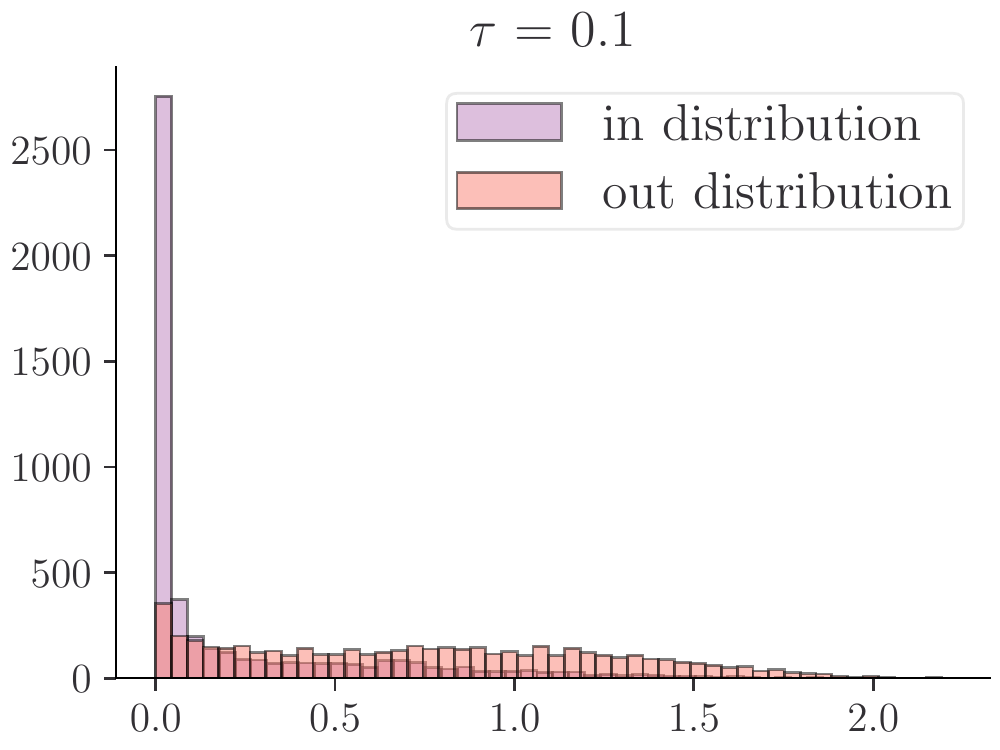}&
                \includegraphics[width=.3\columnwidth]{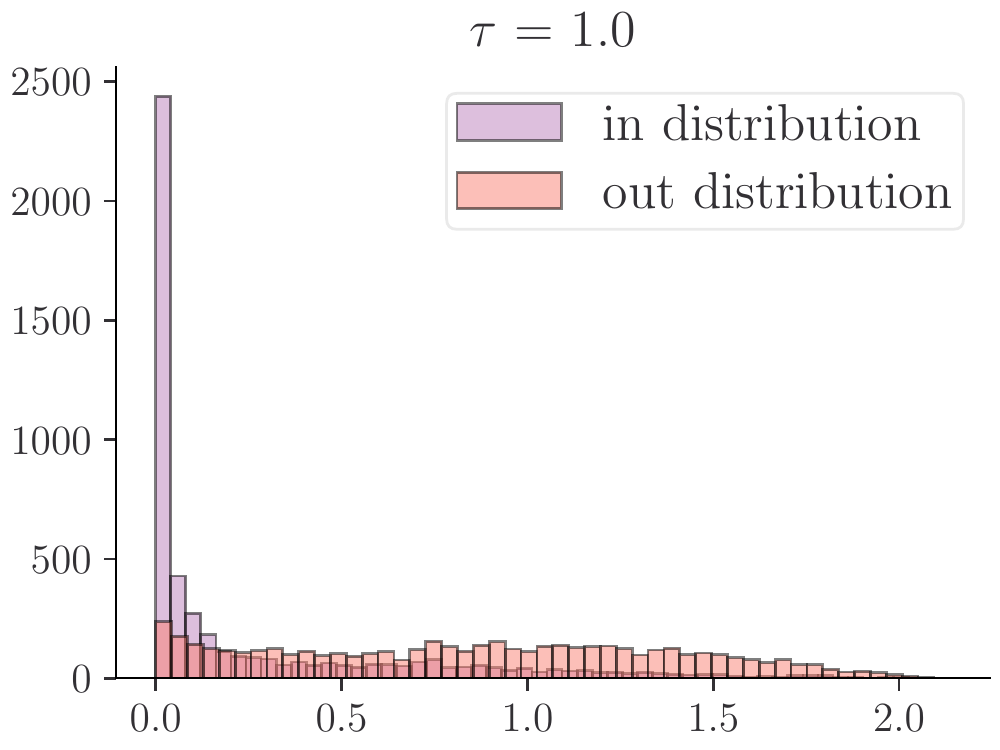}\\
  \includegraphics[width=0.3\columnwidth]{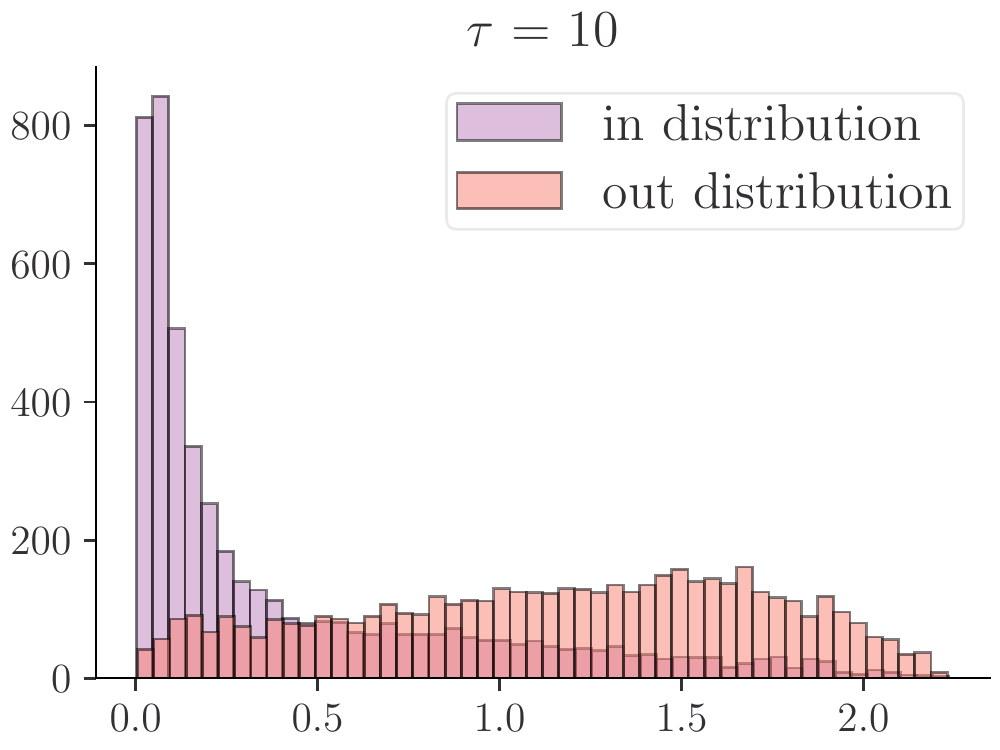} &\includegraphics[width=.3\columnwidth]{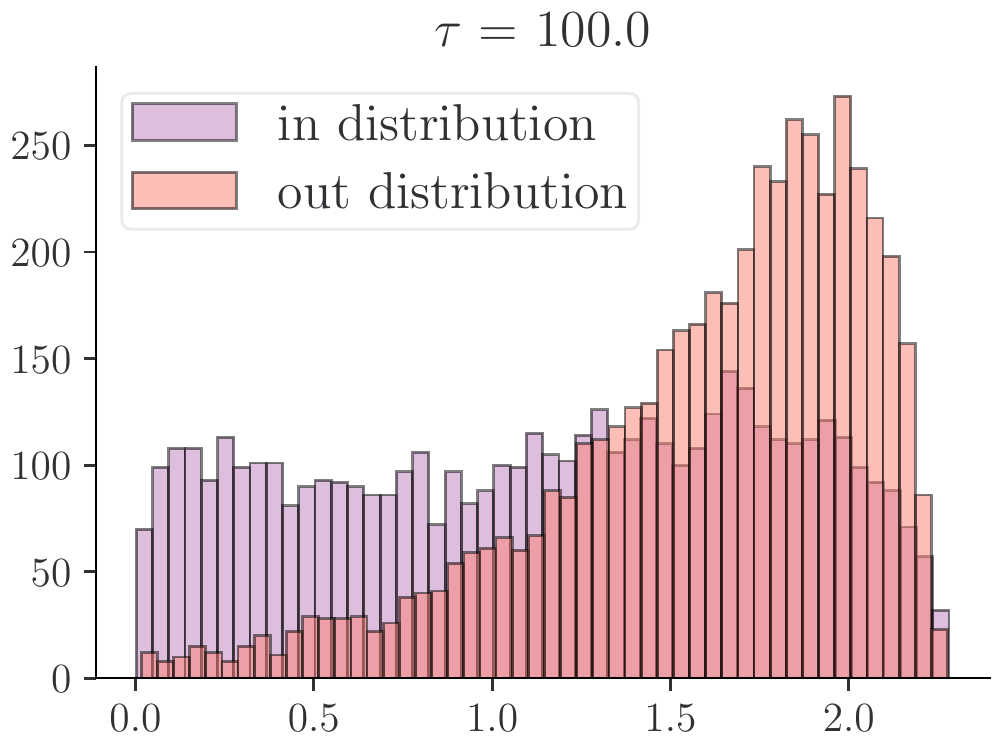} &\includegraphics[width=.3\columnwidth]{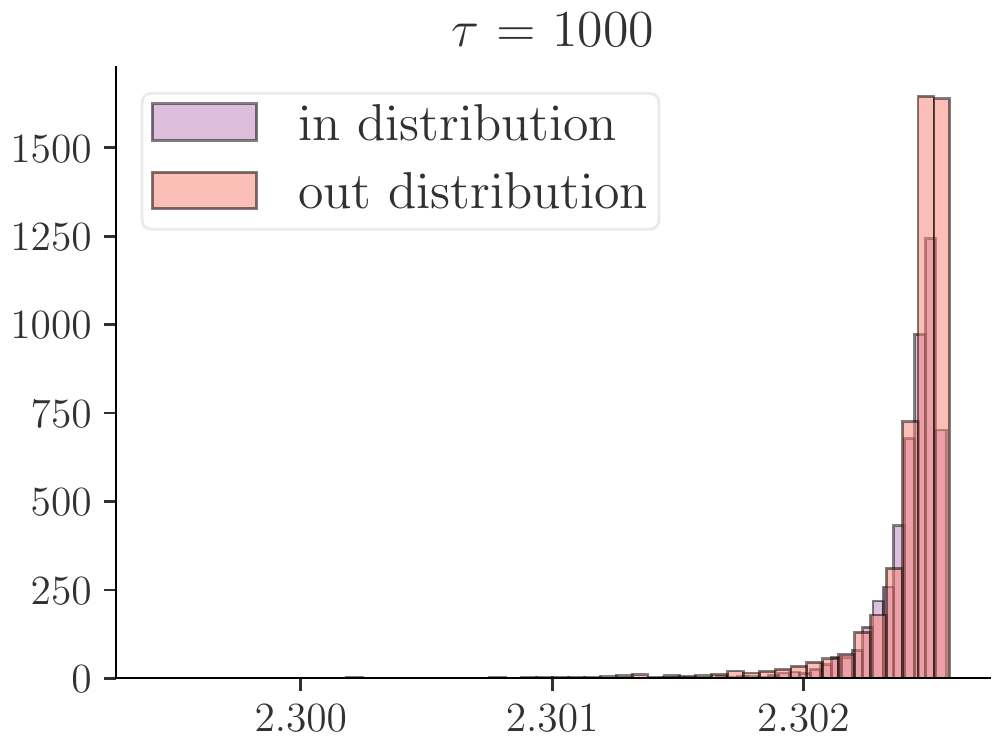}
\end{tabularx}
    \caption{
    Histogram of the predictive entropies for a Resnet20 trained on CIFAR-10, on $5000$ in-distribution (from the test set of CIFAR-10 dataset) and  out-of-distribution (from SVHN dataset) samples }
    \label{fig:ood}
\end{figure}

\section{Bayes by Backprop} \label{sec:bayes_by_backprop}
Several methods have been proposed to optimize $\ELBO$. A first and straightforward
approach is to apply stochastic gradient descent (SGD), using samples from $q_{\btheta_{\mathrm{c}}}$ where $\btheta_{\mathrm{c}}$ is the current point, to obtain stochastic estimates for $\nabla_{\theta} \ELBO$.
 However, the resulting estimation of the gradient suffers
from high variance. Alternative algorithms have been proposed to mitigate this effect, such as
Probabilistic Backpropagation \cite{hernandez2015probabilistic} or
Bayes by Backprop \cite{blundell2015weight}. Given a fixed distribution $\bgamma$ and a parameterized
function $g(\btheta,\cdot)$, the network parameter $\bw$ is obtained
as $\bw=g(\btheta,\bz)$, where $\bz$ is sampled from $\bgamma$, e.g.,
from a standard normal distribution. While a new $\bz$ is sampled at
each iteration, its distribution is constant, unlike that of the
network parameters $\bw$. As soon as $g(\btheta, \cdot)$ is invertible and $\boldsymbol{\gamma}, q(\cdot|\btheta)$ are non-degenerated probability distributions, we have $q(\bw|\btheta)d\bw= \bgamma(\bz)d\bz$ 
(see \citet[Appendix A]{jospin2020hands}), and for any differentiable function $f$:
\begin{equation}
    \frac{\partial}{\partial\btheta} \E_{\bw\sim q(.|\btheta)}[f(\bw,\btheta)] = \E_{\bz\sim \bgamma}\left[ \frac{\partial f(\bw,\btheta)}{\partial \btheta} +\frac{\partial \bw}{\partial \btheta} \frac{\partial f(\bw,\btheta)}{\partial \bw}  \right].
\end{equation}

\begin{algorithm}[h]
   \caption{Bayes by Backprop}
   \label{alg:bayes_by_backprop}
\begin{algorithmic}
\STATE \textbf{Input:} step-size $\delta>0$, number of iterations $m_{iter}$, number of samples $M_{samples}$.
\FOR{ each $m_{iter}$ iterations}
\FOR{ each  $m=1,\dots, M_{samples}$}
\STATE 1. Sample $\bz \sim \gamma^{\otimes N}$
\STATE 2. Let $\bw=\bmu+\log(1+\exp(\brho))\circ \bz$.
\ENDFOR
\STATE 3. Compute
\begin{equation}
    g(\bw,\btheta)\approx \frac{1}{M_{samples}}\sum_{m=1}^{M_{batch}} \log q(\bw_i|\btheta) - \log \prior(\bw_i)P(\cD|\bw_i)
\end{equation}
\STATE 5. Calculate the gradient with respect to the mean and  standard deviation parameter $\rho$
\begin{align}
    \Delta_{\mu}&=\frac{\partial g(w,\theta)}{\partial w} +\frac{\partial g(w,\theta)}{\partial \mu}\\
    \Delta_{\rho} &=\frac{\partial g(w,\theta)}{\partial w}\frac{\epsilon}{1+\exp(\rho)} +\frac{\partial g(w,\theta)}{\partial \rho}
\end{align}
\STATE 6. Update the variational parameters:
\begin{align}
    \mu \longleftarrow \mu - \delta \Delta_{\mu}\\
    \rho \longleftarrow \rho - \delta \Delta_{\rho}
\end{align}
\ENDFOR
\end{algorithmic}
\end{algorithm}
Bayes by Backprop uses the previous equality to estimate the gradient of $F$, because $F=\E_{\bw\sim q(\cdot|\btheta)}[f(\bw,\btheta)]$ with $f(\bw, \btheta) = \log q(\bw|\btheta) - \log \prior(\bw) -\log P(\D|\bw)$. More specifically, it performs a stochastic gradient descent for $F$ using a new sample $\bz$ at each time step to estimate the gradient of $F$ as the parameter $\btheta$ is updated. When the step size in this algorithm goes to zero, the Bayes by Backprop dynamics corresponds to a Wasserstein gradient flow of a particular functional defined on the space of probability distributions over $\btheta$, which we introduce in the next section.

As in \cite{blundell2015weight}, we will use a variance reparameterization; $\sigma=\log(1+\exp(\rho)) \in \R^{+}$ for $\rho \in \R$. Consequently, the variational parameter is given by $\btheta=(\theta_1,\dots, \theta_N)\in \R^{N\times 2d}$ with $\theta_j=(\mu_j, \rho_j)\in \R^{2d}$. We denote by $g:\R^{2d}\times \R^{d}\to \R, (\theta,z)\mapsto \mu + \log(1+\exp(\rho))\odot z$, where $\odot$ denotes the entry-wise multiplication and $\gamma$ denotes the standard normal distribution over $\R^d$.
The Bayes-by-backprop algorithm in this setting is summarized in \Cref{alg:bayes_by_backprop}.

This algorithm is well suited for minibatch optimisation, when the dataset $\cD$ is split into a partition of $L$ subsets (minibatches) $\cD_1,\dots,\cD_L$. In this case \cite{graves2011practical} proposes to minimise a rescaled $\NELBO$ for each minibatch $\cD_l$, $l=1,\dots,L$ as
\begin{equation}
  \NELBO_l = \frac{1}{L}\KL(\densityq_{\btheta}|\prior) - \E_{\bw\sim \densityq_{\btheta}}[\log P(\cD_l|\bw)].
\end{equation}



\end{document}